\documentclass[twoside]{article}

\usepackage[accepted]{aistats2021}

\usepackage[round]{natbib}

\renewcommand{\cite}{\citep}

\usepackage[utf8]{inputenc} %
\usepackage[T1]{fontenc}    %
\usepackage{hyperref}       %
\hypersetup{
    colorlinks=true,
    citecolor=blue,
    linkcolor=blue,
    filecolor=blue,      
    urlcolor=blue,
}
\usepackage{graphicx}
\usepackage{url}            %
\usepackage{booktabs}       %
\usepackage{amsfonts}       %
\usepackage{microtype}      %

\usepackage{calib_paper}

\begin{document}

\twocolumn[

\aistatstitle{
Diagnostic Uncertainty Calibration: Towards Reliable Machine Predictions in Medical Domain
}

\aistatsauthor{ 
  Takahiro Mimori \And 
  Keiko Sasada \And  
  Hirotaka Matsui \And
  Issei Sato
}

\aistatsaddress{ 
  RIKEN AIP
  \And  
  Kumamoto University Hospital%
  \And 
  Kumamoto University
  \And
  The University of Tokyo,
  \\
  RIKEN AIP, ThinkCyte%
} 

]

\begin{abstract}
We propose an evaluation framework for class probability estimates (CPEs)
in the presence of label uncertainty,
which is commonly observed as diagnosis disagreement between experts in the medical domain.
We also formalize evaluation metrics for higher-order statistics, including inter-rater disagreement, to assess 
predictions on label uncertainty.
Moreover, we propose a novel post-hoc method called $\alpha$-calibration, that equips neural network classifiers with calibrated distributions over CPEs.  %
Using synthetic experiments and a large-scale medical imaging application, we show that our approach significantly enhances the reliability of uncertainty estimates: disagreement probabilities and posterior CPEs.
\end{abstract}

\section{Introduction}

The reliability of uncertainty quantification
is %
essential %
for safety-critical systems such as medical diagnosis assistance.
Despite the high accuracy %
of modern neural networks
for a wide range of classification tasks,
their predictive probability often tends to be uncalibrated \cite{guo2017calibration}.
Measuring and improving probability calibration, i.e., %
the consistency of predictive probability for an actual class frequency, has become one of the central issues in machine learning research \cite{vaicenavicius2019evaluating,widmann2019calibration,kumar2019verified}.
At the same time, the uncertainty of ground truth labels in real-world data %
may also affect the reliability of the systems.
Particularly, in the medical domain,
inter-rater variability is commonly observed despite the annotators' expertise
\cite{sasada2018inter,jensen2019improving}.
This variability is also worth predicting for downstream tasks
such as finding examples that need medical second opinions \cite{raghu2018direct}.

To enhance the reliability of class probability estimates (CPEs),
{\it post-hoc} calibration, which transforms output scores %
to fit into empirical class probabilities,
has been proposed %
for both general classifiers \cite{platt1999probabilistic,zadrozny2001obtaining,zadrozny2002transforming} and neural networks \cite{guo2017calibration,kull2019beyond}.
However, current evaluation metrics for calibration rely on empirical accuracy calculated with ground truth, for which the uncertainty of labels has not been considered.
Another problem is that label uncertainty is not fully accounted for by CPEs;
{\it e.g.}, a $50\%$ confidence for class $x$ does not necessarily mean the same amount of human belief, %
even when the CPEs are calibrated.
\citet{raghu2018direct}    %
 indicated that label uncertainty measures, such as an inter-rater disagreement frequency, were biased when they were estimated with CPEs. %
They instead proposed directly discriminating high uncertainty instances with input features.
This treatment, however, requires training an additional predictor for each uncertainty measure %
and lacks an integrated view with the classification task.

In this work, we first develop an evaluation framework for CPEs  %
when label uncertainty is indirectly observed through
multiple annotations per instance (called {\it label histograms}).
Guided with proper scoring rules \cite{gneiting2007strictly} and their decompositions \cite{degroot1983comparison,kull2015novel},
evaluation metrics, including calibration error, are naturally extensible to the situation with label histograms, where we derive estimators that benefit from unbiased or debiased property.
Next, we generalize the framework to evaluate probabilistic predictions on higher-order statistics, including  inter-rater disagreement.
This extension enables us to evaluate these statistics in a unified way with CPEs.
Finally, we address how the reliability of CPEs and disagreement probability estimates (DPEs) can be improved using label histograms.
While the existing post-hoc calibration methods solely address CPEs,
we discuss the importance of obtaining a good predictive distribution over CPEs beyond point estimation to improve DPEs.
Also, the distribution is expected to be useful for obtaining posterior CPEs when expert labels are provided for prediction.
With these insights,
we propose a novel method named $\alpha$-calibration that uses label histograms to equip a neural network classifier with the ability to predict distributions of CPEs.
In our experiments, 
the utility of our evaluation framework and $\alpha$-calibration 
is demonstrated with synthetic data and a large-scale medical image dataset %
with multiple annotations provided from a study of myelodysplastic syndrome (MDS) \cite{sasada2018inter}.
Notably, 
$\alpha$-calibration significantly enhances the quality of DPEs and the posterior CPEs.

In summary, our contributions are threefold as follows:
\begin{itemize}
\item Under ground truth label uncertainty, we develop evaluation metrics that benefit from unbiased or debiased property for class probability estimates (CPEs) using multiple labels per instance, {\it i.e.}, label histograms (Section \ref{sec:cpe_lh}).
\item We generalize our framework to evaluate probability predictions on higher-order statistics, including inter-rater disagreement (Section \ref{sec:higher-order}).
\item We advocate the importance of predicting the distributional uncertainty of CPEs, addressing with a newly devised post-hoc method, $\alpha$-calibration (Section 
\ref{sec:unc_calib_label_hist}).
Our approach substantially improves disagreement probability estimates (DPEs) and posterior CPEs for synthetic and real data experiments (Fig. \ref{fig:mds_disagree1} and Section \ref{sec:experiments}).
\end{itemize}

\begin{figure}[t]
  \centering
  \begin{tabular}{c}
    \whencolumns{
      \includegraphics[width=0.42\columnwidth]{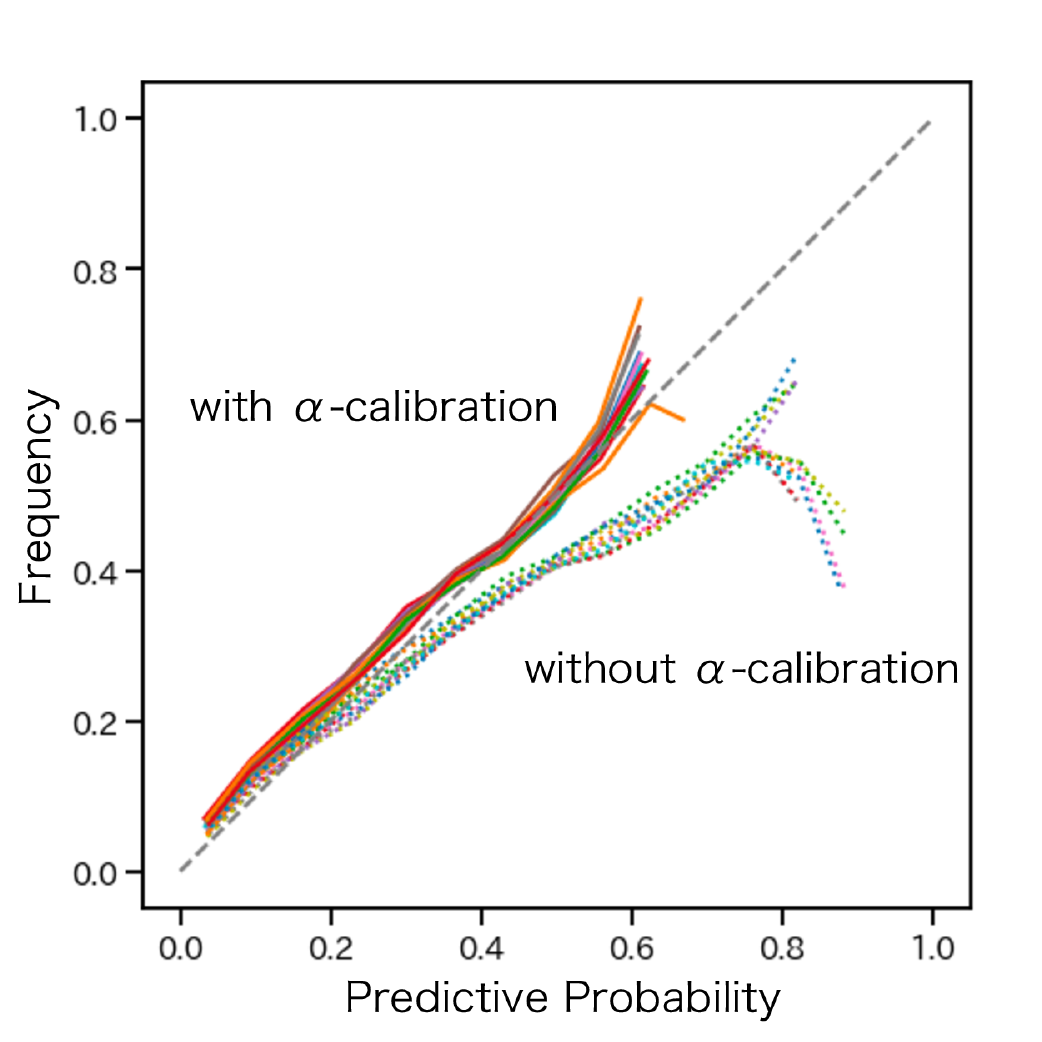}
    }{
      \includegraphics[width=0.78\columnwidth]{figs/mds_disagree_fig1.pdf}
    }
  \end{tabular}
  \caption{
Reliability diagram of disagreement probability estimates (DPEs) in experiments with MDS data: a medical image dataset with multiple labels per instance.
The dashed diagonal line corresponds to perfectly calibrated predictions.
Calibrations of DPEs were significantly enhanced with $\alpha$-calibration (solid lines) from the original ones (dotted lines).}
\label{fig:mds_disagree1}
\end{figure}

\section{Background}
\label{sec:background}
We overview calibration measures, proper scoring rules, and post-hoc calibration of CPEs as a prerequisite for our work.

\paragraph{Notation}
Let $K \in \mathbb{N}$ be a number of categories,
$e^K = \{ e_1, \dots, e_K \}$ be a set of $K$ dimensional one-hot vectors ({\it i.e.}, $e_{kl} := \mathbb{I}[k = l]$),
and $\Delta^{K-1} := \{ \zeta \in \mathbb{R}^K_{\geq 0} : \sum_k \zeta_k = 1 \}$
be a $K-1$-dimensional probability simplex.
Let $(X, Y)$ be jointly distributed random variables over $\mathcal{X}$ and $e^K$,
where $X$ denotes an input feature, such as image data, and $Y$ denotes a $K$-way label.
Let $Z = (Z_1, \dots, Z_K)^\top := f(X) \in \Delta^{K-1}$
denote a random variable that represents class probability estimates (CPEs) for input $X$ with a classifier $f: \mathcal{X} \to \Delta^{K-1}$.

\subsection{Calibration measures}

The notion of calibration, 
which is the agreement between a predictive class probability and an empirical class frequency, 
is important for reliable predictions. %
We reference
\citet{brocker2009reliability,kull2015novel}
for the definition of calibration.
\begin{define}[Calibration]
\label{def:calibration}
\footnote{
  A stronger notion of calibration that requires $Z = C$ is examined in the literature 
  \cite{vaicenavicius2019evaluating,widmann2019calibration}
.}
A probabilistic classifier 
$f: \mathcal{X} \to \Delta^{K-1}$  is said to be {\it calibrated} if 
$Z = f(X)$ matches a true class probability given $Z$,
{\it i.e.},
$\forall k, Z_k = C_k$, where
$C_k := P(Y=e_k | Z)$ and
$C := (C_1, \dots, C_K)^\top \in \Delta^{K-1}$,
which we call a calibration map.
\end{define}
The following metric is commonly used
to measure calibration errors of binary classifiers:
\begin{define}[Calibration error]
\begin{align}
\CE_1 := \pars{\E[|Z_1 - C_1|^p]}^{1/p},
\quad \text{where} \quad
p \geq 1.
\label{eq:CE}
\end{align}
\end{define}
Note that $\CE_1$ takes the minimum value zero iff $Z = C$. 
The cases with $p=1$ and $2$ are called the expectation calibration error (ECE) \cite{naeini2015obtaining} and the squared calibration error \cite{kumar2019verified}, respectively.
Hereafter, we use $p=2$ and let $\CE$ denote $\CE_1$ for binary cases.
For multiclass cases, we denote $\CE$ as a commonly used definition of class-wise calibration error \cite{kumar2019verified}, i.e.,
$(\sum_k \CE_k^2)^{1/2}$.

\subsection{Proper scoring rules} %

Although calibration is a desirable property,
being calibrated is not sufficient for useful predictions.
For instance, 
a predictor that always presents the marginal class frequency $Z = \pars{P(Y=e_1), \dots, P(Y=e_K)}^\top$ is perfectly calibrated, but it entirely lacks the sharpness of prediction for labels stratified with $Z$.
In contrast, the strictly proper scoring rules 
\cite{gneiting2007strictly,parmigiani2009decision}
elicit a predictor's true belief for each instance
and do not suffer from this problem.

\begin{define}[Proper scoring rules for classification]
A loss function $\ell: e^{K} \times \Delta^{K-1} \to \mathbb{R}$
is said to be {\it proper} if
$\forall q \in \Delta^{K-1}$
and 
for all $z \in \Delta^{K-1}$
such that $z \neq q$,
\begin{align}
\E_{Y \sim \mathrm{Cat}(q)}[\ell(Y, z)] \geq \E_{Y \sim \mathrm{Cat}(q)}[\ell(Y, q)]
\end{align}
holds,
where $\mathrm{Cat}(\cdot)$ denotes a categorical distribution.
If the strict inequality holds, $\ell$ is said to be {\it strictly} proper.
Following the convention, we write $\ell(q, z) = \E_{Y \sim \mathrm{Cat}(q)}[\ell(Y, z)]$ for $q \in \Delta^{K-1}$.
\end{define}
For a strictly proper loss $\ell$, the divergence function
$d(q, z) := \ell(q, z) - \ell(q, q)$
takes a non-negative value and is zero iff $z = q$, by definition.
Squared loss $\ell_{\mathrm{sq}}(y, z) := \Verts{y - z}^2$ and
logarithmic loss $\ell_{\log}(y, z) := - \sum_k y_k \log z_k$
are the most well-known examples of strictly proper loss.
For these cases, the divergence functions are given as $d_{\mathrm{sq}}(q, z) = \ell_{\mathrm{sq}}(q, z)$
and $d_{\log}(q, z) = D_{\mathrm{KL}}(q, z)$, a.k.a. KL divergence, respectively.

Let
$L := \E[d(Y, Z)] = \E[d(Y, f(X))]$
denote the expected loss, where the expectation is taken over a distribution $P(X, Y)$.
As special cases of $L$,
\begin{align}
\LBS
&:= \E\left[\ell_{\mathrm{sq}'}(Y, Z)\right]
= \E[(Y_1 - Z_1)^2]
\quad %
(K = 2),
\label{eq:BS}
\\
\LPS
&:= \E[\ell_{\mathrm{sq}}(Y, Z)]
= \E[\Verts{Y - Z}^2]
\quad %
(K \geq 2),
\label{eq:PS}
\end{align}
are commonly used for binary and multiclass prediction, respectively,
where $\ell_{\mathrm{sq}'} := \frac{1}{2} \ell_{\mathrm{sq}}$.
When the expectations are taken over an empirical distribution $\widehat{P}(X, Y)$,
these are referred to as 
Brier score (BS) 
\footnote{
While
\citet{brier1950verification} originally introduced
a multiclass loss that equals $\mathrm{PS}$, we call $\mathrm{BS}$ as Brier score,
following convention
\cite{brocker2012estimating,ferro2012bias}.
}
and probability score (PS),
respectively \cite{brier1950verification,murphy1973new}.

\paragraph{Decomposition of proper losses}
\label{sec:proper_loss_decomp}

The relation between the expected proper loss $L$ and the calibration measures is clarified with a decomposition of $L$ as follows \cite{degroot1983comparison}:
  \begin{align}
  L &= \CL + \RL,
  \quad \text{where}
\whencolumns{ \quad }{ \nonumber \\ & }
  \begin{cases}
  \CL := \E[d(C, Z)],
  & (\text{Calibration Loss})
  \\
  \RL := \E[d(Y, C)].
  & (\text{Refinement Loss})
  \end{cases}
  \label{eq:decomp1}
  \end{align}
The CL term 
corresponds to an error of calibration because the term 
will be zero iff $Z$ equals the calibration map $C = \E[Y|Z]$.
Relations
$\CL_{\hsq} = \CE^2$ and
$\CL_{\sq} = \CE^2$ can be confirmed for binary and multiclass cases, respectively.
Complementarily, the RL term %
shows a dispersion of labels $Y$ given $Z$ from its mean $\E[Y|Z]$ averaged over $Z$.

Under the assumption that labels follow an instance-wise categorical distribution as $Y|X \sim \mathrm{Cat}(Q)$, where $Q(X) \in \Delta^{K-1}$,
\citet{kull2015novel} further decompose $L$ into the following terms:
\begin{align}
L &= \underset{\EL}{\underbrace{\CL + \GL}} + \IL,
\quad \text{where}
\whencolumns{ \quad }{ \nonumber \\ & }
\begin{cases}
\EL = \E[d(Q, Z)], & (\text{Epistemic Loss})
\\
\IL = \E[d(Y, Q)], & (\text{Irreducible Loss})
\\
\GL = \E[d(Q, C)].
& (\text{Grouping Loss})
\end{cases}
\label{eq:decomp2}
\end{align}
The $\mathrm{EL}$ term, which equals zero iff $Z = Q$, is a more direct measure for the optimality of the model than $L$.
The IL term stemming from the randomness of observations is called {\it aleatoric uncertainty} in the literature \cite{der2009aleatory,senge2014reliable}.
We refer to Appendix \ref{aps:psr_decomp} for details and proofs of the statements in this section.

\subsection{Post-hoc calibration for deep neural network classifiers}
\label{sec:bg_calibration}

For deep neural network (DNN) classifiers with the softmax activation,
a post-hoc calibration of class probability estimates (CPEs) is commonly performed by optimizing a linear transformation of the last layer's logit vector 
\cite{guo2017calibration,kull2019beyond},
which minimizes the negative log-likelihood (NLL) of  validation data: 
\begin{align}
\mathrm{NLL} = - \E_{X, Y \sim \widehat{P}
}[ \log P_{\mathrm{obs}}(Y | \widetilde{f}(X)) ],
\label{eq:calib_nll}
\end{align}
where $\widehat{P}$,
$\widetilde{f}: \mathcal{X} \to \Delta^{K-1}$
and $P_{\mathrm{obs}}(Y | Z) = \prod_k Z_k^{Y_k}$
denote
an empirical data distribution,
a transformed DNN function from $f$, 
and a likelihood model, respectively.
More details are described in Appendix \ref{aps:cpe_calib}.
In particular, 
temperature scaling, which has a single parameter and keeps the maximum confidence class unchanged, was the most successful in confidence calibration.
More recent research
\cite{wenger2020non,zhang2020mix,rahimi2020intra}
has proposed nonlinear calibration maps
with favorable properties, such as expressiveness, data-efficiency, and accuracy-preservation.

\section{Evaluation of class probability estimates with label histograms}
\label{sec:cpe_lh}

Now, we formalize evaluation metrics for class probability estimates (CPEs) using label histograms,
where multiple labels per instance are observed.
We assume that $N$ input samples are obtained in an i.i.d. manner: $\{x_i\}_{i=1}^N \sim P(X)$,
and for each instance $i$,
label histogram $y_i \in \mathbb{Z}^K_{\geq 0}$
is obtained from $n_i$ annotators in a conditionally i.i.d. manner, {\it i.e.},
$\{y_i^{(j)} \in e^K \}_{j=1}^{n_i} | x_i \sim P(Y | X=x_i)$
and $y_i = \sum_{j=1}^{n_i} y_i^{(j)}$.
A predictive class probability for the $i$-th instance is denoted by $z_i = f(x_i) \in \Delta^{K-1}$.
In this section, we assume $\ell_{\mathrm{sq}}$ as a proper loss $\ell$
and omit the subscript from terms: $\EL$ and $\CL$ for brevity.
The proofs in this section are found in Appendix \ref{aps:cpe_details}.

\begin{figure*}[t]
  \centering
  \begin{tabular}{cc}
  \whencolumns{
      \includegraphics[width=0.45\linewidth]{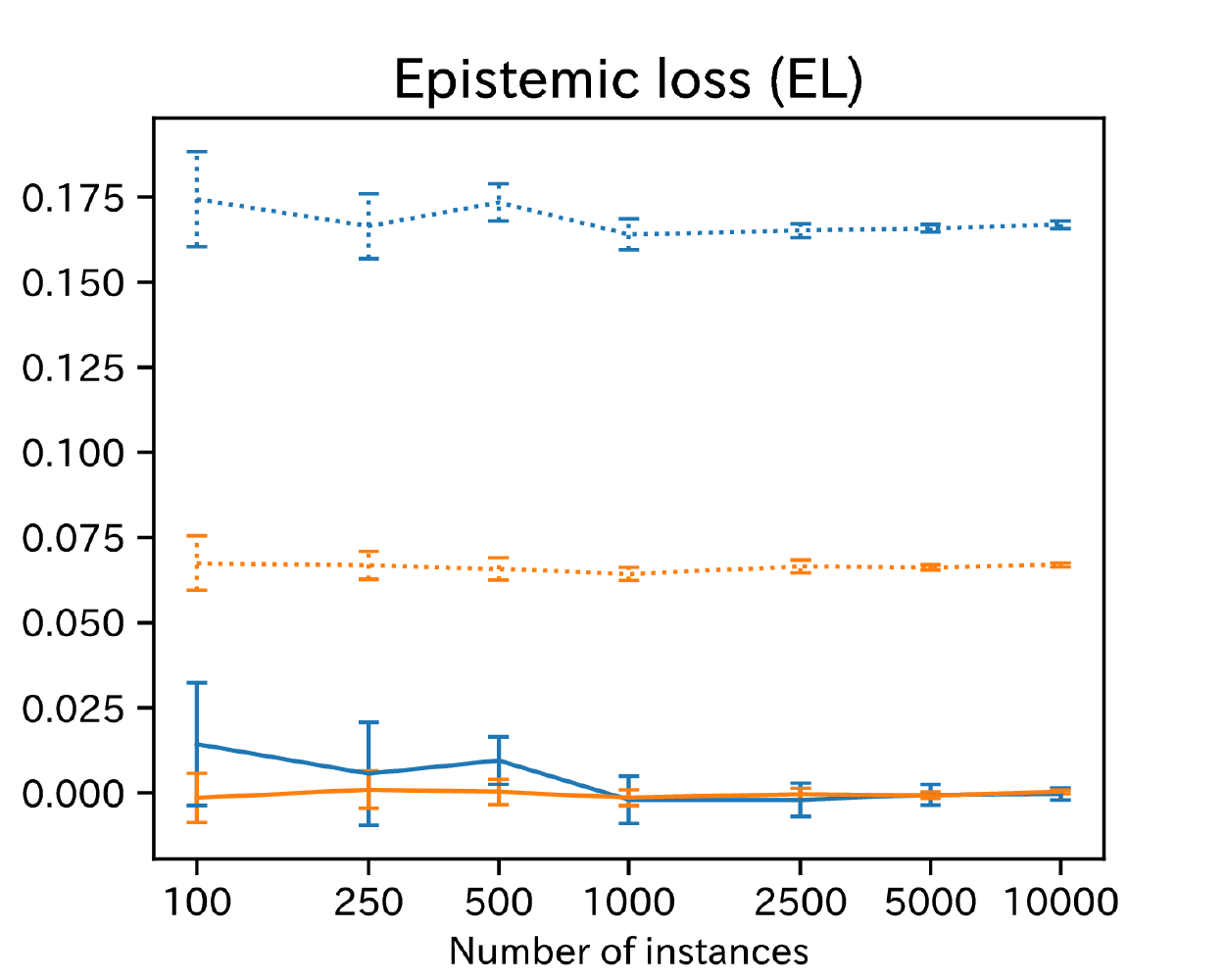}
    & \includegraphics[width=0.45\linewidth]{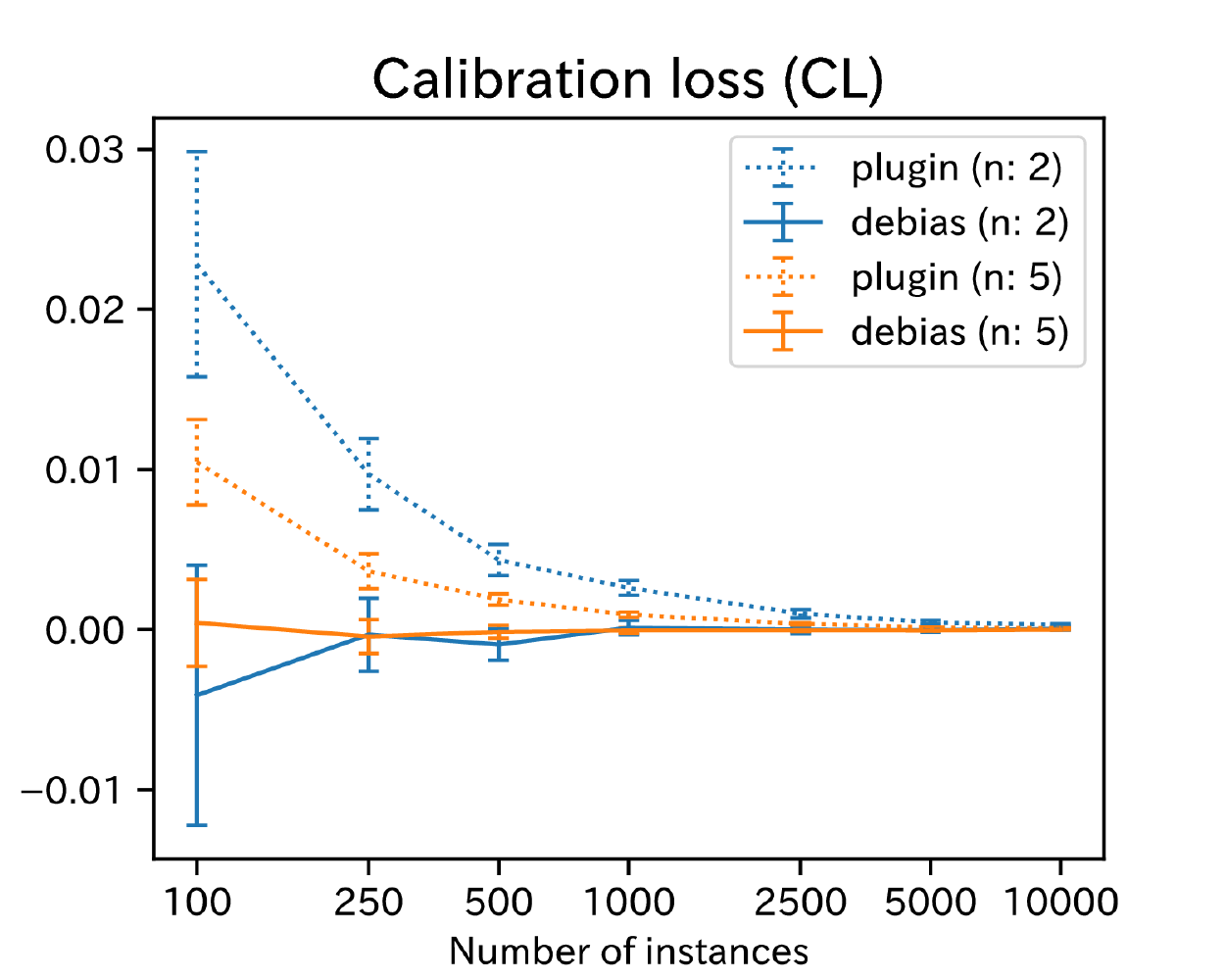}
  }{
      \includegraphics[width=0.4\linewidth]{figs/debias_effect1_EL.pdf}
    & \includegraphics[width=0.4\linewidth]
    {figs/debias_effect1_CL.pdf}
  }
  \end{tabular}
  \caption{Comparison of the plugin and debiased estimators for synthetic data. 
For both $\EL$ and $\CL$, the debiased estimators (solid lines) are closer to ground truth
(which is zero in this experiment) than the plugin estimators (dotted lines).
The error bars show $90\%$ confidence intervals for the means of ten runs.}
\label{fig:el_cl_debias}
\end{figure*}

\subsection{Expected squared and epistemic loss}

We first derive an unbiased estimator of 
the expected squared loss $\LPS$
from label histograms.
\begin{prop}[Unbiased estimator of expected squared loss]
\label{prop:ps_unbias}
The following estimator of $\LPS$ is unbiased.
\begin{align}
\hLPS
&:= \sum_{i=1}^N \frac{w_i}{W}  
\sum_{k=1}^K \left[ (\hmu_{ik} - z_{ik})^2
+ \hmu_{ik} (1 - \hmu_{ik}) \right],
\end{align}
where $\hmu_{ik} := y_{ik}/n_i, \, w_i \geq 0$, and $W := \sum_{i=1}^N w_i$.
\end{prop}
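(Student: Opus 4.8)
The plan is to prove $\E[\hLPS]=\LPS$ by conditioning on each input $x_i$, reducing the per-instance claim to an elementary second-moment identity for binomial counts, and then averaging over $X\sim P(X)$. Only linearity of expectation is used, so independence across instances is not needed; the sole probabilistic input is the conditional i.i.d.\ structure of the annotations given $x_i$.

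First I would fix $i$, condition on $x_i$, and write $q_{ik}:=P(Y=e_k\mid X=x_i)$. Since $\{y_i^{(j)}\}_{j=1}^{n_i}$ are conditionally i.i.d.\ given $x_i$, each $y_{ik}^{(j)}$ is Bernoulli$(q_{ik})$, hence $y_{ik}$ is Binomial$(n_i,q_{ik})$ and $\hmu_{ik}=y_{ik}/n_i$ obeys $\E[\hmu_{ik}\mid x_i]=q_{ik}$ and $\E[\hmu_{ik}^2\mid x_i]=q_{ik}^2+q_{ik}(1-q_{ik})/n_i$. Expanding the two pieces of the summand then gives
\begin{align*}
\E[(\hmu_{ik}-z_{ik})^2\mid x_i] &= (q_{ik}-z_{ik})^2 + \frac{q_{ik}(1-q_{ik})}{n_i},\\
\E[\hmu_{ik}(1-\hmu_{ik})\mid x_i] &= \frac{(n_i-1)\,q_{ik}(1-q_{ik})}{n_i},
\end{align*}
so that the prefactors $1/n_i$ and $(n_i-1)/n_i$ add to $1$ and
\begin{align*}
\E\!\left[(\hmu_{ik}-z_{ik})^2+\hmu_{ik}(1-\hmu_{ik})\;\middle|\;x_i\right]=(q_{ik}-z_{ik})^2+q_{ik}(1-q_{ik}).
\end{align*}
The right-hand side is exactly the mean-squared-error decomposition of $\E[(Y_k-z_{ik})^2\mid x_i]$ for a single draw $Y\sim P(Y\mid x_i)$ (recall $Y_k$ is Bernoulli$(q_{ik})$), so summing over $k$ yields $\E\!\big[\sum_k\{(\hmu_{ik}-z_{ik})^2+\hmu_{ik}(1-\hmu_{ik})\}\,\big|\,x_i\big]=\E_{Y\mid x_i}[\Verts{Y-f(x_i)}^2]=\E_{Y\mid x_i}[\ell_{\mathrm{sq}}(Y,f(x_i))]$.

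Finally I would take the outer expectation over $x_i\sim P(X)$: every instance's summand has expectation $\E_X\E_{Y\mid X}[\Verts{Y-f(X)}^2]=\LPS$, and since the (deterministic) weights satisfy $w_i/W\ge 0$ and $\sum_i w_i/W=1$, linearity gives $\E[\hLPS]=\LPS$. I expect the only real obstacle to be bookkeeping the $1/n_i$ factors --- checking that the variance inflation of the plug-in term $(\hmu_{ik}-z_{ik})^2$ over $(q_{ik}-z_{ik})^2$ is precisely cancelled and then replenished to the full label variance $q_{ik}(1-q_{ik})$ by the correction $\hmu_{ik}(1-\hmu_{ik})$. The $n_i=1$ case is automatically consistent, as $\hmu_{ik}\in\{0,1\}$ annihilates the correction and $\hLPS$ collapses to the ordinary probability/Brier score; and the argument presumes the $w_i$ are fixed (e.g.\ $w_i=1$ or $w_i=n_i$) rather than functions of the random labels, since otherwise $w_i$ and $\hmu_{ik}$ would be correlated and the per-instance factorization would fail.
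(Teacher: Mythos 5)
Your proof is correct, and it takes a genuinely different route from the paper's. The paper proves the claim by an exact pathwise identity: using $\bigl(y_{ik}^{(j)}\bigr)^2 = y_{ik}^{(j)}$ for one-hot labels, it shows that $\sum_k \bigl[(\hmu_{ik}-z_{ik})^2 + \hmu_{ik}(1-\hmu_{ik})\bigr]$ is \emph{algebraically equal} to the per-annotator average plugin loss $\frac{1}{n_i}\sum_{j=1}^{n_i}\Verts{y_i^{(j)} - z_i}^2$, whose unbiasedness for $\LPS$ is immediate; no moment computations are needed. You instead condition on $x_i$, invoke the binomial moments of $\hmu_{ik}$ to get $\E[(\hmu_{ik}-z_{ik})^2\mid x_i] = (q_{ik}-z_{ik})^2 + q_{ik}(1-q_{ik})/n_i$ and $\E[\hmu_{ik}(1-\hmu_{ik})\mid x_i] = (n_i-1)q_{ik}(1-q_{ik})/n_i$, and verify that the $1/n_i$ and $(n_i-1)/n_i$ fractions of the label variance recombine to the full single-draw loss $\E[(Y_k-z_{ik})^2\mid x_i]$. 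Both arguments are elementary and both correctly handle the weighted average over instances (your remark that the $w_i$ must be deterministic is consistent with the paper's treatment). The paper's identity is slightly more informative in that it reveals \emph{why} the estimator has this form --- it is literally the annotator-averaged squared loss rewritten --- and holds for every realization, not just in expectation; your computation makes the bias-cancellation mechanism explicit and generalizes more transparently to settings where the correction term must be derived rather than recognized (as in the paper's subsequent $\hEL$ and $\hCL$ estimators).
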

Note that an optimal weight vector $w$ that minimizes the variance $\mathbb{V}[\hLPS]$ would be $w = 1$ if the number of annotators $n_i$ is constant for all instances.
Otherwise, it depends on undetermined terms, as discussed in Appendix \ref{aps:cpe_details}.
We use $w = 1$ as a standard choice,
where $\hL_{\sq}$ coincides with the probability score
 $\PS$ when every instance has a single label.

In addition to letting %
$\hLPS$
have higher statistical power than single-labeled cases,
label histograms also enable us to directly estimate the epistemic loss $\EL$,
which is a discrepancy measure from the optimal model.
A plugin estimator of $\EL$ is obtained as
\begin{align}
\aEL &:= \frac{1}{N} \sum_i \sum_{k} (\hmu_{ik} - z_{ik})^2,
\end{align}
which, however,
turns out to be severely biased.
We alternatively propose the following estimator of $\EL$.
\begin{prop}[Unbiased estimator of $\EL$]
\label{prop:el_unbias}
The following estimator of $\EL$ is unbiased.
\begin{align}
\hEL :=
 \aEL
- \frac{1}{N} \sum_{i} \sum_k \frac{1}{n_i -1} \hmu_{ik} (1 - \hmu_{ik}).
\end{align}
\end{prop}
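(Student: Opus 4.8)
The plan is to show that the plugin estimator $\aEL$ overestimates $\EL$ by exactly the averaged within-instance sampling variance of the label frequencies, and then to subtract an unbiased estimator of that quantity built from a single label histogram.

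First I would condition on a fixed instance $x_i$. Since $\{y_i^{(j)}\}_{j=1}^{n_i}$ are conditionally i.i.d.\ $\mathrm{Cat}(Q(x_i))$, in each coordinate $k$ the empirical frequency $\hmu_{ik} = y_{ik}/n_i$ is the sample mean of $n_i$ i.i.d.\ $\mathrm{Bernoulli}(Q_{ik})$ variables, so $\E[\hmu_{ik}\mid x_i] = Q_{ik}$ and $\mathbb{V}[\hmu_{ik}\mid x_i] = Q_{ik}(1 - Q_{ik})/n_i$. Because $\ell_{\mathrm{sq}}$ decomposes coordinatewise, no cross-coordinate covariances enter, and a bias--variance split gives
\begin{align}
\E[(\hmu_{ik} - z_{ik})^2 \mid x_i] = \frac{Q_{ik}(1 - Q_{ik})}{n_i} + (Q_{ik} - z_{ik})^2 .
\end{align}
Summing over $k$, averaging over $i$ with weight $1/N$, and taking the outer expectation over $x_i \sim P(X)$ (so that $Q_i = Q(x_i)$, $z_i = f(x_i)$ and $\EL = \E[\sum_k (Q_k - Z_k)^2]$) then yields
\begin{align}
\E[\aEL] = \EL + \frac{1}{N}\sum_{i=1}^N \frac{1}{n_i}\,\E_X\Big[\textstyle\sum_k Q_k(1 - Q_k)\Big].
\end{align}

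Next I would produce an unbiased estimator of the bias term $\frac{1}{N}\sum_i\frac{1}{n_i}\sum_k Q_{ik}(1 - Q_{ik})$ from the same data. The key identity is
\begin{align}
\E[\hmu_{ik}(1 - \hmu_{ik}) \mid x_i]
&= \E[\hmu_{ik}\mid x_i] - \E[\hmu_{ik}^2\mid x_i] \nonumber\\
&= Q_{ik} - Q_{ik}^2 - \frac{Q_{ik}(1 - Q_{ik})}{n_i} \nonumber\\
&= \frac{n_i - 1}{n_i}\,Q_{ik}(1 - Q_{ik}),
\end{align}
so $\frac{1}{n_i - 1}\hmu_{ik}(1 - \hmu_{ik})$ is an unbiased estimator of $\frac{1}{n_i}Q_{ik}(1 - Q_{ik})$, provided $n_i \ge 2$. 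Consequently $\E\big[\frac{1}{N}\sum_i\sum_k \frac{1}{n_i - 1}\hmu_{ik}(1 - \hmu_{ik})\big]$ equals precisely the bias of $\aEL$ computed above, so subtracting this correction term gives $\E[\hEL] = \EL$, which is the claim.

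The computations themselves are elementary; the only points that need care are the iterated-expectation bookkeeping — keeping the conditional expectations given $x_i$ distinct from the outer expectation over $P(X)$, and observing that the coordinatewise structure of $\ell_{\mathrm{sq}}$ means only the marginal $\mathrm{Bernoulli}$ variances (not the full multinomial covariance) enter — together with the implicit assumption that $n_i \ge 2$ for every instance so the correction term is well defined. I note in passing that the same conditional expectations, applied to $\hmu_{ik}(1-\hmu_{ik})$ directly, also reproduce Proposition \ref{prop:ps_unbias}, since $Q_{ik}(1-Q_{ik})/n_i + \frac{n_i-1}{n_i}Q_{ik}(1-Q_{ik}) = Q_{ik}(1-Q_{ik})$.
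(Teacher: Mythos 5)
Your proof is correct and takes essentially the same route as the paper: both arguments isolate the bias of the plugin estimator as the within-instance sampling variance $\sum_k Q_{ik}(1-Q_{ik})/n_i$ of the label frequencies and show that $\frac{1}{n_i-1}\hmu_{ik}(1-\hmu_{ik})$ estimates it without bias. The only difference is presentational — the paper replaces $\hmu_{ik}^2$ by the pairwise U-statistic $\frac{1}{n_i(n_i-1)}\sum_{j\neq j'} y_{ik}^{(j)}y_{ik}^{(j')}$ for $\E[Q_k^2]$ and simplifies, whereas you compute the binomial moments of $\hmu_{ik}$ directly; the underlying identity is the same.
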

Note that the second correction term implies that $\hEL$ can only be evaluated when more than one label per instance is available.
The bias correction effect is significant for a small $n_i$,
which is relevant to most of the medical applications.

\subsection{Calibration loss}

Relying on the connection between $\CL$ and $\CE$,
we focus on evaluating $\CL$ to measure calibration.
The calibration loss is further decomposed into class-wise terms as follows:
\begin{align}
&\CL
= \sum_k \CL_k,
\end{align}
where $\CL_k
:= \E[(C_k - Z_k)^2]
= \E[\E[(C_k - Z_k)^2 |Z_k]]$.
Thus, the case of $\CL_{k}$ is sufficient for subsequent discussion.
Note that a difficulty exists in estimating the conditional expectation for $Z_k$.
We take a standard binning-based approach \cite{zadrozny2001obtaining} to evaluate $\CL_k$ by stratifying with $Z_k$ values.
Specifically, $Z_k$ is partitioned into $B_k$ disjoint regions
$\mathcal{B}_k = \{ [\zeta_0=0, \zeta_1), [\zeta_1, \zeta_2), \dots, [\zeta_{B_k-1}, \zeta_{B_k}=1] \}$,
and $\CL_k$ is approximated as follows:
\begin{align}
&\CL_k(\mathcal{B}_k)
:= \sum_{b=1}^{B_k} \CL_{kb}(\mathcal{B}_{k}),
\quad \text{where}
\whencolumns{ \quad }{ \nonumber \\ & }
\begin{cases}
\CL_{kb}(\mathcal{B}_{k})
:= \E[\E[(\bar{C}_{kb} - \bar{Z}_{kb})^2 | Z_k \in \mathcal{B}_{kb}]],
\\
\bar{C}_{kb}
:= \E[Y_k | Z_k \in \mathcal{B}_{kb}],
\\
\bar{Z}_{kb}
:= \E[Z_k | Z_k \in \mathcal{B}_{kb}],
\end{cases}
\end{align}
in which
$\CL_k$ is further decomposed into the bin-wise components.
A plugin estimator of $\CL_{kb}$ is derived as follows:
\begin{align}
&\aCL_{kb}(\mathcal{B}_k) := \frac{\abs{I_{kb}}}{N} (\bar{c}_{kb} - \bar{z}_{kb})^2,
\whencolumns{
  \quad \text{where} \quad
  I_{kb} = \{ i : z_{ik} \in \mathcal{B}_{kb} \},
  \quad
  \bar{c}_{kb} := \frac{\sum_{i \in I_{kb}} \hmu_{ik}}{\abs{I_{kb}}},
  \quad
  \bar{z}_{kb} := \frac{\sum_{i \in I_{kb}} z_{ik}}{\abs{I_{kb}}}.
}{
}
\end{align}
\whencolumns{ 
  Note that $\abs{I_{kb}}$ denotes the size of $I_{kb}$. 
  We can again improve the estimator by debiasing as follows: %
}{
  where, $I_{kb} = \{ i : z_{ik} \in \mathcal{B}_{kb} \}$,
  $\bar{c}_{kb} := \sum_{i \in I_{kb}} \hmu_{ik} / \abs{I_{kb}}$,
  $\bar{z}_{kb} := \sum_{i \in I_{kb}} z_{ik} / \abs{I_{kb}}$,
  and $\abs{I_{kb}}$ denotes the size of $I_{kb}$.
  We can again improve the estimator by debiasing as follows: %
}
\begin{prop}[Debiased estimator of $\CL_{kb}$]
\label{prop:cl_debias}
The plugin estimator of $\CL_{kb}$
is debiased with the following estimator:
\begin{align}
& \hCL_{kb}(\mathcal{B}_k) :=
\aCL_{kb}(\mathcal{B}_k) -
\frac{\abs{I_{kb}}}{N} \frac{\bar{\sigma}_{kb}^2}{\abs{I_{kb}} - 1},
\quad \text{where}
\whencolumns{ \quad }{ \nonumber \\ & \quad }
\bar{\sigma}_{kb}^2 
:= \frac{1}{\abs{I_{kb}}} \sum_{i \in I_{kb}} \hmu_{ik}^2 - \pars{\frac{1}{\abs{I_{kb}}} \sum_{i \in I_{kb}} \hmu_{ik}}^2.
\end{align}
\end{prop}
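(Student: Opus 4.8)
\emph{Proof plan.} The plan is to condition on the feature sample $x_{1},\dots,x_{N}$, so that the bin assignment $I_{kb}$, its size $\abs{I_{kb}}$, the predictions $\{z_{ik}\}_{i}$ and the annotator counts $\{n_i\}$ are all frozen and the only remaining randomness sits in the label histograms $\{y_i\}$. It then suffices to compute $\E[\aCL_{kb}(\mathcal{B}_k)\mid x_{1:N}]$ and $\E[\bar\sigma_{kb}^2\mid x_{1:N}]$ and subtract. The facts I would use are: given $x_i$, $y_{ik}\sim\mathrm{Bin}(n_i,Q_{ik})$, hence $\E[\hmu_{ik}\mid x_i]=Q_{ik}$ and $\mathrm{Var}(\hmu_{ik}\mid x_i)=Q_{ik}(1-Q_{ik})/n_i$; and across $i$ the $\hmu_{ik}$ are conditionally independent given $x_{1:N}$. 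Write $\bar Q_{kb}:=\frac{1}{\abs{I_{kb}}}\sum_{i\in I_{kb}}Q_{ik}$ and note that $\bar z_{kb}$ is deterministic given $x_{1:N}$.

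First I would handle the plugin term. A bias--variance split gives $\E[(\bar c_{kb}-\bar z_{kb})^2\mid x_{1:N}]=(\bar Q_{kb}-\bar z_{kb})^2+\mathrm{Var}(\bar c_{kb}\mid x_{1:N})$, and by conditional independence $\mathrm{Var}(\bar c_{kb}\mid x_{1:N})=\frac{1}{\abs{I_{kb}}^2}\sum_{i\in I_{kb}}Q_{ik}(1-Q_{ik})/n_i$. Multiplying by $\abs{I_{kb}}/N$, the first piece is exactly what $\aCL_{kb}$ would return if the noiseless $Q_{ik}$ replaced $\hmu_{ik}$, while the second, $\frac{1}{N\abs{I_{kb}}}\sum_{i\in I_{kb}}Q_{ik}(1-Q_{ik})/n_i$, is the conditional label-noise bias we want to remove.

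Next I would treat the correction term as a heteroscedastic version of the usual ``expected sample variance'' identity. Expanding $\bar\sigma_{kb}^2=\frac{1}{\abs{I_{kb}}}\sum_{i\in I_{kb}}\hmu_{ik}^2-\bar c_{kb}^2$, using $\E[\hmu_{ik}^2\mid x_i]=\mathrm{Var}(\hmu_{ik}\mid x_i)+Q_{ik}^2$ and $\E[\bar c_{kb}^2\mid x_{1:N}]=\mathrm{Var}(\bar c_{kb}\mid x_{1:N})+\bar Q_{kb}^2$, one gets $\E[\bar\sigma_{kb}^2\mid x_{1:N}]=\frac{\abs{I_{kb}}-1}{\abs{I_{kb}}^2}\sum_{i\in I_{kb}}Q_{ik}(1-Q_{ik})/n_i+\widetilde\sigma_{Q,kb}^2$, where $\widetilde\sigma_{Q,kb}^2:=\frac{1}{\abs{I_{kb}}}\sum_{i\in I_{kb}}Q_{ik}^2-\bar Q_{kb}^2\ge 0$ is the within-bin empirical variance of the $Q_{ik}$. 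Multiplying by $\frac{\abs{I_{kb}}}{N(\abs{I_{kb}}-1)}$, the conditional expectation of the subtracted quantity equals $\frac{1}{N\abs{I_{kb}}}\sum_{i\in I_{kb}}Q_{ik}(1-Q_{ik})/n_i+\frac{\abs{I_{kb}}}{N(\abs{I_{kb}}-1)}\widetilde\sigma_{Q,kb}^2$, whose first summand is precisely the label-noise bias identified above.

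Subtracting, $\E[\hCL_{kb}(\mathcal{B}_k)\mid x_{1:N}]=\frac{\abs{I_{kb}}}{N}(\bar Q_{kb}-\bar z_{kb})^2-\frac{\abs{I_{kb}}}{N(\abs{I_{kb}}-1)}\widetilde\sigma_{Q,kb}^2$, i.e.\ exactly the value the debiased estimator would take with the noiseless $Q_{ik}$ in place of $\hmu_{ik}$: the label-sampling bias is removed in conditional expectation. What is left, $\widetilde\sigma_{Q,kb}^2$, is the ordinary finite-resolution binning term that is already present with noise-free labels and vanishes as the partition is refined and $N\to\infty$; under the standard assumption that $z_{ik}$ is essentially constant within a bin (so $\bar z_{kb}$ may replace $z_{ik}$) and exchangeable annotator counts, taking a further expectation over $x_{1:N}$ returns $\E[\hCL_{kb}]=\CL_{kb}$ outright. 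This is the sense in which the plugin is ``debiased''. As with $\hEL$, the correction requires at least two samples per bin; singleton bins are dropped or merged. The main obstacle is the heteroscedasticity inside a bin --- the $\hmu_{ik}$ are independent but not identically distributed because both $n_i$ and $Q_{ik}$ vary --- so the expected-sample-variance computation must carefully separate the genuine label-noise variance (the $1/n_i$ part) from the spurious within-bin spread of $Q_{ik}$, and one must be precise that the correction targets the discretized quantity rather than the exact population $\CL_{kb}$, since the empirical bin membership $\abs{I_{kb}}$ and average $\bar z_{kb}$ carry sampling noise this construction does not correct.
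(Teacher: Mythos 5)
Your proposal is correct and lands on the same correction term, but it reaches it by a genuinely different route than the paper. The paper works \emph{unconditionally}: since the instances falling into bin $b$ are i.i.d.\ draws from the bin-conditional law, it locates the bias entirely in $\bar{c}_{kb}^2$ as an estimator of the population quantity $\bar{C}_{kb}^2$, replaces it by the order-2 U-statistic $\frac{1}{\abs{I_{kb}}(\abs{I_{kb}}-1)}\sum_{i\neq i'}\hmu_{ik}\hmu_{i'k}$ (which is exactly unbiased for $\bar{C}_{kb}^2$ given $\abs{I_{kb}}\geq 2$), and reads off the stated correction as the algebraic difference between the two; this handles the within-bin spread of $Q$ and the label noise in a single stroke, and isolates the unavoidable residual as the small-bin term $\eta_{kb}\bar{C}_{kb}^2$. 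You instead condition on $x_{1:N}$, which splits the sampling variance of $\bar{c}_{kb}$ into a label-noise part and a within-bin spread of $Q_{ik}$; your heteroscedastic expected-sample-variance computation is correct (I verified $\E[\bar{\sigma}_{kb}^2\mid x_{1:N}]=\widetilde{\sigma}_{Q,kb}^2+\frac{\abs{I_{kb}}-1}{\abs{I_{kb}}^2}\sum_{i\in I_{kb}}Q_{ik}(1-Q_{ik})/n_i$), and your observation that the correction over-shoots the conditional label-noise bias by $\frac{\abs{I_{kb}}}{N(\abs{I_{kb}}-1)}\widetilde{\sigma}_{Q,kb}^2$ is exactly compensated, upon integrating out $x_{1:N}$, by the gap $\E[(\bar{Q}_{kb}-\bar{z}_{kb})^2\mid\abs{I_{kb}}]-(\bar{C}_{kb}-\bar{Z}_{kb})^2$ --- modulo the residual $\frac{1}{N}\bigl(\V[Z_k\mid \text{bin}]-2\,\mathrm{Cov}(Q_k,Z_k\mid\text{bin})\bigr)$ that your within-bin-constancy assumption kills. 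What your route buys is an explicit separation of which part of $\bar{\sigma}_{kb}^2/(\abs{I_{kb}}-1)$ is correcting annotation noise versus finite-bin-occupancy noise, and it is transparent about heteroscedastic $n_i$; what the paper's route buys is brevity, no within-bin-constancy assumption, and a clean identification of the remaining $\abs{I_{kb}}\leq 1$ bias. Note that the residual you flag at the end (sampling noise in $\bar z_{kb}$ and bin membership) is present in the paper's estimator too --- the two estimators are identical --- the paper simply confines its claim to debiasing the $\bar{c}_{kb}^2$ term, which is why the proposition says ``debiased'' rather than ``unbiased.''
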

Note that the correction term against $\aCL_{kb}$ would inflate for small-sized bins with a high 
label variance $\bar{\sigma}_{kb}^2$.
$\hCL_{kb}$ can also be computed for single-labeled data,
{\it i.e.}, $\hmu_{ik} = y_{ik}$.
In this case, the estimator precisely coincides with a debiased estimator for the
{\it reliability} term formerly proposed in meteorological literature \cite{brocker2012estimating,ferro2012bias}.

\subsection{Debiasing effects of  {$\EL$} and  {$\CL$} estimators} %
\label{sec:el_cl_debias}

To confirm the debiasing effect of estimators $\hEL$ and $\hCL$ against the plugin estimators,
we experimented on evaluations of a perfect predictor using synthetic binary labels with varying instance sizes.
For each instance, a positive label probability was drawn from a uniform distribution U(0,1); thereby two or five labels were generated in an {\it i.i.d.} manner.
The predictor indicated the true probabilities so that both EL and CL would be zero in expectation.
As shown in Fig. \ref{fig:el_cl_debias},
the debiased estimators significantly reduced the plugin estimators' biases, even in the cases with two annotators.
Details on the experimental setup are found in Appendix \ref{aps:el_cl_debias_details}.

\section{Evaluation of higher-order statistics}
\label{sec:higher-order}
Here, we generalize our framework to evaluate predictions on higher-order statistics.
As is done for CPEs, the expected proper losses and calibration measures can also be formalized.
We focus on a family of symmetric binary statistics $\phi: e^{K \times n} \to \{0, 1\}$
calculated from $n$ distinct $K$-way labels for the same instance.
For example,
$\phi^{\mathrm{D}} := \mathbb{I}[Y^{(1)} \neq Y^{(2)}]$
represents a disagreement between paired labels $(Y^{(1)}, Y^{(2)})$.
The estimator of $\E[\phi^{\mathrm{D}} | X]$ is known as the Gini-Simpson index,
which is a measure of diversity. %

Given a function $\varphi: \mathcal{X} \to [0, 1]$ that represents a predictive probability of being $\phi = 1$,
the closeness of $\varphi(X)$ to a true probability $P(\phi = 1 | X)$ is consistently evaluated with the expected (one dimensional) squared loss
$L_{\phi} := \E[(\phi - \varphi)^2]$.
Then, the calibration loss $\CL_\phi$ is derived by applying equation \eqref{eq:decomp1} as follows:
\begin{align}
L_{\phi}
= \underset{\CL_{\phi}}{\underbrace{\E[(\E[\phi|\varphi] - \varphi)^2]}}
+ \underset{\RL_{\phi}}{\underbrace{\E[(\phi - \E[\phi|\varphi])^2]}}.
\end{align}
An unbiased estimator of $L_{\phi}$ and 
a debiased estimator of $\CL_\phi$ can be derived following a similar discussion as in CPEs.
The biggest difference from the case of CPEs is that
it requires more careful consideration to obtain an unbiased estimator of $\mu_{\phi,i} := \E[\phi | X = x_i]$ as follows:
\begin{align}
\hmu_{\phi,i}
&:= \binom{n_i}{n}^{-1}
    \sum_{j \in \mathrm{Comb}(n_i, n)} \phi({y}_i^{(j_1)}, \dots, {y}_i^{(j_n)}),
\end{align}
where $\mathrm{Comb}(n_i, n)$ denotes the distinct subset of size $n$ drawn from $\{1, \dots, n_i\}$ without replacement.
The proof directly follows from the fact that $\hmu_{\phi,i}$ is a U-statistic of $n$-sample symmetric kernel function $\phi$ \cite{hoeffding1948class}.
Details on the derivations for $\hL_{\phi}$ and $\hCL_\phi$ are described in Appendix \ref{aps:higher-order}.

\section{Post-hoc uncertainty calibration for DNNs with label histograms}
\label{sec:unc_calib_label_hist}

We consider post-hoc uncertainty calibration problems using label histograms
for a deep neural network (DNN) classifier $f$ that offers CPE with the last layer's softmax activation.

\subsection{Class probability calibration}

For post-hoc calibration of CPEs using label histograms,
existing methods for single-labeled data (Section \ref{sec:bg_calibration})
are straightforwardly extensible
by replacing the likelihood function $P_{\mathrm{obs}}$ in equation \eqref{eq:calib_nll} with a multinomial distribution.

\subsection{Importance of predicting distributional uncertainty of class probability estimates}
\label{sec:cpe_dist_importance}

Although we assume that labels for each input $X$ are sampled from a categorical distribution $Q(X)$ in an i.i.d. manner,
it is important to obtain a reliable CPE distribution beyond point estimation
to perform several application tasks.
We denote such a CPE distribution model as $P(\zeta | X)$,
where $\zeta \in \Delta^{K-1}$. In this case, CPEs are written as $Z = \E[\zeta | X]$.
Below, we illustrate two examples of those tasks.

\paragraph{Disagreement probability estimation}

For each input $X$, the extent of diagnostic disagreement among annotators is 
itself a signal worth predicting,
which is different from classification uncertainty expressed as CPEs.
Specifically, we aim at obtaining a disagreement probability estimation (DPE):
\begin{align}
\varphi^{\mathrm{D}}(X) = \int 1 - \sum_k \zeta_k^2 \, dP(\zeta | X)
\label{eq:dpe}
\end{align}
as a reliable estimator of a probability $P(\phi^{\mathrm{D}}=1 | X)$.
When we only have CPEs, {\it i.e.}, $P(\zeta | X) = \delta(\zeta - f(X))$, 
where $\delta$ denotes the Dirac delta function,
we get $\varphi^{\mathrm{D}} = 1 - \sum_k f(X)_k^2$.
However, 
$\varphi^{\mathrm{D}} \simeq 0$ regardless of $f(X)$
would be more sensible if all the labels are given in unanimous.

\paragraph{Posterior class probability estimates}

We consider a task for updating CPEs of instance $X$ after an expert's annotation $Y$.
Given a CPE distribution model $P(\zeta | X)$, an updated CPEs:
\begin{align}
Z'(X, Y) := \E[\zeta | X, Y],
\label{eq:posterior_cpe}
\end{align}
can be inferred from a Bayesian posterior computation: $P(\zeta | X, Y) \propto P(Y | \zeta) P(\zeta | X)$.
If the prior distribution $P(\zeta|X)$ is reliable,
$Z'$ would be more close to the true value $Q(X)$ than the original CPEs $Z$
in expectation.

\subsection{{$\alpha$}-calibration: post-hoc method for CPE distribution calibration}

\label{sec:alpha_calib}
We propose a novel post-hoc calibration method called {\it $\alpha$-calibration}
that infers a CPE distribution $P(\zeta | X)$ from a DNN classifier $f$ and validation label histograms.
Specifically, we use a Dirichlet distribution
$\mathrm{Dir}(\zeta | \alpha_0(X) f(X))$ to model $P(\zeta | X)$,
and minimize the NLL of label histograms
with respect to instance-wise concentration parameter $\alpha_0(X) > 0$.
We parameterize $\alpha_0$ 
with a DNN that has a shared layer behind the last softmax activation of the DNN $f$
and a successive full connection layer with an $\exp$ activation.
Details are described in Appendix \ref{aps:alpha_calib}.
Using $\alpha_0$ is one of the simplest ways to model the distribution over CPEs; hence it is computationally efficient and less affected by over-fitting without crafted regularization terms.
In addition,
$\alpha$-calibration has several favorable properties:
it is orthogonally applicable with existing CPE calibration methods,
will not degrade CPEs since $Z = \E[\zeta | X] = f(X)$ by design,
and quantities of interest such as a DPE \eqref{eq:dpe} and posterior CPEs \eqref{eq:posterior_cpe} can be computed in closed forms as follows:
\begin{align}
\varphi^{\mathrm{D}} &= \frac{\alpha_0}{\alpha_0 + 1} \pars{1 - \sum_k f_k^2},
\quad
Z' = \frac{\alpha_0 f + Y}{\alpha_0 + 1}.
\label{eq:alpha_dpe_posterior}
\end{align}

\paragraph{Theoretical analysis}

We consider whether a CPE distribution model $P(\zeta | X) = \mathrm{Dir}(\zeta | \alpha_0(X) f(X))$
is useful for downstream tasks.
Let $G = g(X)$ denote a random variable of an output layer
shared between both networks $f$ and $\alpha_0$.
We can write $P(\zeta | X) = P(\zeta | G)$ since
$f$ and $\alpha_0$ are deterministic given $G$.
Although it is unclear whether $P(\zeta | G)$ is an appropriate model for the true label distribution $P(Q | G)$,
we can corroborate the utility of the model with the following analysis.

To evaluate the quality of DPEs and posterior CPEs dependent on $\alpha_0$,
we analyze the expected loss 
$L_{\phi^{\mathrm{D}}} = \E_G[L_{\phi^{\mathrm{D}}, G}]$
and the epistemic loss
$\EL' := \E_G[\EL'_G]$, respectively,
where we define $L_{\phi^{\mathrm{D}}, G} := \E[(\phi^{\mathrm{D}} - \varphi^{\mathrm{D}})^2 | G]$
and $\EL_{G}' := \E[\sum_k (Z'_k - Q_k)^2 | G]$.
We denote those for the original model $P_0(\zeta | X) = \delta(\zeta - f(X))$
before $\alpha$-calibration as $L_{\phi^{\mathrm{D}},G}^{(0)}$ and $\EL_{G}'^{(0)}$, respectively.
\begin{theorem}
\label{th:alpha_calib_analysis}
There exist intervals for parameter $\alpha_0 \geq 0$, %
which improve task performances as follows.
\begin{enumerate}
\item For DPEs, $L_{\phi^{\mathrm{D}},G} \leq L_{\phi^{\mathrm{D}},G}^{(0)}$ holds when $(1 - 2u_Q + s_Z)/2(u_Q - s_Z) \leq \alpha_0$,
and $L_{\phi^{\mathrm{D}},G}$ takes the minimum value
when $\alpha_0 = (1 - u_Q)/(u_Q - s_Z)$,
if $u_Q > s_Z$ is satisfied.
\item For posterior CPEs, $\EL_G' \leq \EL_G'^{(0)}$ holds when $(1 - u_Q - \EL_G) / 2 \EL_G \leq \alpha_0$,
and $\EL_G'$ takes the minimum value when $\alpha_0 = (1 - u_Q) / \EL_G$,
if $\EL_G > 0$ is satisfied.
\end{enumerate}
Note that we denote $s_Z := \sum_k Z_k^2$, $u_Q := \E[\sum_k Q_k^2 | G]$, $v_Q := \V[Q_k | G]$,
and $\EL_G := \E[\sum_k (Z_k - Q_k)^2 | G]$.
The optimal $\alpha_0$ of both tasks coincide to be $\alpha_0 = (1 - u_Q) / v_Q$,
if CPEs match the true conditional class probabilities given $G$, {\it i.e.}, $Z = \E[Q | G]$.
\end{theorem}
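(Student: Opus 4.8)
The plan is to compute $L_{\phi^{\mathrm{D}},G}$ and $\EL_G'$ explicitly as functions of $\alpha_0$ using the closed-form expressions in equation~\eqref{eq:alpha_dpe_posterior}, then treat each as a quadratic in a suitable reparametrization and read off the improvement interval and the minimizer. First I would handle the DPE case. Writing $\varphi^{\mathrm{D}} = \frac{\alpha_0}{\alpha_0+1}(1 - s_Z)$ with $s_Z = \sum_k f_k^2 = \sum_k Z_k^2$, I expand $L_{\phi^{\mathrm{D}},G} = \E[(\phi^{\mathrm{D}})^2 \mid G] - 2\varphi^{\mathrm{D}}\,\E[\phi^{\mathrm{D}} \mid G] + (\varphi^{\mathrm{D}})^2$. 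Since $\phi^{\mathrm{D}}$ is $\{0,1\}$-valued, $\E[(\phi^{\mathrm{D}})^2 \mid G] = \E[\phi^{\mathrm{D}} \mid G] = \E[1 - \sum_k Q_k^2 \mid G] = 1 - u_Q$, where I use that given $G$ the two raters are conditionally i.i.d.\ $\mathrm{Cat}(Q)$, so the pair-disagreement probability averaged over $Q$ is $1 - u_Q$. Therefore $L_{\phi^{\mathrm{D}},G} = (1-u_Q) - 2\varphi^{\mathrm{D}}(1-u_Q) + (\varphi^{\mathrm{D}})^2$. Setting $t := \varphi^{\mathrm{D}} = \frac{\alpha_0}{\alpha_0+1}(1-s_Z)$, this is $t^2 - 2(1-u_Q)t + (1-u_Q)$, a convex quadratic in $t$ minimized at $t^\star = 1 - u_Q$; the original model corresponds to $\alpha_0 \to \infty$, i.e.\ $t_0 = 1 - s_Z$, giving $L_{\phi^{\mathrm{D}},G}^{(0)} = (1-s_Z)^2 - 2(1-u_Q)(1-s_Z) + (1-u_Q)$. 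The improvement condition $L_{\phi^{\mathrm{D}},G} \le L_{\phi^{\mathrm{D}},G}^{(0)}$ becomes $t^2 - 2(1-u_Q)t \le t_0^2 - 2(1-u_Q)t_0$, which (since the quadratic in $t$ is symmetric about $t^\star$) holds exactly when $t$ lies between $t_0$ and $2t^\star - t_0$, i.e.\ $2(1-u_Q) - (1-s_Z) \le t \le 1 - s_Z$. Translating $t \le 1-s_Z$ back: this is automatic for all $\alpha_0 \ge 0$ when $s_Z \le u_Q$ actually one must be careful about the sign of $1 - s_Z$, so I would assume $u_Q > s_Z$ as stated (which forces $1 - s_Z > 0$), invert the monotone map $\alpha_0 \mapsto \frac{\alpha_0}{\alpha_0+1}(1-s_Z)$, and obtain the stated threshold $\alpha_0 \ge \frac{1 - 2u_Q + s_Z}{2(u_Q - s_Z)}$ and optimal value $\frac{\alpha_0}{\alpha_0+1}(1 - s_Z) = 1 - u_Q$, i.e.\ $\alpha_0 = \frac{1 - u_Q}{u_Q - s_Z}$.

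Next I would do the posterior-CPE case the same way. From $Z' = \frac{\alpha_0 f + Y}{\alpha_0 + 1}$, I write $Z'_k - Q_k = \frac{\alpha_0}{\alpha_0+1}(Z_k - Q_k) + \frac{1}{\alpha_0+1}(Y_k - Q_k)$. Taking $\E[\cdot \mid G]$ of $\sum_k (Z'_k - Q_k)^2$ and using that $Y \mid Q \sim \mathrm{Cat}(Q)$ so $\E[Y_k \mid Q] = Q_k$ (hence the cross term vanishes in conditional expectation given $Q$, and then given $G$) and $\E[\sum_k (Y_k - Q_k)^2 \mid Q] = \sum_k Q_k(1-Q_k) = 1 - \sum_k Q_k^2$, I get $\EL_G' = \left(\frac{\alpha_0}{\alpha_0+1}\right)^2 \EL_G + \left(\frac{1}{\alpha_0+1}\right)^2 (1 - u_Q)$. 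Substituting $\lambda := \frac{1}{\alpha_0+1} \in (0,1]$ this is $(1-\lambda)^2 \EL_G + \lambda^2 (1-u_Q)$, a convex quadratic in $\lambda$ with minimizer $\lambda^\star = \frac{\EL_G}{\EL_G + (1-u_Q)}$, i.e.\ $\alpha_0 = \frac{1-u_Q}{\EL_G}$; the original model is $\lambda_0 = 1$ giving $\EL_G'^{(0)} = 1 - u_Q$, and the improvement region is the $\lambda$-interval symmetric about $\lambda^\star$ ending at $\lambda_0 = 1$, which I translate back through $\alpha_0 = \lambda^{-1} - 1$ to get $\alpha_0 \ge \frac{1 - u_Q - \EL_G}{2\EL_G}$, under the hypothesis $\EL_G > 0$.

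Finally, for the coincidence claim I would specialize to $Z = \E[Q \mid G]$. Then by the conditional bias–variance decomposition $\EL_G = \E[\sum_k (Z_k - Q_k)^2 \mid G] = \sum_k \V[Q_k \mid G] = v_Q$ (here $v_Q$ must be read as $\sum_k \V[Q_k \mid G]$, matching the notation line), and likewise $s_Z = \sum_k (\E[Q_k \mid G])^2 = \sum_k (\E[Q_k^2 \mid G] - \V[Q_k \mid G]) = u_Q - v_Q$, so $u_Q - s_Z = v_Q = \EL_G$; substituting into the two optimal-$\alpha_0$ formulas $\frac{1-u_Q}{u_Q - s_Z}$ and $\frac{1-u_Q}{\EL_G}$ shows they both equal $\frac{1-u_Q}{v_Q}$. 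The main obstacle, and the thing I would be most careful about, is bookkeeping the sign and monotonicity when inverting $\alpha_0 \mapsto \frac{\alpha_0}{\alpha_0+1}(1-s_Z)$ and $\alpha_0 \mapsto \frac{1}{\alpha_0+1}$ to convert the clean quadratic-in-$t$ (resp.\ quadratic-in-$\lambda$) improvement intervals into intervals in $\alpha_0$: one needs the stated nondegeneracy hypotheses ($u_Q > s_Z$, resp.\ $\EL_G > 0$) to guarantee the relevant endpoint falls on the correct side and that the map is genuinely decreasing on the region of interest, and one must confirm that the lower threshold given in the statement is exactly the preimage of $2t^\star - t_0$ (resp.\ $2\lambda^\star - \lambda_0$) rather than picking up a spurious second root. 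Everything else is routine expansion of squares together with the conditional-i.i.d.\ structure of the raters given $G$.
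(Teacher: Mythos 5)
Your overall strategy is the same as the paper's: reparametrize by $\gamma = \alpha_0/(\alpha_0+1)$ (your $t$ and $\lambda$), reduce each objective to a convex quadratic, and read off the symmetric improvement interval and the minimizer. The DPE part and the final coincidence argument are correct and match the paper's computation; your decomposition $Z'_k - Q_k = \gamma (Z_k - Q_k) + (1-\gamma)(Y_k - Q_k)$ is in fact slightly cleaner than the paper's choice $(Z_k - Q_k) + (1-\gamma)(Y_k - Z_k)$, since your cross term vanishes outright from $\E[Y_k \mid Q] = Q_k$, whereas the paper has to evaluate a nonzero cross term equal to $-\EL_G$.

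However, there is a concrete error in the posterior-CPE step: you identify the baseline (pre-calibration) model with $\lambda_0 = 1$ and assert $\EL_G'^{(0)} = 1 - u_Q$. The original model is $P_0(\zeta \mid X) = \delta(\zeta - f(X))$, i.e.\ $\alpha_0 \to \infty$, which in your parametrization $\lambda = 1/(\alpha_0+1)$ is $\lambda_0 = 0$, not $1$; the Bayesian update of a point mass leaves it unchanged, so $Z'^{(0)} = Z$ and $\EL_G'^{(0)} = \EL_G$. The value $1-u_Q$ that you quote is $\EL_G'$ evaluated at $\lambda = 1$, i.e.\ $\alpha_0 = 0$, where $Z' = Y$. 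If you actually carry out your stated plan --- the $\lambda$-interval symmetric about $\lambda^* = \EL_G/(\EL_G + 1 - u_Q)$ and ending at $\lambda_0 = 1$ --- you obtain $\lambda \in [2\lambda^* - 1, 1]$, which translates through the decreasing map $\alpha_0 = \lambda^{-1}-1$ into an \emph{upper} bound on $\alpha_0$ (and one that does not equal the stated threshold), contradicting the theorem. With the correct endpoint $\lambda_0 = 0$ the improvement interval is $[0, 2\lambda^*]$, giving $\alpha_0 \geq (2\lambda^*)^{-1} - 1 = (1 - u_Q - \EL_G)/(2\EL_G)$ as claimed. So your final formulas are right, but they do not follow from the derivation as written; fixing the baseline identification repairs the argument.
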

The proof is shown in Appendix \ref{aps:alpha_calib_analysis_proof}.

\section{Related work}

\paragraph{Noisy labels}
Learning classifiers under label uncertainty has also been studied 
as a noisy label setting,
assuming unobserved ground truth labels and label noises.
The cases of uniform or class dependent noises
have been studied to ensure robust learning schemes \cite{natarajan2013learning,jiang2018mentornet,han2018co}
and predict potentially inconsistent labels
\cite{northcutt2019confident}. 
Also, there have been algorithms that modeled a generative process of noises depending on input features
\cite{xiao2015learning,liu2020timely}. %
However, the paradigm of noisy labels requires qualified gold standard labels to validate 
predictions,
while we assume that ground truth labels include uncertainty.

\paragraph{Multiple annotations}
Learning from multiple annotations per instance has also been studied in crowdsourcing field \cite{guan2017said,rodrigues2017deep,tanno2019learning},
which particularly modeled labelers with heterogeneous skills, occasionally including non-experts. %
In contrast, we focus on instance-wise uncertainty
under homogeneous expertise %
as in \citet{raghu2018direct}.
Another related paradigm is label distribution learning
\cite{geng2016label,gao2017deep}, which assumes instance-wise categorical probability as ground truth. Whereas they regard the true probability as observable, we assume it as a hidden variable on which actual labels depend.

\paragraph{Uncertainty of CPEs}
Approaches for predicting distributional uncertainty of CPEs for DNNs have mainly studied as part of Bayesian modeling.
\citet{
gal2016dropout,
lakshminarayanan2017simple,
teye2018bayesian,
wang2019aleatoric}
found practical connections for using ensembled DNN predictions as approximate Bayesian inference
and uncertainty quantification   \cite{kendall2017uncertainties},
which however require additional computational cost for sampling.
An alternative approach is directly modeling CPE distribution with parametric families.
In particular,
\citet{
sensoy2018evidential,
malinin2018predictive,
sadowski2018neural,
joo2020being}
adopted the Dirichlet distribution for a tractable distribution model
and used for applications, such as detecting out-of-distribution examples.
However, the use of multiple labels have not been explored in these studies. Moreover, these approaches need customized training procedures from scratch and are not designed to apply for DNN classifiers in a post-hoc manner, as is done in $\alpha$-calibration.

\section{Experiments}
\label{sec:experiments}
We applied DNN classifiers and calibration methods to synthetic and real-world image data with label histograms, where the performance was evaluated with our proposed metrics.
Especially, we demonstrate the utility of $\alpha$-calibration in two applications: predictions on inter-rater label disagreement (DPEs) and posterior CPEs,
which we introduced in Section \ref{sec:cpe_dist_importance}.
Our implementation is available online
\footnote{\url{https://github.com/m1m0r1/lh_calib}}.

\subsection{Experimental setup}
\label{sec:experiment_setup}

\paragraph{Synthetic data}
We generated two synthetic image dataset: Mix-MNIST and Mix-CIFAR-10 from MNIST \cite{lecun2010mnist} and CIFAR-10 \cite{krizhevsky2009learning}, respectively.
We randomly selected half of the images to create mixed-up images from pairs and the other half were retained as original.
For each of the paired images, a random ratio that followed a uniform distribution $U(0, 1)$ was used for the mix-up and a class probability of multiple labels,
which were two or five in validation set.

\paragraph{MDS data}

We used a large-scale medical imaging dataset for myelodysplastic syndrome (MDS) \cite{sasada2018inter}, which contained over $90$ thousand hematopoietic cell images obtained from blood specimens from $499$ patients with MDS.
{
This study was carried out in collaboration with medical technologists
who mainly belonged to the Kyushu regional department of the Japanese Society for Laboratory Hematology. 
The use of peripheral blood smear samples for this study was approved by the ethics committee at Kumamoto University, and the study was performed in accordance with the Declaration of Helsinki.
}
For each of the cellular images, a mean of $5.67$ medical technologists annotated the cellular category from $22$ subtypes,
where accurate classification according to the current standard criterion was still challenging for technologists with expertise.

\paragraph{Compared methods}
We used DNN classifiers as base predictors (Raw) for CPEs,
where a three layered CNN architecture for Mix-MNIST and a VGG16-based one for Mixed-CIFAR-10 and MDS were used.
For CPE calibration, we adopted temperature scaling (ts), which was widely used for DNNs \cite{guo2017calibration}.
To predict CPE distributions, we used $\alpha$-calibration  and ensemble-based methods: Monte-Calro dropout (MCDO) \cite{gal2016dropout} and test-time augmentation (TTA) \cite{ayhan2018test}, which were both applicable to DNNs at prediction-time. Note that TTA was only applied for Mix-CIFAR-10 and MDS, in which we used data augmentation while training.
We also combined ts and/or $\alpha$-calibration with the ensemble-based methods in our experiments,
while some of their properties, including the invariance of accuracy for ts and that of CPEs for $\alpha$-calibration, were not retained for these combinations.
The details of the network architectures and parameters were described in Appendix \ref{aps:results_cpes}.
Considering a constraint of the high labeling costs with experts in the medical domain,
we focused on scenarios that
training instances were singly labeled 
and multiple labels were only available for the validation and test set.

\subsection{Results}

\begin{table*}[!h]
\caption{Evaluations of disagreement probability estimates (DPEs)}
\vspace{.2cm}
\label{tb:eval_DPEs}
\centering
\begin{tabular}{lcccccccccc}
\toprule
&
\multicolumn{2}{c}{Mix-MNIST(2)}
&
\multicolumn{2}{c}{Mix-MNIST(5)}
&
\multicolumn{2}{c}{Mix-CIFAR-10(2)}
&
\multicolumn{2}{c}{Mix-CIFAR-10(5)}
&
\multicolumn{2}{c}{MDS}
\\
\cmidrule(lr){2-3}
\cmidrule(lr){4-5}
\cmidrule(lr){6-7}
\cmidrule(lr){8-9}
\cmidrule(lr){10-11}
Method 
 & $\hL_{\phi^D}$ & $\hCE_{\phi^D}$ %
 & $\hL_{\phi^D}$ & $\hCE_{\phi^D}$ %
 & $\hL_{\phi^D}$ & $\hCE_{\phi^D}$ %
 & $\hL_{\phi^D}$ & $\hCE_{\phi^D}$ %
 & $\hL_{\phi^D}$ & $\hCE_{\phi^D}$ %
\\
\midrule
Raw & .0755 & .0782     & .0755 & .0782 & .1521 & .2541 & .1521 & .2541 & .1477 & .0628
\\
Raw+$\alpha$ & {\bf .0724} & {\bf .0524}        & {\bf .0724} & {\bf .0531}     & {\bf .0880} & {\bf .0357}
        & {\bf .0877} & {\bf .0322}     & {\bf .1454} & {\bf .0406}
\\
\cmidrule[.05mm]{1-11}
Raw+ts & .0775 & .0933  & .0773 & .0923 & .1968 & .3310 & .1978 & .3324 & .1482 & .0663
\\
Raw+ts+$\alpha$ & {\bf .0699} & {\bf .0344}     & {\bf .0702} & {\bf .0379}     & {\bf .0863} & {\bf .0208}
        & {\bf .0861} & {\bf \underline{.0164}} & {\bf .1445} & {\bf .0261}
\\
\midrule
MCDO & .0749 & .0728    & .0749 & .0728 & .1518 & .2539 & .1518 & .2539 & .1470 & .0562
\\
MCDO+$\alpha$ & {\bf .0700} & {\bf .0277}       & {\bf .0700} & {\bf .0285}     & {\bf .0873} & {\bf .0275}
        & {\bf .0870} & {\bf .0241}     & {\bf .1450} & {\bf .0346}
\\
\cmidrule[.05mm]{1-11}
MCDO+ts & .0805 & .1062 & .0802 & .1049 & .1996 & .3353 & .2002 & .3362 & .1479 & .0635
\\
MCDO+ts+$\alpha$ & {\bf \underline{.0690}} & {\bf \underline{.0155}}    & {\bf \underline{.0691}} & {\bf \underline{.0188}}        & {\bf .0863} & {\bf \underline{.0196}} & {\bf .0861} & {\bf .0167}     & {\bf .1442} & {\bf \underline{.0186}}
\\
\midrule
TTA & NA & NA & NA & NA & .1677 & .2856 & .1677 & .2856 & .1441 & .0488
\\
TTA+$\alpha$ & NA & NA & NA & NA & {\bf \underline{.0860}} & {\bf .0245} & {\bf \underline{.0857}} & {\bf .0231} & {\bf .1428} & {\bf .0334}
\\
\cmidrule[.05mm]{1-11}
TTA+ts & NA & NA & NA & NA & .2421 & .3957 & .2430 & .3968 & .1448 & .0553
\\
TTA+ts+$\alpha$ & NA & NA & NA & NA & {\bf .0872} & {\bf .0467}     & {\bf .0870} & {\bf .0398}     & {\bf \underline{.1422}} & {\bf .0197}
\\
\bottomrule
\end{tabular}
\end{table*}

\begin{table*}[!h]
\caption{Epistemic losses ($\hEL$) 
of prior and posterior class probability estimates (CPEs)}
\vspace{.2cm}
\label{tb:posterior_CPEs}
\centering
\begin{tabular}{lcccccccccc}
\toprule
&
\multicolumn{2}{c}{Mix-MNIST(2)}
&
\multicolumn{2}{c}{Mix-MNIST(5)}
&
\multicolumn{2}{c}{Mix-CIFAR-10(2)}
&
\multicolumn{2}{c}{Mix-CIFAR-10(5)}
&
\multicolumn{2}{c}{MDS}
\\
\cmidrule(lr){2-3}
\cmidrule(lr){4-5}
\cmidrule(lr){6-7}
\cmidrule(lr){8-9}
\cmidrule(lr){10-11}
Method 
 & Prior & Post. %
 & Prior & Post. %
 & Prior & Post. %
 & Prior & Post. %
 & Prior & Post. %
\\
\midrule
Raw+$\alpha$ & .0388 & {\bf \underline{.0292}}  & .0388 & {\bf \underline{.0292}}       & .2504 & {\bf .0709}      & .2504 & {\bf .0693}   & .0435 & {\bf .0354}
\\
Raw+ts+$\alpha$ & \underline{.0379} & {\bf .0293}       & \underline{.0379} & {\bf .0298}       & .2423 & {\bf \underline{.0682}}  & .2423 & {\bf \underline{.0676}}       & .0430 & {\bf \underline{.0352}}
\\
\cmidrule[.05mm]{1-11}
MCDO & .0395 & {\bf .0391}      & .0395 & {\bf .0391}   & .2473 & {\bf .2471}   & .2473 & {\bf .2471}   & {\bf .0437} & .0440
\\
MCDO+ts & .0410 & {\bf .0406}   & .0404 & {\bf .0400}   & .2428 & {\bf .2425}   & .2431 & {\bf .2428}   & {\bf .0435} & .0438
\\
TTA & NA & NA   & NA & NA       & \underline{.2216} & {\bf .2184}       & \underline{.2216} & {\bf .2184} & {\bf \underline{.0378}} & .0382
\\
TTA+ts & NA & NA        & NA & NA       & .2452 & {\bf .2428}   & .2451 & {\bf .2427}   & {\bf .0379} & .0383
\\
\bottomrule
\end{tabular}
\end{table*}

\paragraph{Class probability estimates}
We observed a superior performance of TTA in accuracy and $\hEL$ and a consistent improvement in $\hEL$ and $\hCL$ with ts,
for all the dataset.
The details are found in Appendix \ref{aps:results_cpes}.
By using $\hEL$, 
the relative performance of CPE predictions had been clearer than $\hL$ since the irreducible loss was subtracted from $\hL$.
We include additional MDS experiments using full labels in Appendix \ref{aps:mds_full_results}, which show similar tendencies but improved overall performance.

\paragraph{Disagreement probability estimates}
We compared squared loss and calibration error of DPEs for combinations of prediction schemes (Table \ref{tb:eval_DPEs}
\footnote{The mechanisms that cause the degradation of DPEs for Raw+ts are discussed in Appendix \ref{aps:discussion_ts_for_dpe}.}).
Notably, $\alpha$-calibration combined with any methods showed a consistent and significant  decrease %
in both $\hL_{\phi^{\mathrm{D}}}$ and $\hCE_{\phi^{\mathrm{D}}}$,
in contrast to MCDO and TTA, which had not solely improved the metrics.
The improved calibration was also visually confirmed with a reliability diagram of DPEs for MDS data (Fig. \ref{fig:mds_disagree1}).

\paragraph{Posterior CPEs}
We evaluated posterior CPEs, 
when one expert label per instance was available for test set.
This task required a reasonable prior CPE model to update belief with additional label information.
We summarize $\hEL$ metrics of prior and posterior CPEs 
for combinations of dataset and prediction methods in Table \ref{tb:posterior_CPEs}.
As we expected, $\alpha$-calibration significantly decreased losses of the posterior CPEs, i.e., they got closer to the ideal CPEs than the prior CPEs.
While TTA showed superior performance for the prior CPEs,
the utility of the ensemble-based methods for the posterior computation was limited.
We omit experiments on MCDO and TTA combined with $\alpha$-calibration,
as they require further approximation to compute posteriors.

\section{Conclusion}
In this work, we have developed a framework for evaluating probabilistic classifiers under ground truth label uncertainty, accompanied with useful metrics that benefited from unbiased or debiased properties.
The framework was also generalized to evaluate higher-order statistics, including inter-rater disagreements.
As a reliable distribution over class probability estimates (CPEs) is essential for higher-order prediction tasks, 
such as disagreement probability estimates (DPEs) and posterior CPEs,
we have devised a post-hoc calibration method called $\alpha$-calibration,
which directly used multiple annotations to improve CPE distributions.
Throughout empirical experiments with synthetic and  real-world medical image data,
we have demonstrated the utility of the evaluation metrics in performance comparisons
and a substantial improvement in DPEs
and posterior CPEs with $\alpha$-calibration.

\section*{Acknowledgements}
IS was supported by JSPS KAKENHI Grant Number 20H04239 Japan. 
This work was supported by RAIDEN computing system at RIKEN AIP center.

\bibliographystyle{plainnat}      %
\bibliography{calib_paper}      %

\appendix
\onecolumn

\aistatstitle{
	Diagnostic Uncertainty Calibration: Towards Reliable Machine Predictions in Medical Domain (Appendix)
}

\section{Background for proper loss decomposition}
\label{aps:psr_decomp}
We describe the proofs for proper loss decompositions introduced in section
\ref{sec:background}.
\subsection{Decomposition of proper losses and calibration}

As we have described in section \ref{sec:proper_loss_decomp},
the expected loss $L$ can be decomposed as follows:
\begin{theorem}[\citet{degroot1983comparison}]
\label{th:decomp_cl_rl}
The expectation of proper loss $\ell$ is decomposed into non-negative terms as follows:
\begin{align}
L = \CL + \RL,
\quad \text{where} \quad
\begin{cases}
\CL := \E[d(C, Z)],
& (\text{Calibration Loss})
\\
\RL := \E[d(Y, C)],
& (\text{Refinement Loss})
\end{cases}
\end{align}
where a calibration map $C := \E[Y | Z] \in \Delta^{K-1}$ is defined as in Def. \ref{def:calibration}.
\end{theorem}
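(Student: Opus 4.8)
The plan is to exhibit the divergence $d(q,z) := \ell(q,z) - \ell(q,q)$ as a Bregman divergence and then invoke the associated ``bias--variance'' (Pythagorean) identity, applied conditionally on $Z$. Two elementary facts about a proper loss $\ell$ do the work. First, the convention $\ell(q,z) = \E_{Y \sim \mathrm{Cat}(q)}[\ell(Y,z)] = \sum_k q_k\, \ell(e_k, z)$ shows that $q \mapsto \ell(q,z)$ is \emph{affine} in $q$ for each fixed $z$; hence for any $e^K$-valued random vector $V$ with mean $\mu := \E[V] \in \Delta^{K-1}$ one has $\E[\ell(V,z)] = \ell(\mu,z)$. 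Second, properness gives $d(q,z) \ge 0$ for all $q,z$ (with equality iff $z=q$ when $\ell$ is strictly proper), which yields the non-negativity claim for free.

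Next I would establish the key identity: for $V$, $\mu$ as above and any fixed $z \in \Delta^{K-1}$,
\begin{align}
\E[d(V,z)] = \E[d(V,\mu)] + d(\mu,z).
\end{align}
Writing $d(V,z) = \ell(V,z) - \ell(V,V)$ and $d(V,\mu) = \ell(V,\mu) - \ell(V,V)$ and taking expectations, the term $\E[\ell(V,V)]$ is common to both and cancels, while affinity replaces $\E[\ell(V,z)]$ by $\ell(\mu,z)$ and $\E[\ell(V,\mu)]$ by $\ell(\mu,\mu)$; what remains is $\ell(\mu,z) - \ell(\mu,\mu) = d(\mu,z)$. (This is precisely the Bregman Pythagorean relation for the potential $-\ell(q,q)$, but the affine-in-$q$ computation sidesteps any differentiability assumptions.)

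Finally I would apply this conditionally on $Z$. Taking $V = Y$ under the conditional law $P(Y \mid Z)$, its conditional mean is the calibration map $\mu = \E[Y \mid Z] = C$, and $z = Z$ is fixed given the conditioning; so the identity gives $\E[d(Y,Z) \mid Z] = \E[d(Y,C) \mid Z] + d(C,Z)$ almost surely. Integrating over the law of $Z$ and using the tower property yields $L = \E[d(Y,C)] + \E[d(C,Z)] = \RL + \CL$, and each summand is non-negative by the second fact above. The only points requiring care are bookkeeping rather than substance: $C = \E[Y\mid Z]$ is a well-defined $\Delta^{K-1}$-valued conditional expectation because $Y \in e^K$ and the simplex is closed and convex, and the conditional identity may be integrated against the distribution of $Z$ once $L = \E[d(Y,Z)]$ is assumed finite. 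I do not expect a genuine obstacle here; the crux is the observation that properness forces $\ell(\cdot, z)$ to be affine, after which the decomposition collapses to a one-line cancellation.
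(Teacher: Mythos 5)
Your proposal is correct and is essentially the paper's own argument: the paper inserts and subtracts $\ell(Y,C)$, conditions on $Z$, and uses the affinity relation $\E[\ell(Y,\cdot)\mid Z]=\ell(C,\cdot)$ to turn the first term into $\E[d(C,Z)]$, which is exactly the computation your conditional Pythagorean identity packages (with $V=Y$, $\mu=C$, $z=Z$). The only difference is presentational — you state the cancellation as a standalone Bregman-type lemma before conditioning, whereas the paper carries it out inline — so there is nothing substantive to reconcile.
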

\begin{proof}
\begin{align*}
L &= \E[d(Y, Z)]
\\
&= \E[\ell(Y, Z) - \ell(Y, Y)]
\\
&= 
 \E[\ell(Y, Z) - \ell(Y, C)]
+ \E[\ell(Y, C) - \ell(Y, Y)]
\\
&= 
\E[\E[\ell(Y, Z) - \ell(Y, C)|Z]]
+ \E[d(Y, C)],
\end{align*}
where the second term equals to the RL term.
For the first term,
as we have defined $\ell(q, Z) := \E_{Y \sim \mathrm{Cat}(q)}[\ell(Y, Z)]$
when $q \in \Delta^{K-1}$, the subterms can be rewritten as follows:
\begin{align*}
\E[\ell(Y, Z) | Z]
&= \E_{Y \sim \mathrm{Cat}(\E[Y|Z])}[\ell(Y, Z)]
= \ell(\E[Y | Z], Z) = \ell(C, Z),
\\
\E[\ell(Y, C) | Z]
&= \E_{Y \sim \mathrm{Cat}(\E[Y|Z])}[\ell(Y, C)]
= \ell(\E[Y|Z], C)
= \ell(C, C).
\end{align*}
Hence, the first term equals to the CL term as follows:
\begin{align*}
\E[\E[\ell(Y, Z) - \ell(Y, C)|Z]]
&=
\E[\E[\ell(C, Z) - \ell(C, C)|Z]]
=
\E[d(C, Z)].
\end{align*}
\end{proof}
Note that we have followed the terminology used in \citet{kull2015novel}. The terms $\CL$ and $\RL$ are also referred to as reliability \cite{brocker2012estimating,ferro2012bias} and sharpness \cite{degroot1983comparison}.

\subsection{Decomposition of proper losses under label uncertainty}

As we have described in section \ref{sec:proper_loss_decomp},
if $Y$ follows an instance-wise categorical distribution with a probability vector, {\it i.e.}, $Y|X \sim \mathrm{Cat}(Q), \,\text{where}\, Q(X) \in \Delta^{K-1}$,
$L$ can be further decomposed as follows:
\begin{theorem}[\citet{kull2015novel}]
\label{th:kull_decomp}
The expectation of proper loss $\ell$ is decomposed into non-negative terms as follows:
\begin{align}
L &= \EL+ \IL
= \underset{\EL}{\underbrace{\CL + \GL}} + \IL,
\\
& \quad \text{where} \quad
\begin{cases}
\EL = \E[d(Q, Z)], & (\text{Epistemic Loss})
\\
\IL = \E[d(Y, Q)], & (\text{Irreducible Loss})
\\
\GL = \E[d(Q, C)].
& (\text{Grouping Loss})
\end{cases}
\end{align}
Note that the CL term is the same form as in equation \eqref{eq:decomp1}.
\end{theorem}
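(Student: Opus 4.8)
The plan is to establish the two equalities $L = \EL + \IL$ and $\EL = \CL + \GL$ separately, each by an ``add-and-subtract'' manipulation followed by conditioning on the appropriate variable, mirroring the proof of Theorem \ref{th:decomp_cl_rl}. The single algebraic fact I will lean on throughout is that, under the convention $\ell(q, z) = \E_{Y \sim \mathrm{Cat}(q)}[\ell(Y, z)] = \sum_k q_k \ell(e_k, z)$, the map $q \mapsto \ell(q, z)$ is \emph{linear} in its first argument for every fixed $z$; this is what lets me commute conditional expectations through $\ell$.

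First I would split off the irreducible loss. Starting from $L = \E[\ell(Y,Z) - \ell(Y,Y)]$, I insert $\pm\, \ell(Y, Q)$ to write $L = \E[\ell(Y,Z) - \ell(Y,Q)] + \E[d(Y,Q)]$, where the second term is already $\IL$ by definition. For the first term I condition on $X$: since $Z = f(X)$ and $Q = Q(X)$ are both deterministic given $X$, and $Y \mid X \sim \mathrm{Cat}(Q)$, linearity gives $\E[\ell(Y, Z) \mid X] = \ell(Q, Z)$ and $\E[\ell(Y, Q) \mid X] = \ell(Q, Q)$. Taking the outer expectation yields $\E[\ell(Y,Z) - \ell(Y,Q)] = \E[d(Q,Z)] = \EL$, so $L = \EL + \IL$.

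Next I would decompose $\EL$ into calibration and grouping loss. The key observation is that $\EL = \E[d(Q, Z)]$ has exactly the form of a proper-loss risk in which $Q \in \Delta^{K-1}$ plays the role of the label, so I can reuse the decomposition argument of Theorem \ref{th:decomp_cl_rl} (its proof only used linearity of $\ell(\cdot, z)$, which holds for simplex-valued labels as well as one-hot ones). The crucial compatibility fact is that the calibration map for $Q$ coincides with the original one: since $Z = f(X)$ is a deterministic function of $X$, the tower property gives $\E[Q \mid Z] = \E[\E[Y \mid X] \mid Z] = \E[Y \mid Z] = C$. Inserting $\pm\, \ell(Q, C)$ into $\EL$ and conditioning on $Z$ --- using $\E[Q \mid Z] = C$ and linearity to obtain $\E[\ell(Q, Z)\mid Z] = \ell(C, Z)$ and $\E[\ell(Q, C)\mid Z] = \ell(C, C)$ --- then gives $\EL = \E[d(C,Z)] + \E[d(Q,C)] = \CL + \GL$. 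Combining the two displays produces $L = \CL + \GL + \IL$, and non-negativity of every term is immediate because $d(a,b) \geq 0$ for a proper loss.

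I expect the main obstacle to be the identity $\E[Q \mid Z] = C$ together with the clean separation of the two conditionings: the $\IL$ split must be taken conditional on $X$ (where $Y$ is genuinely random through $\mathrm{Cat}(Q)$), whereas the $\CL/\GL$ split must be taken conditional on the coarser $Z$ (where $Q$ is the random object whose conditional mean is $C$). Getting these conditionings and the linearity-based interchange right is the only subtle point; the remaining manipulations are routine and parallel the already-proven $L = \CL + \RL$ decomposition.
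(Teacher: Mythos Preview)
Your proposal is correct and follows essentially the same add-and-subtract route as the paper's proof: split off $\IL$ first, then split $\EL$ into $\CL + \GL$ by conditioning on $Z$ and using linearity of $\ell(\cdot,z)$. The only cosmetic difference is that in the first step you condition on $X$ whereas the paper conditions on $Q$; your choice is arguably cleaner since $Z$ and $Q$ are both $X$-measurable, and your explicit statement of the tower-property identity $\E[Q\mid Z]=\E[\E[Y\mid X]\mid Z]=\E[Y\mid Z]=C$ makes the $\CL/\GL$ step more transparent than the paper's presentation.
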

\begin{proof}
We first prove the first equality.
\begin{align*}
L
&= \E[d(Y, Z)]
\\
&= \E[\ell(Y, Z) - \ell(Y, Y)]
\\
&=
 \E[\ell(Y, Z) - \ell(Y, Q)]
+ \E[\ell(Y, Q) - \ell(Y, Y)]
\\
&=
 \E[\E[\ell(Y, Z) - \ell(Y, Q) | Q]]
+ \E[d(Y, Q)],
\end{align*}
where the second term is $\IL$.
As similar to the proof of Theorem \ref{th:decomp_cl_rl}, the following relations hold:
\begin{align*}
\E[\ell(Y, Z) | Q] &= \E_{Y \sim Q}[\ell(Y, Z) | Q] = \E[\ell(Q, Z) | Q],
\\
\E[\ell(Y, Q) | Q] &= \E_{Y \sim Q}[\ell(Y, Q) | Q] = \E[\ell(Q, Q) | Q].
\end{align*}
Therefore, the first term turns out to be $\EL$ as follows:
\begin{align*}
\E[\E[\ell(Y, Z) - \ell(Y, Q) | Q]]
= \E[\E[\ell(Q, Z) - \ell(Q, Q) | Q]]
= \E[d(Q, Z)].
\end{align*}
This term is further decomposed as follows:
\begin{align*}
\E[d(Q, Z)]
&= \E[\ell(Q, Z) - \ell(Q, C)] + \E[\ell(Q, C) - \ell(Q, Q)]
\\
&= \E[\E[\ell(Q, Z) - \ell(Q, C) | Z]] + \E[d(Q, C)],
\end{align*}
where the second term is $\GL$.
To show that the first term is $\CL$, we have to prove the following results:
\begin{align*}
\E[\ell(Q, Z) | Z] &= \E[\ell(C, Z) | Z],
\\
\E[\ell(Q, C) | Z] &= \E[\ell(C, C) | Z].
\end{align*}
As these are proven with the same procedure,
we only show the proof for the first equality.
\begin{align*}
\E[\ell(Q, Z) | Z]
&= \E[\E_{Y \sim Q} \ell(Y, Z) | Z]
\\
&= \E[\sum_k \ell(Y_k, Z) Q_k | Z]
\\
&= \E[\sum_k \ell(Y_k, Z) \E[Y|Z]_k | Z]
\\
&= \E[\sum_k \ell(Y_k, Z) C_k | Z]
\\
&= \E[\E_{Y \sim C}[\ell(Y, Z)] | Z]
= \E[\ell(C, Z) | Z].
\end{align*}
\end{proof}
Theorems and proofs for more generalized decompositions are found in \citet{kull2015novel}.

\section{Details on CPE evaluation metrics with label histograms}
\label{aps:cpe_details}

We describe
supplementary information for Section \ref{sec:cpe_lh}:
proofs for propositions, additional discussion, and experimental setup.

\subsection{Unbiased Estimators of {$\LPS$}}
We give a proof for Prop. \ref{prop:ps_unbias}.
\begin{repprop}{prop:ps_unbias} [Unbiased estimator of expected squared loss]
\label{prop:ps_unbias2}
The following estimator of $\LPS$ is unbiased.
\begin{align}
\hLPS
&:= \frac{1}{W} \sum_{i=1}^N w_i 
\sum_{k=1}^K \left[ (\hmu_{ik} - z_{ik})^2
+ \hmu_{ik} (1 - \hmu_{ik}) \right],
\end{align}
where $\hmu_{ik} := y_{ik}/n_i, \, w_i \geq 0$, and $W := \sum_{i=1}^N w_i$.
\end{repprop}
\begin{proof}
We begin with the following plugin estimator of $\LPS$ with an instance $i$:
\begin{align}
\aLPS[{i}] := \sum_k \frac{1}{n_i} \sum_{j=1}^{n_i} (y_{ik}^{(j)} - z_{ik})^2.
\end{align}
By taking an expectation with respect to $y_{ik}$ and $z_{ik}$,
\begin{align*}
\E[\aLPS[{i}]]
&= \frac{1}{n_i} \sum_{j=1}^{n_i} \sum_k \E[(y_{ik}^{(j)} - z_{ik})^2]
= \sum_k \E[(Y_k - Z_k)^2] = \E[\Verts{Y - Z}^2] = \LPS.
\end{align*}
Therefore, $\aLPS[i]$ is an unbiased estimator of $\LPS$.
Intuitively, an estimator combined with $N$ instances is expected to have a lower variance than that with a single instance.
A linear combination of $\aLPS[1], \dots, \aLPS[N]$ is also an unbiased estimator as follows:
\begin{align*}
\frac{1}{W} \E[\sum_i w_i \aLPS[i]] = \frac{1}{W} \sum_i w_i \E[\aLPS[i]] = \LPS,
\end{align*}
where $\sum_i w_i \geq 0, W := \sum_i w_i$.
The proof completes by transforming $\aLPS[i]$ as follows:
\begin{align*}
\aLPS[{i}] 
&= \sum_k \frac{1}{n_i} \sum_{j=1}^{n_i} \pars{y_{ik}^{(j)} - 2y_{ik}^{(j)} z_{ik} + z_{ik}^2}
= \sum_k
\pars{\hmu_{ik} - 2\hmu_{ik} z_{ik} + z_{ik}^2}
= \sum_k 
\hmu_{ik} (1 - \hmu_{ik}) + \pars{\hmu_{ik} - z_{ik}}^2.
\end{align*}
\end{proof}

\paragraph{Determination of weights $\bm{w}$}
For the undetermined weights $w_1, \dots, w_N$, we have argued that the optimal weights would be constant when the numbers of annotators $n_1, \dots, n_N$ were constant.
As we assume that each of an instance $i$ follows an independent categorical distribution with a parameter $Q_i \in \Delta^{K-1}$,
the variance of $\hLPS[k]$ is decomposed as follows:
\begin{align}
\V[\hLPS[k]] = \sum_{i=1}^N \pars{\frac{w_i}{W}}^2 \V[\hLPS[{ik}]].
\end{align}
Thus, if $n_i$ is constant for all the instance, the optimal weights are found as follows:
\begin{align}
\min_{\bm{w}'} \sum_{i=1}^N w_i'^{2}
\quad s.t. \quad
\sum_{i=1}^N w_i' = 1,\quad \forall i, \, w_i' \geq 0,
\end{align}
By taking a derivative with respect to $\bm{w}'$ 
of $\sum_{i=1} w_i'^{2} + \lambda (\sum_{i=1}^N w_i' - 1)$,
the solution is $\forall i,\, w'_i = 1/N$.

For cases with varying numbers of annotators per instance,
it is not straightforward to determine the optimal weights.
From a standard result of variance formulas, the variance of $\hLPS$ is further decomposed as follows:
\begin{align*}
\V[\hLPS[{ik}]] 
&:= \E[\V[\hLPS[{ik}] | X_i]] + \V[\E[\hLPS[{ik}] | X_i]]
\\
&= \frac{1}{n_i} \E[\sigma_{\sq,k}^2(X)] + \V[\mu_{\sq,k}(X)],
\end{align*}
where $\mu_{\sq,k}(X) := \E[(Y_k - Z_k)^2 | X]$ 
and $\sigma^2_{\sq,k}(X) := \V[(Y_k - Z_k)^2 | X]$.
Therefore, the optimal weights depend on the ratio of the first and the second terms.
If the first term is negligible compared to the second term,
using the constant weights regardless of $n_i$ would be optimal.
In contrast, if the first term is dominant,
$w_i \propto n_i$ would be optimal.
However, the ratio of the two terms depend on the dataset and is not determined a priori.
In this work, we have used $w_i = 1$.

\subsection{Unbiased Estimators of EL}

In this section, we give a proof for Prop. \ref{prop:el_unbias}.
\begin{define}[Plugin estimator of EL]
\begin{align}
\aEL := \frac{1}{N} \sum_{i=1}^N \sum_{k=1}^{K} (\hmu_{ik} - z_{ik})^2.
\end{align}
\end{define}
\begin{repprop}{prop:el_unbias}[Unbiased estimator of EL]
\label{prop:el_unbias2}
The following estimator of EL is unbiased.
\begin{align}
\hEL :=
 \aEL
- \frac{1}{N} \sum_{i} \sum_k \frac{1}{n_i -1} \hmu_{ik} (1 - \hmu_{ik}).
\end{align}
\end{repprop}
\begin{proof}
The term EL is decomposed as $\EL = \sum_k \EL_k$, where
\begin{align*}
\EL_k = \E[(Q_k - Z_k)^2] = 
\E[Q_k^2] - 2 \E[Q_k Z_k] + \E[Z_k^2].
\end{align*}
As for the terms in the plugin estimator $\aEL = \sum_k \aEL_k$, we can show that
\begin{align*}
\E[\hmu_{ik} z_{ik}] &= \frac{1}{n_i} n_i \E[Q_k Z_k] = \E[Q_k Z_k],
\\
\E[z_{ik}^2] &= \E[Z_k^2].
\end{align*}
The bias of $\aEL_k$ comes from $\hmu_{ik}^2$,
that corresponds to $\E[Q_k^2]$ term.
We can replace $\hmu_{ik}^2$ by an unbiased estimator as follows:
\begin{align}
\hmu_{ik}^2
\to
\frac{1}{n_i (n_i - 1)} \sum_{j=1}^{n_i} \sum_{j'=1: j' \neq j}^{n_i} y_{ik}^{(j)} y_{ik}^{(j')},
\end{align}
where an expectation of each of the summand of {\it r.h.s.} is $\E[Q_k^2]$,
hence that of {\it r.h.s.} is also be $\E[Q_k^2]$.
Consequently, the difference of 
the plugin estimator $\aEL_k$ 
and 
the unbiased estimator $\hEL_k$ 
is calculated as follows:
\begin{align*}
\aEL_k - \hEL_k
&= \frac{1}{N} \sum_i \left[
  \hmu_i^2
  - \frac{1}{n_i (n_i - 1)} \sum_{j=1}^{n_i} \sum_{j'=1: j' \neq j}^{n_i} y_{ik}^{(j)} y_{ik}^{(j')}
\right]
\\
&= \frac{1}{N} \sum_i \left\{
  \hmu_i^2
  - \frac{1}{n_i (n_i - 1)} \left[ \pars{\sum_{j=1}^{n_i} y_{ik}^{(j)}}^2 - \sum_{j=1}^{n_i} y_{ik}^{(j)}
 \right]
\right\}
\\
&= \frac{1}{N} \sum_i \left\{
  \hmu_i^2
  - \frac{1}{n_i (n_i - 1)} \left( n_i^2 \hmu_{ik}^2 - n_i \hmu_{ik} \right)
\right\}
\\
&= \frac{1}{N} \sum_i \frac{1}{n_i - 1} \hmu_{ik} \pars{1 - \hmu_{ik}}.
\end{align*}
\end{proof}

\subsection{Debiased Estimators of CL}
In this section, we give a proof for Prop. \ref{prop:cl_debias}.
\begin{define}[Plugin estimator of $\CL$]
\begin{align}
\aCL_{kb}(\mathcal{B}_k) &:= \frac{\abs{I_{kb}}}{N} (\bar{c}_{kb} - \bar{z}_{kb})^2,
\quad \text{where} \quad
\bar{c}_{kb} := \frac{\sum_{i \in I_{kb}} \hmu_{ik}}{\abs{I_{kb}}},
\quad
\bar{z}_{kb} := \frac{\sum_{i \in I_{kb}} z_{ik}}{\abs{I_{kb}}},
\end{align}
where $I_{kb} := \{ i \,|\, z_{ik} \in \mathcal{B}_{kb} \}$ 
denotes an index set of $b$-th bin
and $\mathcal{B}_{kb} := [\zeta_{kb}, \zeta_{kb+1})$ is a $b$-th interval of the binning scheme $\mathcal{B}_k$.
\end{define}
\begin{repprop}{prop:cl_debias}[Debiased estimator of $\CL_{kb}$]
\label{prop:cl_debias2}
The plugin estimator of $\CL_{kb}$
is debiased with the following estimator:
\begin{align}
\hCL_{kb}(\mathcal{B}_k) &:=
\aCL_{kb}(\mathcal{B}_k) -
\frac{\abs{I_{kb}}}{N} \frac{\bar{\sigma}_{kb}^2}{\abs{I_{kb}} - 1},
\quad \text{where} \quad
\bar{\sigma}_{kb}^2 := \frac{1}{\abs{I_{kb}}} \sum_{i \in I_{kb}} \hmu_{ik}^2 - \pars{\frac{1}{\abs{I_{kb}}} \sum_{i \in I_{kb}} \hmu_{ik}}^2.
\end{align}
\end{repprop}
\begin{proof}
The bias of the plugin estimator $\aCL_{kb}$ is explained in a similar manner as in the case of $\aEL_k$.
Concretely, a bias of the term $\bar{c}_{kb}^2$ for an estimation of $\bar{C}_{kb}^2$ can be reduced with a following replacement:
\begin{align}
\bar{c}^2_{kb} \to \frac{1}{\abs{I_{kb}}(\abs{I_{kb}} - 1)} \sum_{i \in I_{kb}} \sum_{i' \in I_{kb} : i' \neq i} \hmu_{ik} \hmu_{i'k}.
\label{eq:calib_replace}
\end{align}
Note that the {\it r.h.s.} term is only defined for the bin with $\abs{I_{kb}} > 1$.
In this case, a conditional expectation of the term is as follows:
\begin{align*}
\E[ \frac{\abs{I_{kb}}}{N} \cdot r.h.s., \abs{I_{kb}} > 1 ]
&= \sum_{m=2}^{N} 
\frac{\E[ \abs{I_{kb}} = m ] m}{N}
\frac{1}{m(m - 1)} \sum_{i \in I_{kb}} \sum_{i' \in I_{kb} : i' \neq i} 
\E[ \hmu_{ik} \hmu_{i'k} | \abs{I_{kb}} = m ]
\\
&= \sum_{m=2}^{N} 
\frac{\E[ \abs{I_{kb}} = m ] m}{N}
\bar{C}_{kb}^2
= \frac{\E[\abs{I_{kb}} | \abs{I_{kb}} > 1]}{N}
\bar{C}_{kb}^2
\\
&= \pars{\E[Z_k \in \mathcal{B}_{kb}] - \eta_{kb}}
\bar{C}_{kb}^2,
\end{align*}
where $\eta_{kb} = \E[I_{kb} \leq 1]/ N$,
which can be reduced by increasing $N$ relative to the bin size.
When we use the $r.h.s. = 0$ for $\abs{I_{kb}} \leq 1$,
$\eta_{kb} \bar{C}_{kb}^2$ is a remained bias term after 
the replacement in equation \eqref{eq:calib_replace}.
When we define an estimator $\hCL_{kb}$ as a modified $\aCL_{kb}$
that has been applied the replacement \eqref{eq:calib_replace},
a debiasing amount of the bias with the modification is calculated as follows:
\begin{align*}
\aCL_{kb} - \hCL_{kb}
&= \frac{\abs{I_{kb}}}{N} \left\{
  \bar{c}^2_{kb} - \frac{1}{\abs{I_{kb}}(\abs{I_{kb}} - 1)} \sum_{i \in I_{kb}} \sum_{i' \in I_{kb} : i' \neq i} \hmu_{ik} \hmu_{i'k}
\right\}
\\
&= \frac{\abs{I_{kb}}}{N} \left\{
   \bar{c}^2_{kb} -
  \frac{1}{\abs{I_{kb}}(\abs{I_{kb}} - 1)} \left[
    \pars{\abs{I_{kb}} \bar{c}_{kb}}^2 - \sum_{i \in I_{kb}} \hmu_{ik}^2
  \right]
\right\}
\\
&= \frac{\abs{I_{kb}}}{N} \left\{
  \frac{1}{\abs{I_{kb}} - 1} \left[
  \frac{1}{\abs{I_{kb}}} \sum_{i \in I_{kb}} \hmu_{ik}^2
  - \bar{c}_{kb}^2
  \right]
\right\}
= \frac{\abs{I_{kb}}}{N}
  \frac{\bar{\sigma}_{kb}^2}{\abs{I_{kb}} - 1}.
\end{align*}
\end{proof}
Note that as we mentioned in the proof, the bin-wise debiasing cannot be applied for the bins with $\abs{I_{kb}} \leq 1$.
We use $0$ for the estimators with such bins.
For single-labeled data,
the remaining bias from this limitation is also analyzed in the literature \cite{ferro2012bias}.

\subsection{Definition and estimators of dispersion loss}
\label{aps:DL}

We consider to estimate the remainder term $\EL - \CL$.
As we present in equation \eqref{eq:decomp2},
$\EL$ is decomposed into $\CL + \GL$,
in which $\GL$ is a loss relating to the lack of predictive sharpness.
However, the approximate calibration loss $\CL(\mathcal{B})$
is known to be underestimated \cite{vaicenavicius2019evaluating,kumar2019verified}
in relation to the coarseness of the selected binning scheme $\mathcal{B}$.
On the other hand, $\EL$ does not suffer from a resolution of $\mathcal{B}$.
Instead of estimating the $\GL$ term for binned predictions with $\mathcal{B}$,
we use the difference term $\DL(\mathcal{B}) := \EL - \CL(\mathcal{B})$,
which we call dispersion loss.
The non-negativity of $\DL(\mathcal{B})$ is shown as follows.
\begin{prop}[Non-negativity of dispersion loss]
Given a binning scheme $\mathcal{B}$, a dispersion loss
for class $k$ is decomposed into bin-wise components, where each term takes a non-negative value:
\begin{align}
\DL_k(\mathcal{B})
&:= \EL_k - \CL_k(\mathcal{B})
= \sum_b \DL_{kb}(\mathcal{B}),
\\
\DL_{kb}(\mathcal{B})
&:= \E[\E[\{(Q_k - \bar{C}_{kb}) - (Z_{k} - \bar{Z}_{kb})\}^2 | Z_{k} \in \mathcal{B}_{kb}]] \geq 0.
\label{eq:dl_form}
\end{align}
\end{prop}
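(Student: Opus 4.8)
The plan is to establish the bin-wise identity by a completion-of-square argument performed inside each bin, and then read off non-negativity from the fact that every resulting summand is an expectation of a square. Since we are working under squared loss (as fixed in Section~\ref{sec:cpe_lh}), I would first recall from Theorem~\ref{th:kull_decomp} that $\EL_k = \E[(Q_k - Z_k)^2]$, and that $\CL_k(\mathcal{B}) = \sum_b \CL_{kb}(\mathcal{B})$ with $\CL_{kb}(\mathcal{B}) = \E[\E[(\bar{C}_{kb} - \bar{Z}_{kb})^2 \mid Z_k \in \mathcal{B}_{kb}]]$, the outer expectation carrying the factor $P(Z_k \in \mathcal{B}_{kb})$. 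Because the intervals $\mathcal{B}_{kb}$ partition the range of $Z_k$, I can write $\EL_k = \sum_b \E[\E[(Q_k - Z_k)^2 \mid Z_k \in \mathcal{B}_{kb}]]$, so it suffices to prove, for each $b$, the per-bin identity $\E[\E[(Q_k - Z_k)^2 \mid Z_k \in \mathcal{B}_{kb}]] = \DL_{kb}(\mathcal{B}) + \CL_{kb}(\mathcal{B})$.

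Next I would fix $b$ and decompose, inside the conditional expectation given $Z_k \in \mathcal{B}_{kb}$, $Q_k - Z_k = (Q_k - \bar{C}_{kb}) - (Z_k - \bar{Z}_{kb}) + (\bar{C}_{kb} - \bar{Z}_{kb})$, then square. The square of the first two terms is exactly the integrand of $\DL_{kb}(\mathcal{B})$ as written in \eqref{eq:dl_form}; the square of the constant $\bar{C}_{kb} - \bar{Z}_{kb}$ contributes $\CL_{kb}(\mathcal{B})$ after applying the outer expectation; and the remaining cross term is $2(\bar{C}_{kb} - \bar{Z}_{kb})\,\E[(Q_k - \bar{C}_{kb}) - (Z_k - \bar{Z}_{kb}) \mid Z_k \in \mathcal{B}_{kb}]$. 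I would then show this cross term vanishes: by definition $\E[Z_k \mid Z_k \in \mathcal{B}_{kb}] = \bar{Z}_{kb}$, and $\E[Q_k \mid Z_k \in \mathcal{B}_{kb}] = \bar{C}_{kb}$. The second equality is the one point needing care: since $Z = f(X)$, the event $\{Z_k \in \mathcal{B}_{kb}\}$ is $\sigma(X)$-measurable, so under the instance-wise assumption $Y \mid X \sim \mathrm{Cat}(Q)$ the tower property gives $\bar{C}_{kb} = \E[Y_k \mid Z_k \in \mathcal{B}_{kb}] = \E[\E[Y_k \mid X] \mid Z_k \in \mathcal{B}_{kb}] = \E[Q_k \mid Z_k \in \mathcal{B}_{kb}]$. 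Summing the per-bin identities over $b$ then yields $\EL_k - \CL_k(\mathcal{B}) = \sum_b \DL_{kb}(\mathcal{B})$, which is the claimed decomposition with $\DL_{kb}$ in the stated form.

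For non-negativity I would simply note that $\DL_{kb}(\mathcal{B})$ is a bin-probability-weighted conditional mean of the square $\{(Q_k - \bar{C}_{kb}) - (Z_k - \bar{Z}_{kb})\}^2 \geq 0$, hence $\DL_{kb}(\mathcal{B}) \geq 0$, and therefore $\DL_k(\mathcal{B}) = \sum_b \DL_{kb}(\mathcal{B}) \geq 0$. The only genuinely delicate step, and the one I would spell out explicitly rather than treat as routine, is the identity $\E[Q_k \mid Z_k \in \mathcal{B}_{kb}] = \bar{C}_{kb}$: it is what annihilates the cross term and makes the decomposition exact, and it relies on both the instance-wise categorical modeling of $Y$ and the measurability of bin membership with respect to $X$. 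Everything else is the same kind of algebra already used in the proofs of Theorems~\ref{th:decomp_cl_rl} and~\ref{th:kull_decomp}, so I would keep those parts brief.
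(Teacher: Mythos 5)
Your proposal is correct and follows essentially the same route as the paper's proof: both reduce to the per-bin completion-of-square around the conditional means $\bar{C}_{kb}$ and $\bar{Z}_{kb}$, with the cross term annihilated by the identity $\E[Q_k - Z_k \mid Z_k \in \mathcal{B}_{kb}] = \bar{C}_{kb} - \bar{Z}_{kb}$, and non-negativity read off from the resulting expectation of a square. The only difference is that you justify $\E[Q_k \mid Z_k \in \mathcal{B}_{kb}] = \bar{C}_{kb}$ explicitly via the tower property and the $\sigma(X)$-measurability of bin membership, whereas the paper states this identity without comment; that added care is welcome but does not change the argument.
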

\begin{proof}
From the definition of $\DL_k(\mathcal{B})$,
\begin{align*}
\DL_{k}(\mathcal{B})
&:= \EL_k - \CL_k(\mathcal{B})
\\
&= \E[(Q_k - Z_k)^2] - \sum_b \E[\E[(\bar{C}_{kb} - \bar{Z}_{kb})^2 
\,|\, Z_{k} \in \mathcal{B}_{kb}]]
\\
&= \sum_b \E[\E[(Q_k - Z_k)^2 - (\bar{C}_{kb} - \bar{Z}_{kb})^2 
\,|\, Z_{k} \in \mathcal{B}_{kb}]]
\\
&= \sum_b \DL_{kb}(\mathcal{B}),
\\
&\quad \text{where} \quad
\DL_{kb}(\mathcal{B}) := \E[\E[(Q_k - Z_k)^2 - (\bar{C}_{kb} - \bar{Z}_{kb})^2 
\,|\, Z_{k} \in \mathcal{B}_{kb}]].
\end{align*}
By noting that
$\bar{C}_{kb} - \bar{Z}_{kb}
= \E[Q_{k} - Z_{k} | Z_{k} \in \mathcal{B}_{kb}]$,
the last term is further transformed
as follows:
\begin{align*}
\DL_{kb}(\mathcal{B}) 
&= \E[\E[(Q_k - Z_k)^2 - (\bar{C}_{kb} - \bar{Z}_{kb})^2 
\,|\, Z_{k} \in \mathcal{B}_{kb}]]
\\
&= \E[\E[(Q_k - Z_k)^2 
- 2 (Q_k - Z_k) (\bar{C}_{kb} - \bar{Z}_{kb})
+ (\bar{C}_{kb} - \bar{Z}_{kb})^2
\,|\, Z_{k} \in \mathcal{B}_{kb}]]
\\
&= \E[\E[
\{(Q_k - Z_k) - (\bar{C}_{kb} - \bar{Z}_{kb})\}^2
\,|\, Z_{k} \in \mathcal{B}_{kb}]]
\\
&= \E[\E[
\{(Q_k - \bar{C}_{kb}) - (Z_k - \bar{Z}_{kb})\}^2
\,|\, Z_{k} \in \mathcal{B}_{kb}]].
\end{align*}
Then, the last term is apparently $\geq 0$.
\end{proof}
From equation \eqref{eq:dl_form},
the DL term can be interpreted as the average of the bin-wise overdispersion of the true class probability $Q_k$, which is unaccounted for by the deviation of $Z_k$.
For single-labeled cases,
similar argument is found in \cite{stephenson2008two}.
The plugin and debiased estimators of $\DL$ are derived from those of $\EL$ and $\CL$, respectively.

By using the plugin and the debiased estimators of $\EL$ and $\CL$,
those estimators of $\DL$ are defined as follows:
\begin{define}[Plugin / debiased estimators of dispersion loss]
\begin{align}
\aDL_{kb}(\mathcal{B})
&= \frac{1}{N} \sum_{i \in I_{kb}} \{
(\hmu_{ik} - \bar{c}_{kb}) - (z_{ik} - \bar{z}_{kb})
\}^2,
\\
\hDL_{kb}(\mathcal{B})
&= \aDL_{kb}(\mathcal{B}) 
- \frac{1}{N} \sum_{i \in I_{kb}} \left(
  \frac{1}{n_i-1} \hmu_{ik} (1 - \hmu_{ik})
  - \frac{\bar{\sigma}_{kb}^2}{\abs{I_{kb}} - 1}
  \right).
\label{eq:ap_dl_debias}
\end{align}
\end{define}

\subsection{Experimental setup for debiasing effects of {$\EL$} and  {$\CL$} terms}
\label{aps:el_cl_debias_details}

The details of the experimental setup for Section \ref{sec:el_cl_debias} are described.
We experimented on evalutions of a perfect predictor that indicated correct instance-wise CPEs,
using synthetic binary labels with varying instance sizes: from $100$ to $10,000$.
For each instance, the positive probability for label generation
was drawn from a uniform distribution $U(0, 1)$,
and two or five labels were generated in an i.i.d. manner
following a Binomial distribution with the corresponding probability.
Since the predictor indicated the correct probability,
$\EL$ and $\CL$ would be zero in expectation.
For a binning scheme $\mathcal{B}$ of the estimators, we adopted $15$ equally-spaced binning,
which was regularly used to evaluate calibration errors \cite{guo2017calibration}.

\section{Details on higher-order statistics evaluation}
\label{aps:higher-order}

The details and proofs for the statements in section \ref{sec:higher-order} are described.
Let $X \in \mathcal{X}$ be an input feature and $\{Y^{(j)} \in e^K \}_{j=1}^n$ be $n$ distinct labels for the same instance.
We define a symmetric categorical statistics $\phi: e^{K \times n} \to e^M (M \geq 2)$ for the $n$ labels.
For the case of $M=2$, $\phi$ can be equivalently represented as $\phi: e^{K \times n} \to \{0, 1\}$, and we use this definition for the successive discussion.
In our experiments,
we particularly focus on a disagreement between paired labels $\phi^D = \mathbb{I}[Y^{(1)} \neq Y^{(2)}]$ 
as predictive target.

Consider a probability prediction $\varphi: \mathcal{X} \to [0, 1]$ for statistics $\phi: e^{K \times n} \to \{0, 1\}$,
a strictly proper loss $\ell: \{0, 1\} \times [0, 1] \to \mathbb{R}$
encourages $\varphi(X)$ to approach the right probability $P(\phi(Y^{(1)}, \dots, Y^{(n)}) | X)$ in expectation.
We use (one dimensional) squared loss $\ell(\phi, \varphi) = (\phi - \varphi)^2$ in our evaluation.
The expected loss is as follows:
\begin{define}[Expected squared loss for $\phi$ and $\varphi$]
\begin{align}
L_\phi := \E[(\phi - \varphi)^2],
\end{align}
where the expectation is taken over the random variables $X$ and $Y^{(1)}, \dots, Y^{(n)}$.
\end{define}
Note that $L_\phi$ for an empirical distribution is equivalent to Brier score of $\phi$ and $\varphi$.
A decomposition of $L_{\phi}$ into $\CL_\phi$ and $\RL_\phi$ 
is readily available by applying Theorem \ref{th:decomp_cl_rl}.
\begin{align}
L_{\phi} &:= \E[(\phi - \varphi)^2]
= \underset{\CL_{\phi}}{\underbrace{\E[(\E[\phi|\varphi] - \varphi)^2]}}
+ \underset{\RL_{\phi}}{\underbrace{\E[(\phi - \E[\phi|\varphi])^2]}}.
\end{align}

We will derive the estimators of $L_{\phi}$ and $\CL_\phi$ as evaluation metrics.
However, the number of labels per instance is $n_i \geq n$ 
in general\footnote{We omit an instance with $n_i < n$ where $\phi$ cannot be calculated with distinct $n$ labels.},
which results in multiple inconsistent statistics $\phi$ for the same instance.
The problem can be solved with similar treatments as in the evaluation of CPEs.

As we stated in section \ref{sec:higher-order},
an unbiased estimator of the mean statistics for each instance $\mu_{\phi,i} := \E[\phi | X = x_i]$ is a useful building block in the estimation of {$L_{\phi}$} and $\CL_{\phi}$.
Recall that we assume
a conditional independence of an arbitrary number of labels given an input feature, {\it i.e.}, $Y^{(1)}, \dots, Y^{(n_i)} | X \underset{i.i.d.}{\sim} \mathrm{Cat}(Q(X))$,
$\mu_{\phi,i}$ is estimated as follows:
\begin{theorem}[Unbiased estimator of $\mu_{\phi,i}$]
For an instance $i$ with $n_i$ labels obtained in a conditional i.i.d. manner, an unbiased estimator of the conditional mean $\mu_{\phi,i}$ is given as follows:
\begin{align}
\hmu_{\phi,i}
&:= \binom{n_i}{n}^{-1}
    \sum_{j \in \mathrm{Comb}(n_i, n)} \phi({y}_i^{(j_1)}, \dots, {y}_i^{(j_n)}),
\label{eq:phi_mu}
\end{align}
where $\mathrm{Comb}(n_i, n)$ denotes the distinct subset of size $n$ drawn from $\{1, \dots, n_i\}$ without replacement.
\end{theorem}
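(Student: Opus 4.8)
The plan is to recognize $\hmu_{\phi,i}$ as a \emph{U-statistic} of degree $n$ with kernel $\phi$, and then invoke the standard unbiasedness property of U-statistics. Concretely, fix instance $i$ and condition on $X = x_i$; under the stated conditional i.i.d.\ assumption, $y_i^{(1)}, \dots, y_i^{(n_i)}$ are i.i.d.\ draws from $\mathrm{Cat}(Q(x_i))$. Since $\phi$ is a symmetric function of its $n$ arguments, for any fixed index subset $j = \{j_1, \dots, j_n\} \in \mathrm{Comb}(n_i, n)$ the summand $\phi(y_i^{(j_1)}, \dots, y_i^{(j_n)})$ has conditional expectation
\begin{align*}
\E\bigl[\phi(y_i^{(j_1)}, \dots, y_i^{(j_n)}) \mid X = x_i\bigr]
= \E\bigl[\phi(Y^{(1)}, \dots, Y^{(n)}) \mid X = x_i\bigr]
= \mu_{\phi,i},
\end{align*}
because the joint law of any $n$ distinct coordinates coincides with that of $Y^{(1)}, \dots, Y^{(n)}$ (all i.i.d.). This single observation does the real work.

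From there the argument is pure linearity of expectation: averaging over the $\binom{n_i}{n}$ subsets,
\begin{align*}
\E[\hmu_{\phi,i} \mid X = x_i]
= \binom{n_i}{n}^{-1} \sum_{j \in \mathrm{Comb}(n_i, n)} \E\bigl[\phi(y_i^{(j_1)}, \dots, y_i^{(j_n)}) \mid X = x_i\bigr]
= \binom{n_i}{n}^{-1} \binom{n_i}{n}\, \mu_{\phi,i}
= \mu_{\phi,i}.
\end{align*}
Then I would take the outer expectation over $X$ (or simply note the identity holds pointwise in $x_i$, which is the stronger statement actually needed downstream). This is essentially a restatement of Hoeffding's classical result on U-statistics, which the excerpt already cites, so one may alternatively just cite \citet{hoeffding1948class} and present the two displays above as the self-contained verification.

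There is no genuine obstacle here; the only subtlety worth a sentence is the symmetry requirement on $\phi$. If $\phi$ were not symmetric, the summands indexed by \emph{sets} $j$ would be ill-defined (one would need ordered tuples), so the statement implicitly uses symmetry both to make the sum over $\mathrm{Comb}(n_i,n)$ meaningful and to identify each summand's expectation with $\mu_{\phi,i}$; for the disagreement kernel $\phi^{\mathrm{D}} = \mathbb{I}[Y^{(1)} \neq Y^{(2)}]$ symmetry is immediate. I would also remark in passing that the same U-statistic structure is what later yields an unbiased estimator of $\mu_{\phi,i}^2$ (needed for the debiased $\hCL_\phi$), via averaging $\phi$ over \emph{disjoint} pairs of size-$n$ subsets — but that belongs to the appendix material on $\hL_\phi$ and $\hCL_\phi$ rather than to this proof.
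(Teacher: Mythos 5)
Your proposal is correct and matches the paper's approach: the paper's proof is a one-line appeal to the fact that $\hmu_{\phi,i}$ is a U-statistic of the $n$-sample symmetric kernel $\phi$ in the sense of \citet{hoeffding1948class}, and your two displays are exactly the standard verification of that fact (each summand has conditional expectation $\mu_{\phi,i}$ by the conditional i.i.d.\ assumption and symmetry of $\phi$, then average by linearity). Your remark on why symmetry is needed is a fair, if minor, elaboration beyond what the paper states.
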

\begin{proof}
This is directly followed from the fact that $\hmu_{\phi,i}$ is a U-statistic of $n$-sample symmetric kernel function $\phi$ \cite{hoeffding1948class}.
\end{proof}

\subsection{Unbiased estimator of  {$L_{\phi}$}}

We give an unbiased of $L_\phi$ as follows:
\begin{theorem}[Unbiased estimator of {$L_{\phi}$}]
The following estimator is an unbiased estimator of $L_{\phi}$.
\begin{align}
\hL_{\phi}
&:= \frac{1}{N} \sum_i \hL_{\phi,i},
\quad \text{where} \quad
\hL_{\phi,i} := \binom{n_i}{n}^{-1} \sum_{j \in \mathrm{Comb}(n_i, n)} \pars{\phi({y}_i^{(j_1)}, \dots, {y}_i^{(j_n)}) - \varphi_{i}}^2.
\end{align}
\end{theorem}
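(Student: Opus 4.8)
The plan is to recognize $\hL_{\phi,i}$ as a \emph{conditional U-statistic} and then close the argument with the tower rule over the i.i.d.\ draws $x_i \sim P(X)$. First I would fix an instance $i$ and condition on $X_i = x_i$; under the standing assumption the labels $\{Y_i^{(j)}\}_{j=1}^{n_i}$ are conditionally i.i.d.\ from $\mathrm{Cat}(Q(x_i))$. Since $\varphi_i = \varphi(x_i)$ becomes a constant once $x_i$ is fixed, the map $h(v_1,\dots,v_n) := (\phi(v_1,\dots,v_n) - \varphi_i)^2$ is a symmetric function of its $n$ label arguments, because $\phi$ is symmetric by assumption. Hence $\hL_{\phi,i} = \binom{n_i}{n}^{-1} \sum_{j \in \mathrm{Comb}(n_i,n)} h(y_i^{(j_1)},\dots,y_i^{(j_n)})$ is precisely the U-statistic attached to the kernel $h$ on the sample $y_i^{(1)},\dots,y_i^{(n_i)}$, which requires $n_i \geq n$; instances with $n_i < n$ are discarded, as noted in the text.

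The key step is then the elementary observation that any size-$n$ subset of conditionally i.i.d.\ variables is again a tuple of conditionally i.i.d.\ variables with the same marginal law. Therefore, for every fixed index tuple $j \in \mathrm{Comb}(n_i,n)$,
\begin{align}
\E\big[\, h(Y_i^{(j_1)},\dots,Y_i^{(j_n)}) \mid X_i = x_i \,\big]
= \E\big[\, (\phi(Y^{(1)},\dots,Y^{(n)}) - \varphi(x_i))^2 \mid X = x_i \,\big]
=: L_{\phi,i}.
\end{align}
Averaging this identity over the $\binom{n_i}{n}$ deterministic index tuples (pure linearity of expectation; the collection of tuples is fixed, not random) yields $\E[\hL_{\phi,i} \mid X_i = x_i] = L_{\phi,i}$, i.e.\ $\E[\hL_{\phi,i} \mid X_i] = L_{\phi,X_i}$.

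Finally I would take the outer expectation over $X_i \sim P(X)$ and use that all instances share the same marginal: $\E[\hL_{\phi,i}] = \E_{X}[L_{\phi,X}] = \E[(\phi - \varphi)^2] = L_\phi$ for every $i$, so $\E[\hL_\phi] = \frac{1}{N}\sum_{i=1}^N \E[\hL_{\phi,i}] = L_\phi$. I do not expect a genuine obstacle: the only subtlety worth stating carefully is that conditioning on $x_i$ freezes $\varphi_i$ to a constant, which is exactly what makes $h$ a legitimate symmetric kernel and reduces the claim to the textbook unbiasedness of U-statistics combined with the tower property. In contrast to the $\CL_\phi$ estimator, no binning resolution enters and no variance control is needed.
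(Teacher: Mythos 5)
Your proof is correct and follows essentially the same route as the paper's: both recognize $\hL_{\phi,i}$ as a U-statistic of the symmetric kernel $(\phi(\cdot)-\varphi_i)^2$ once the prediction is frozen by conditioning, invoke unbiasedness of U-statistics per \citet{hoeffding1948class}, and finish with the tower property and linearity over the $N$ instances. The only (cosmetic) difference is that you condition on $X_i$ whereas the paper conditions on $\varphi_i$; your choice is marginally cleaner since the conditional i.i.d.\ assumption on the labels is stated given $X$, but the argument is the same.
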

\begin{proof}
We first confirm that, for each random variables $\phi(y_i^{(1)}, \dots, y_i^{(n)})$ and $\varphi_i$ of sample $i$,
\begin{align*}
\E[f(y_i^{(1)}, \dots y_i^{(n)}; \varphi_i) ] = L_{\phi},
\quad \text{where} \quad
f(y_i^{(1)}, \dots y_i^{(n)}; \varphi_i)
:= \pars{\phi(y_i^{(1)}, \dots, y_i^{(n)}) - \varphi_i}^2,
\end{align*}
is satisfied by definition.
As $f$ is an $n$-sample symmetric kernel of variables $y_i^{(1)}, \dots, y_i^{(n)}$,
$\hL_{\phi,i}$ is a U-statistic \cite{hoeffding1948class} of the kernel given $\varphi_i$ and also an unbiased estimator of $L_{\phi}$ as follows:
\begin{align}
\E[ \hL_{\phi,i} ]
=
\E[\E[ \hL_{\phi,i} | \varphi_i ]]
=
\E[\E[f(y_i^{(1)}, \dots y_i^{(n)}; \varphi_i) | \varphi_i ]]
= L_{\phi}.
\end{align}
Hence
\begin{align*}
\E[\hL_{\phi}] 
= \frac{1}{N} \sum_{i=1}^N \E[\hL_{\phi,i}] = L_{\phi}.
\end{align*}
\end{proof}

\subsection{Debiased estimator of $\CL_\phi(\mathcal{B})$}
Following the same discussion as the $\CL(\mathcal{B})$ term of CPEs,
we also consider a binning based approximation of $\CL_\phi$
stratified with a binning scheme $\mathcal{B}$ for predictive probability $\varphi \in [0, 1]$.
Then, the plugin estimator of $\CL_{\phi}(\mathcal{B})$ 
is defined as follows:
\begin{define}[Plugin estimator of $\CL_{\phi}$]
\begin{align}
\aCL_{\phi}(\mathcal{B}) 
&:= \sum_{b=1}^B \aCL_{\phi,b}(\mathcal{B}),
\\
\quad &\text{where} \quad
\aCL_{\phi,b}(\mathcal{B})
:= \frac{\abs{I_{\phi,b}}}{N} (\bar{c}_{\phi,b} - \bar{\varphi}_b)^2,
\quad
\bar{c}_{\phi,b} := \frac{\sum_{i \in I_{\phi,b}} \hmu_{\phi,i}}{\abs{I_{\phi,b}}},
\quad
\bar{\varphi}_{\phi,b} := \frac{\sum_{i \in I_{\phi,b}} \varphi_i}{\abs{I_{\phi,b}}}.
\end{align}
\end{define}
We again improve the plugin estimator with
the following debiased estimator $\hCL_{\phi,b}(\mathcal{B})$:
\begin{corollary}[Debiased estimator of $\CL_{\phi,b}$]
A plugin estimator $\aCL_{\phi,b}(\mathcal{B})$ of $\CL_{\phi,b}(\mathcal{B})$
is debiased to $\hCL_{\phi,b}(\mathcal{B})$ with a correction term as follows:
\begin{align}
\hCL_{\phi,b}(\mathcal{B})
&:=
\aCL_{\phi,b}(\mathcal{B})
- \frac{\abs{I_{\phi,b}}}{N} \frac{\bar{\sigma}_{\phi,b}^2}{\abs{I_{\phi,b}} - 1},
\\
\quad &\text{where} \quad
\bar{\sigma}_{\phi,b}^2
:= \frac{1}{\abs{I_{\phi,b}}} \sum_{i \in I_{\phi,b}} \hmu_{\phi,i}^2
- \pars{\frac{1}{\abs{I_{\phi,b}}} \sum_{i \in I_{\phi,b}} \hmu_{\phi,i}}^2.
\end{align}
Note that the estimator is only available for bins with $\abs{I_{\phi,b}} \geq 2$.
\end{corollary}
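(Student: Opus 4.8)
The plan is to reduce the claim to Proposition~\ref{prop:cl_debias} by a direct substitution and then verify that the substituted quantities satisfy the two properties that made that proof work. Concretely, under the dictionary $Y_k \mapsto \phi$, $Z_k \mapsto \varphi$, $Q_k \mapsto \mu_{\phi,i} := \E[\phi \mid X = x_i]$, and $\hmu_{ik} \mapsto \hmu_{\phi,i}$ (the U-statistic of equation~\eqref{eq:phi_mu}), the objects $\CL_{\phi,b}(\mathcal{B})$, $\aCL_{\phi,b}(\mathcal{B})$, and $I_{\phi,b}$ become the literal images of $\CL_{kb}(\mathcal{B}_k)$, $\aCL_{kb}(\mathcal{B}_k)$, and $I_{kb}$. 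So it would suffice to re-run the same chain of equalities, after which the only source of bias is, as in the CPE case, that $\E[\bar c_{\phi,b}^2] \neq \bar C_{\phi,b}^2$ with $\bar C_{\phi,b} := \E[\phi \mid \varphi \in \mathcal{B}_b]$.

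The step that needs checking is that $\hmu_{\phi,i}$ inherits the two features of $\hmu_{ik}$ used in the proof of Proposition~\ref{prop:cl_debias}: (i) conditional unbiasedness, $\E[\hmu_{\phi,i} \mid X_i] = \mu_{\phi,i}$, and (ii) for $i \neq i'$, conditional independence of $(\hmu_{\phi,i},\hmu_{\phi,i'})$ given $(X_i,X_{i'})$. Property~(ii) is immediate since $\hmu_{\phi,i}$ is a function of instance $i$'s labels only and instances are i.i.d.; property~(i) is exactly the U-statistic theorem preceding equation~\eqref{eq:phi_mu}. This is the one place where the ``U-statistic over $\binom{n_i}{n}$ overlapping size-$n$ subsets'' structure, rather than a plain average, has to be handled, and it is already taken care of. Given (i) and (ii), conditioning further on bin membership $\varphi_i \in \mathcal{B}_b$ (which depends on $X_i$ alone) keeps the $\{\hmu_{\phi,i}\}_{i \in I_{\phi,b}}$ i.i.d.\ with common mean $\bar C_{\phi,b}$, so the off-diagonal average $\tfrac{1}{\abs{I_{\phi,b}}(\abs{I_{\phi,b}}-1)}\sum_{i \neq i'} \hmu_{\phi,i}\hmu_{\phi,i'}$ is an unbiased replacement for $\bar c_{\phi,b}^2$ on bins with at least two elements.

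It then remains to substitute this replacement into $\aCL_{\phi,b}$ and simplify: with $m := \abs{I_{\phi,b}}$, the difference $\bar c_{\phi,b}^2 - \tfrac{1}{m(m-1)}\bigl[(m\bar c_{\phi,b})^2 - \sum_i \hmu_{\phi,i}^2\bigr]$ collapses, via the usual sample-variance identity, to $\tfrac{1}{m-1}\bigl(\tfrac{1}{m}\sum_i \hmu_{\phi,i}^2 - \bar c_{\phi,b}^2\bigr) = \tfrac{\bar\sigma_{\phi,b}^2}{m-1}$, which multiplied by $\tfrac{\abs{I_{\phi,b}}}{N}$ is exactly the stated correction term. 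Finally I would record the same caveat as in the CPE case: the replacement is undefined on bins with $\abs{I_{\phi,b}} \le 1$, so zeroing those contributions leaves a residual bias proportional to $\bar C_{\phi,b}^2$ times the (binning-dependent) chance of landing in a near-empty bin, which vanishes as $N \to \infty$ at fixed resolution --- hence ``debiased'' rather than exactly unbiased. The only genuinely non-routine point in the whole argument is confirming that $\hmu_{\phi,i}$, as a U-statistic, slots into the Proposition~\ref{prop:cl_debias} template; everything else is the algebra already performed there.
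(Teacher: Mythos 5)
Your proposal is correct and follows essentially the same route as the paper: reduce to the argument of Proposition~\ref{prop:cl_debias} by replacing $\bar c_{\phi,b}^2$ with the off-diagonal product average, justify unbiasedness of that replacement via the U-statistic property of $\hmu_{\phi,i}$, and recover the stated correction term by the same sample-variance algebra. Your explicit verification of the conditional unbiasedness and cross-instance independence of $\hmu_{\phi,i}$, and the remark on the residual bias from bins with $\abs{I_{\phi,b}} \leq 1$, only make explicit what the paper leaves implicit.
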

\begin{proof}
The proof follows a similar reasoning to Prop. \ref{prop:cl_debias2}.
We reduce the bias introduced with the term $\bar{c}_{\phi,b}^2$
by replacing the term with unbiased one for $\bar{C}_{\phi,b}^2 = \E[\phi | \varphi \in \mathcal{B}_{b}]^2$
as follows:
\begin{align*}
\bar{c}^2_{\phi,b} \to \frac{1}{\abs{I_{\phi,b}}(\abs{I_{\phi,b}} - 1)} \sum_{i \in I_{\phi,b}} \sum_{i' \in I_{\phi,b} : i' \neq i} \hmu_{\phi,i} \hmu_{\phi,i'},
\end{align*}
where $\hmu_{\phi,i}$ is defined in equation \eqref{eq:phi_mu}.
An improvement with the debiased estimator $\aCL_{\phi,b} - \hCL_{\phi,b}$
is also calculated with the same manner as in Prop. \ref{prop:cl_debias2}.
\end{proof}

\subsection{Summary of evaluation metrics introduced for label histograms}

In Table \ref{tb:evaluation-metrics},
we summarize evaluation metrics introduced for label histograms,
where {\it order} shows the required numbers of labels for each instance to define the metrics,
and {\it rater} represents those for estimating the metrics.

\begin{table*}[h!]
  \caption{Summary of evaluation metrics introduced for label histograms}
  \label{tb:evaluation-metrics}
  \vspace{.2cm}
  \centering
  \begin{tabular}{cllc}
    \toprule
    Order & Signature & Description & Rater \\
    \midrule
    $1$ & $\LPS = \EL + \IL$ & Expected squared loss of CPEs & $\geq 1$ \\
        & $\EL = \CL + \DL$ & Epistemic loss of CPEs & $\geq 2$ \\
        & $\CL = \CE^2$ & Calibration loss of CPEs & $\geq 1$ \\
        & $\DL$ & Dispersion loss of CPEs & $\geq 2$ \\
    \midrule
    $2$ & $L_{\phi^{\mathrm{D}}}$
				& Expected squared loss of DPEs & $\geq 2$
      \\
        & $\CL_{\phi^{\mathrm{D}}}$
				& Calibration loss of DPEs & $\geq 2$
			\\
    \bottomrule
  \end{tabular}
\end{table*}

\section{Details on post-hoc uncertainty calibration methods}

\subsection{CPE calibration methods based on linear transformations}
\label{aps:cpe_calib}

To complement Section \ref{sec:bg_calibration},
we summarize the formulation of CPE (class probability estimation) calibration that is based on linear transformations.
Let $x \in \mathcal{X}$ denotes an input data,
$u: \mathcal{X} \to \mathbb{R}^K$ denotes a DNN function that outputs a logit vector,
and $f(x) = \mathrm{softmax}(u(x)) \in \Delta^{K-1}$ denotes CPEs.
A common form of CPE calibration with linear transformations is given as follows:
\begin{align}
\widetilde{u}(x) &= W u(x) + b,
\label{eq:matrix_scaling}
\\
\widetilde{f}(x) &= \mathrm{softmax}(\widetilde{u}(x)),
\end{align}
where $\widetilde{u}(x)$ denotes transformed logits with parameters $W \in \mathbb{R}^{K \times K}$ and $b \in \mathbb{R}^K$,
and $\tilde{f}: \mathcal{X} \to \Delta^{K-1}$ denotes CPEs after calibration.

The most general form of equation \eqref{eq:matrix_scaling} is referred to as matrix scaling \cite{guo2017calibration,kull2019beyond}.
A version of that with a constraint $W = \mathrm{diag}(v), \, v \in \mathbb{R}^K$ 
and that with a further constraint $v = 1/t, \, t \in \mathbb{R}, b = 0$
are called vector and temperature scaling, respectively.
In particular, temperature scaling has a favorable property; it does not change the maximum predictive class of each instance, and hence neither the overall accuracy,
as the order of vector elements between $u$ and $\tilde{u}$ for each $x$ is unchanged.

For vector and matrix scaling, regularization terms are required to prevent over-fitting;
L2 regularization of $b$:
\begin{align}
\Omega_{\mathrm{L2}}(b) := \lambda_b \frac{1}{K} \sum_k b_k^2
\label{eq:calib_l2_reg}
\end{align}
is commonly used for vector scaling,
and off-diagonal and intercept regularization (ODIR):
\begin{align}
\Omega_{\mathrm{ODIR}}(W, b) := \lambda_w \frac{1}{K(K-1)} \sum_{k \neq k'} W_{kk'}^2
+ \lambda_b \frac{1}{K} \sum_k b_k^2
\label{eq:calib_odir}
\end{align}
is proposed for matrix scaling, which is used for improving class-wise calibration \cite{kull2019beyond}.

\subsection{Details on  {$\alpha$}-calibration}
\begin{figure*}[!t]
  \centering
  \rule[-.0cm]{0cm}{0cm}
  \begin{tabular}{cc}
      \includegraphics[width=1.0\linewidth]{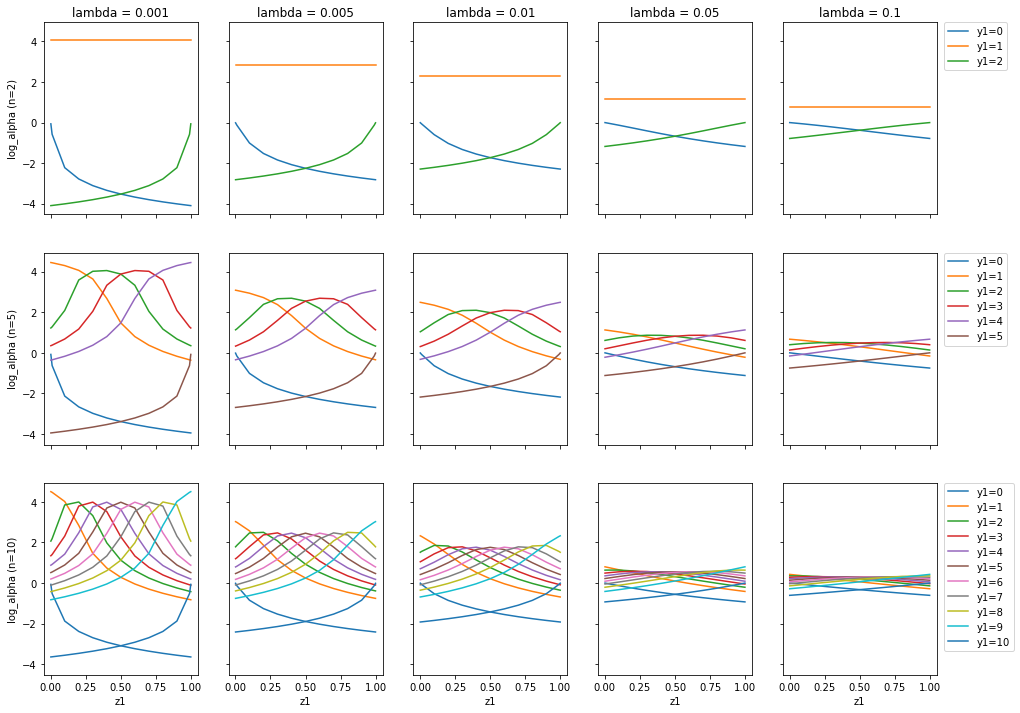}
      \\
      Optimal values of $\log \alpha_0$ for different values of the hyperparameter $\lambda_\alpha$
  \end{tabular}
  \rule[-.0cm]{4cm}{0cm}
  \caption{
    Optimal values of $\log \alpha_0(x_i)$ are numerically evaluated for binary class problems with the number of labels $n_i \in \{2, 5, 10\}$, label histograms $y_i$ with $0 \leq y_{i1} \leq n_i$,
    class probability estimations of the first class: $z1 \in \{.001, .01, .1, .2, \dots, .8, .9, .99, .999\}$,
    and the hyperparameter $\lambda_\alpha$ (Appendix \ref{aps:hypara_alpha0}).
    As expected, the range of $\log \alpha_0$ contains zero and gets narrower as $\lambda_\alpha$ increases.
    }
\label{fig:lambda_alphas}
\end{figure*}

\label{aps:alpha_calib}

\paragraph{Loss function}

For the loss function for $\alpha$-calibration, we use a variant of $\mathrm{NLL}$ in equation \eqref{eq:calib_nll}
as follows:
\begin{align}
- \frac{1}{\sum_i n_i} \sum_{i} \log \mathrm{DirMult}(y_i | \alpha_0(x_i) f(x_i))
+ \frac{\lambda_{\alpha}}{N} \sum_i (\log \alpha_0(x_i))^2,
\label{eq:alpha_loss}
\end{align}
where
$\mathrm{DirMult}(\cdot)$ denotes the Dirichlet multinomial distribution,
and a regularization term $\lambda_{\alpha} (\log \alpha_0)^2$ 
is introduced for stabilization purpose, which penalizes the deviation from $\alpha_0=1$ to both directions towards extreme concentrations of the mass: $P(\zeta | X) \to \delta(\zeta = f(X))$ with $\alpha_0 \to \infty$ or $P(\zeta | X) \to \sum_k \delta(\zeta = e_k) \E[\zeta_k]$ with $\alpha_0 \to 0$.
We employ $\lambda_{\alpha} = 0.005$ throughout this study.

\paragraph{Hyperparameter analysis for the optimal values of $\alpha_0$}
\label{aps:hypara_alpha0}

Intuitively, $\log \alpha_0$ is likely to get close to zero as the regularization coefficient $\lambda_{\alpha}$ increases.
If we regard $\log \alpha_0(x_i)$ as a free parameter,
the optimal value of $\log \alpha_0(x_i)$ only depends on the CPEs $f(x_i)$, the number of labels $n_i$ and the observed labels $y_i$ for each instance.
We assume that the number of labels $n_i$ is common for all the instances for simplicity.
The optimality condition for $\alpha_0$ is obtained 
by taking a derivative of equation \eqref{eq:alpha_loss} with respect to $\log \alpha_0(x_i)$ as follows:
\begin{align}
  0 &= - \frac{\alpha_0}{n_i} \left[ 
    \sum_k f_k \pars{ \psi(\alpha_0 f_k + y_{ik}) - \psi(\alpha_0 + n_i) }
    - \sum_k f_k \pars{ \psi(\alpha_0 f_k) - \psi(\alpha_0) }
  \right] + 2 \lambda_\alpha \log \alpha_0
  \nonumber
  \\
  &= - \frac{\alpha_0}{n_i} \pars{
    \sum_k \sum_{l=1}^{y_{ik}} \frac{f_k}{\alpha_0 f_k + l - 1}
    - \sum_{l=1}^{n_i} \frac{1}{\alpha_0 + l - 1}
  }
  + 2 \lambda_\alpha \log \alpha_0
  ,
  \label{eq:alpha_optimality}
\end{align}
where $\psi(\cdot)$ denotes the digamma function,
and a recurrence formula $\psi(s + 1) = \psi(s) + \frac{1}{s}$ is used for the derivation.

One can verify that divergences of the optimal $\log \alpha_0$ occur in some special cases with $\lambda_\alpha = 0$.
For example, if the labels are unanimous, i.e., $y_{i1} = n_i$, $n_i > 1$, and $f_1 < 1$, the r.h.s. of equation \eqref{eq:alpha_optimality} turns out to be positive as follows:
\begin{align*}
  - \frac{\alpha_0}{n_i} \sum_{l=1}^{n_i}  \pars{
      \frac{f_1}{\alpha_0 f_1 + l - 1} - \frac{1}{\alpha_0 + l - 1}
    } > 0,
\end{align*}
which implies that $\log \alpha_0 \to -\infty$.
In contrast, if $n_i = 2$, $K=2$, and $y_{i1} = y_{i2} = 1$,
the r.h.s of equation \eqref{eq:alpha_optimality} is calculated as follows:
\begin{align*}
  - \frac{\alpha_0}{n_i} \pars{\frac{2}{\alpha_0} - \frac{1}{\alpha_0} - \frac{1}{\alpha_0 + 1}}
  = - \frac{1}{n_i (\alpha_0 + 1)} < 0,
\end{align*}
which results in $\log \alpha_0 \to \infty$.

For a finite $\lambda_\alpha > 0$,
the optimal values of $\log \alpha_0$ can be numerically evaluated by Newton's method. 
We show these values in Fig. \ref{fig:lambda_alphas} for several conditions of binary class problems.
As expected, the range of the optimal $\log \alpha_0$ contains zero and gets narrower as $\lambda_\alpha$ increases.

\subsection{Proof for Theorem \ref{th:alpha_calib_analysis}}
\label{aps:alpha_calib_analysis_proof}

\begin{reptheorem}{th:alpha_calib_analysis}
There exist intervals for parameter $\alpha_0 \geq 0$, %
which improve task performances as follows.
\begin{enumerate}
\item For DPEs, $L_{\phi^{\mathrm{D}},G} \leq L_{\phi^{\mathrm{D}},G}^{(0)}$ holds when $(1 - 2u_Q + s_Z)/2(u_Q - s_Z) \leq \alpha_0$,
and $L_{\phi^{\mathrm{D}},G}$ takes the minimum value
when $\alpha_0 = (1 - u_Q)/(u_Q - s_Z)$,
if $u_Q > s_Z$ is satisfied.
\item For posterior CPEs, $\EL_G' \leq \EL_G'^{(0)}$ holds when $(1 - u_Q - \EL_G) / 2 \EL_G \leq \alpha_0$,
and $\EL_G'$ takes the minimum value when $\alpha_0 = (1 - u_Q) / \EL_G$,
if $\EL_G > 0$ is satisfied.
\end{enumerate}
Note that we denote $s_Z := \sum_k Z_k^2$, $u_Q := \E[\sum_k Q_k^2 | G]$, $v_Q := \V[Q_k | G]$,
and $\EL_G := \E[\sum_k (Z_k - Q_k)^2 | G]$.
The optimal $\alpha_0$ of both tasks coincide to be $\alpha_0 = (1 - u_Q) / v_Q$,
if CPEs match the true conditional class probabilities given $G$, {\it i.e.}, $Z = \E[Q | G]$.
\end{reptheorem}

\begin{proof}
We will omit the superscript $\mathrm{D}$ from $\phi^{\mathrm{D}}$ and $\varphi^{\mathrm{D}}$ for brevity.
First, we rewrite $\varphi$ and $Z'$ in equation \eqref{eq:alpha_dpe_posterior} as follows:
\begin{align}
\varphi
&= \gamma \pars{1 - \sum_k Z_k^2},
\quad
Z'_k
= \gamma Z_k + (1 - \gamma) Y_k,
\end{align}
where 
$\gamma := \alpha_0 / (\alpha_0 + 1)$. 
Note that $\gamma \in (0, 1)$ since $\alpha_0 \in (0, +\infty)$.
We also introduce the following variables:
\begin{align}
s_Q &:= \sum_k \E[Q_k | G]^2,
&
\bar{s}_Q &:= 1 - s_Q,
&
v_Q &:= \sum_k \V[Q_k | G],
\\
u_Q &:= \sum_k \E[Q_k^2 | G] = s_Q + v_Q,
&
\bar{u}_Q &:= 1 - u_Q, %
\\
s_Z &:= \sum_k Z_k^2,
&
\bar{s}_Z &:= 1 - s_Z,
\end{align}
where, all the variables reside within $[0, 1]$ %
since $Z, Q \in \Delta^{K-1}$.

{\bf 1. The first statement: DPE}

The objective function to be minimized is as follows:
\begin{align}
L_{\phi,G} = \E[(\phi - \varphi)^2 | G]
&=
\pars{\E[\phi | G] - \varphi}^2 + \mathbb{V}[\phi | G]
\\
&= \E[(\bar{u}_Q - \gamma \bar{s}_Z)^2] + \mathbb{V}[\phi | G],
\end{align}
where we use the relation $\E[\phi|G] = \bar{u}_Q$ and $\varphi = \gamma \bar{s}_Z$.
Note that only the first term is varied with $\alpha_0$,
and $L_{\phi,G} \to L_{\phi,G}^{(0)} \, (\gamma \to 1)$.
The condition for satisfying $L_{\phi,G} \leq L_{\phi,G}^{(0)}$ is found by solving
\begin{align}
0 = L_{\phi_G} - L_{\phi,G}^{(0)}
= (\gamma - 1) \bar{s}_Z \{(\gamma+1) \bar{s}_Z - 2 \bar{u}_Q\}
= (\gamma - 1) \bar{s}_Z \{\gamma \bar{s}_Z + \bar{s}_Z - 2 \bar{u}_Q\}.
\end{align}
$\bar{s}_Z = 0$ and $\gamma \to 1$
are trivial solutions that correspond to
a hard label prediction ({\it i.e.}, $Z \in e^K$)
and $\alpha_0 \to \infty$, respectively.
The remaining condition for $L_{\phi,G} \leq L_{\phi,G}^{(0)}$ is 
\begin{align}
\gamma \in [-1 + 2\bar{u}_Q/\bar{s}_Z, 1),
\label{eq:alpha_theorem1_1_gamma_left}
\end{align}
which is feasible when $\bar{u}_Q < \bar{s}_Z$, {\it i.e.}, $u_Q > s_Z$.
In this case,
\begin{align}
\gamma^* = \bar{u}_Q/\bar{s}_Z
\label{eq:alpha_theorem1_1_gamma_opt}
\end{align}
is the optimal solution for $\gamma$.
By using a relation $\alpha_0 = \gamma / (1 - \gamma)$ with equations \eqref{eq:alpha_theorem1_1_gamma_left}
and \eqref{eq:alpha_theorem1_1_gamma_opt},
the first statement of the theorem is obtained as follows:
\begin{align}
\alpha_0 &\geq \frac{-\bar{s}_Z + 2\bar{u}_Q}{2\bar{s}_Z - 2\bar{u}_Q} = \frac{1 - 2u_Q + s_Z}{2(u_Q - s_Z)},
\quad
\alpha_0^* = \frac{\bar{u}_Q}{\bar{s}_Z - \bar{u}_Q} = \frac{1 - u_Q}{u_Q - s_Z}.
\end{align}
If $Z = \E[Q | G]$ is satisfied, $s_Z = s_Q$ holds, and the above conditions become as follows:
\begin{align}
\alpha_0 &\geq \frac{1 - u_Q - v_Q}{2v_Q},
\quad
\alpha_0^* = \frac{1 - u_Q}{v_Q}.
\label{eq:alpha_theorem1_1_calib_results}
\end{align}

{\bf 2. The second statement: posterior CPE}

The objective for the second problem is as follows:
\begin{align}
\EL'_G 
& = \E[\sum_k (Z'_k - Q_k)^2 |G]
\nonumber
\\
&= \E[\sum_k (\gamma Z_k + (1 - \gamma) Y_k - Q_k)^2 | G]
\nonumber
\\
&= \E[\sum_k ((Z_k - Q_k) + (1 - \gamma) (Y_k - Z_k))^2 | G]
\nonumber
\\
&= \sum_k \E[(Z_k - Q_k)^2 | G]
+ 2(1 - \gamma) \E[(Z_k - Q_k) (Y_k - Z_k) | G]
+ (1 - \gamma)^2 \E[(Y_k - Z_k)^2 | G],
\label{eq:alpha_thorem1_2_EL_dash}
\end{align}
where the first term equals to $\EL_G$, and the second and third term are further transformed as follows:
\begin{align}
\sum_k \E[(Z_k - Q_k) (Y_k - Z_k) | G]
&= \sum_k \E[\E[(Z_k - Q_k) (Y_k - Z_k) | Q] | G]
\nonumber
\\
&= - \sum_k \E[\E[(Z_k - Q_k)^2 | Q] | G] = - \EL_G,
\\
\sum_k \E[(Y_k - Z_k)^2 | G]
&= \sum_k \E[\E[(Y_k - Z_k)^2 | Q] | G]
\nonumber
\\
&= \sum_k \E[\E[(Y_k - 2 Y_k Z_k + Z_k^2) | Q] | G]
\nonumber
\\
&= \sum_k \E[(Q_k - 2 Q_k Z_k + Z_k^2) | G]
\nonumber
\\
&= \sum_k \E[(Q_k(1 - Q_k) + (Q_k - Z_k)^2 | G]
= \bar{u}_Q + \EL_G.
\end{align}
Hence equation \eqref{eq:alpha_thorem1_2_EL_dash} can be written as
\begin{align}
\EL'_G
&= 
\EL_G - 2(1 - \gamma) \EL_G + (1 - \gamma)^2 (\EL_G + \bar{u}_Q).
\end{align}
The condition for satisfying $\EL'_G \leq EL_G$ is obtained by solving
\begin{align}
0 = \EL'_G - \EL_G 
&= (1 - \gamma) \left\{(1 - \gamma) (\EL_G + \bar{u}_Q) - 2\EL_G \right\}.
\end{align}
If $\EL_G = 0$, which means $Z_k = Q_k$ given $G$, $\gamma \to 1$ is optimal as expected.
For the other case, {\it i.e.}, $\EL_G > 0$,
$\gamma$ that satisfying $\EL'_G \leq \EL_G$ and the optimal $\gamma$ are
\begin{align}
\gamma &\in \left[\frac{\bar{u}_Q - \EL_G}{\bar{u}_Q + \EL_G}, 1\right),
\quad
\gamma^* = \frac{\bar{u}_Q}{\bar{u}_Q + \EL_G},
\end{align}
respectively.
By using $\alpha_0 = \gamma/(1 - \gamma)$,
the corresponding $\alpha_0$ and $\alpha_0^*$ are
\begin{align}
\alpha_0 &\geq \frac{\bar{u}_Q - \EL_G}{2\EL_G} = \frac{1 - u_Q - \EL_G}{2\EL_G},
\quad
\alpha_0^* = \frac{\bar{u}_Q}{\EL_G} = \frac{1 - u_Q}{\EL_G},
\end{align}
respectively,
which are the second statement of the theorem.
If $Z = \E[Q | G]$ is satisfied,
$\EL_G = v_Q$ holds, and the above conditions become as follows:
\begin{align}
\alpha_0 &\geq \frac{1 - u_Q - v_Q}{2v_Q},
\quad
\alpha_0^* = \frac{1 - u_Q}{v_Q}.
\label{eq:alpha_theorem1_2_calib_results}
\end{align}
Notably, these are the same conditions as the terms in equation \eqref{eq:alpha_theorem1_1_calib_results}, respectively.
\end{proof}

\subsection{Summary of DPE computations}
\label{aps:deriv_dpes}

We use $\alpha$-calibration, ensemble-based methods (MCDO and TTA),
and a combination of them for 
predicting DPEs as follows. %

\paragraph{$\alpha$-calibration}
\begin{align}
\hvarphi^D
&= 1 - \sum_k \int \zeta_k^2 \, \mathrm{Dir}(\zeta | \alpha_0 f) \, d\zeta
= \frac{\alpha_0}{\alpha_0 + 1} \pars{1 - \sum_k f_k^2}.
\end{align}

\paragraph{Ensemble-based methods}
\begin{align}
\hvarphi^D &= 
\frac{1}{S} \sum_{s=1}^S \pars{1 - \sum_k \pars{f_k^{(s)}}^2},
\end{align}
where, $f^{(s)}: \mathcal{X} \to \Delta^{(k-1)}$ is the $s$-th prediction of the ensemble,
and $S$ is the size of the ensemble.

\paragraph{Ensemble-based methods with $\alpha$-calibration}

Although an output $\alpha$ already represents a CPE distribution without ensemble-based methods: MCDO and TTA, it can be formally combined with these methods.
In such cases, we calculate the predictive probability $\hvarphi$ as follows:
\begin{align}
\hvarphi^{\mathrm{D}} 
 = \frac{1}{S} \sum_{s=1}^S %
 \left[
	\frac{\alpha_0^{(s)}}{\alpha_0^{(s)} + 1} \pars{1 - \sum_k \pars{f_k^{(s)}}^2}
 \right],
\end{align}
where $\alpha_0^{(s)}, f^{(s)}$ denote the $s$-th ensembles of $\alpha_0$ and $f$, respectively.

\subsection{Summary of posterior CPE computations} %
\label{aps:deriv_post_cpes}

We consider a task for updating CPE of instance $x \in \mathcal{X}$ after an expert annotation $y \in e^K$.
For this task,
the posterior CPE distribution $P_{\mathrm{model}}(\zeta | x, y)$
is computed from an original (prior) CPE distribution model $P_{\mathrm{model}}(\zeta | x)$ as follows:
\begin{align}
P_{\mathrm{model}}(\zeta | x, y) = \frac{P_{\mathrm{model}}(y, \zeta | x)}{P_{\mathrm{model}}(y | x)},
\end{align}
where
\begin{align}
P_{\mathrm{model}}(y, \zeta | x)
= P_{\mathrm{model}}(\zeta | x) \prod_{k=1}^K \zeta_{k}^{y_k}.
\end{align}
For the case with multiple test instances, we assume that a predictive model is factorized as follows:
\begin{align}
P_{\mathrm{model},N}(\zeta_{1:N} | x_{1:N})
&= \prod_{i=1}^N P_{\mathrm{model}}(\zeta_i | x_i).
\end{align}
In this case, the posterior of CPEs is also factorized as follows:
\begin{align}
P_{\mathrm{model},N}(\zeta_{1:N} | x_{1:N}, y_{1:N})
&= \frac{P_{\mathrm{model},N}(y_{1:N}, \zeta_{1:N} | x_{1:N})}
{P_{\mathrm{model},N}(y_{1:N} | x_{1:N})}
= \prod_{i=1}^N \frac{P_{\mathrm{model}}(y_i, \zeta_i | x_i)}{P_{\mathrm{model}}(y_i | x_i)}
= \prod_{i=1}^N P_{\mathrm{model}}(\zeta_i | x_i, y_i).
\end{align}

\paragraph{$\alpha$-calibration}
Prior and posterior CPE distributions are computed as follows:
\begin{align}
P_{\alpha}(\zeta | x) &= \mathrm{Dir}(\zeta | \alpha_0(x) f(x)),
\\
P_{\alpha}(\zeta | x, y) &= \mathrm{Dir}(\zeta | \alpha_0(x) f(x) + y).
\end{align}

\paragraph{Ensemble-based methods}
Prior and posterior CPE distributions are computed as follows:
\begin{align}
P_{\mathrm{ens.}}(\zeta | x)
&= \frac{1}{S} \sum_{s=1}^S \delta(\zeta - f^{(s)}(x)),
\\
P_{\mathrm{ens.}}(\zeta | x, y)
&= \frac{1}{W'} \sum_{s=1}^S w'_s \delta(\zeta - f^{(s)}(x)),
\end{align}
where
$S$ is the size of the ensemble, $f^{(s)}$ denotes the $s$-th CPEs of the ensemble,
$w'_s := \sum_s \prod_k f^{(s)}(x)_k^{y_k}$,
and 
$W' := \sum_{s=1}^S w'_s$.

We omit the cases of predictive models combining the ensemble-based methods and $\alpha$-calibration,
where the posterior computation requires further approximation.

\subsection{Discussion on conditional i.i.d. assumption of label generations for  {$\alpha$}-calibration} %
\label{aps:discussion_cond_iid}

At the beginning of section \ref{sec:cpe_lh},
we assume a conditional i.i.d distribution for labels $Y$ given input data $X$,
which is also a basis for $\alpha$-calibration.
We expect that the assumption roughly holds in typical scenarios, where experts are randomly assigned to each example.
However, $\alpha$-calibration may not be suitable for
counter-examples that break the assumption.
For instance, if two fixed experts with different policy annotate all examples, these two labels would be highly correlated.
In such a case, the disagreement probability between them may be up to one and exceeds the maximum 
possible value $\varphi^{\mathrm{D}}$ allowed in equation \eqref{eq:alpha_dpe_posterior},
where $\varphi^{\mathrm{D}}$ always decreases from the original value, which corresponds to $\alpha_0 \to \infty$, by $\alpha$-calibration.

\section{Experimental details}
\label{aps:experimental_details}

We used the Keras Framework with Tensorflow backend \citeA{chollet2015keras} for implementation.

\subsection{Preprocessing}
\paragraph{Mix-MNIST and CIFAR-10}
We generated two synthetic image dataset: Mix-MNIST and Mix-CIFAR-10 from MNIST \cite{lecun2010mnist} and CIFAR-10 \cite{krizhevsky2009learning}, respectively.
We randomly selected half of the images to create mixed-up images from pairs and the other half were retained as original.
For each of the paired images, a random ratio that followed a uniform distribution $U(0, 1)$ was used for the mix-up and a class probability of multiple labels,
which were two or five in validation set.
For Mix-MNIST (Mix-CIFAR-10),
the numbers of generated instances were $37,500$ $(30,000)$, $7,500$ $(7,500)$, and $7,500$ $(7,500)$ for
training, validation, and test set, respectively.

\paragraph{MDS Data}
We used $80,610$ blood cell images with a size of $360 \times 363$, which was a part of the dataset obtained in a study of myelodysplastic syndrome (MDS) \cite{sasada2018inter},
where most of the images showed a white blood cell in the center of the image.
For each image, a mean of $5.67$ medical technologists
annotated the cellular category from $22$ subtypes, in which six were anomalous types.
We partitioned the dataset into training, validation, and test set with $55,356$, $14,144$, and $11,110$ images, respectively, where each of the partition consisted of images from distinct patient groups.
Considering the high labeling cost with experts in the medical domain,
we focused on scenarios that training instances were singly labeled,
and multiple labels were only available for validation and test set.
The mean number of labels per instance for validation and test set were $5.79$ and $7.58$, respectively.

\subsection{Deep neural network architecture}

\paragraph{Mix-MNIST}
For Mix-MNIST dataset, we used a neural network architecture with three convolutional and two full connection layers.
Specifically, the network had the following stack:
\begin{itemize}
\item Conv. layer with $32$ channels, $5\times5$ kernel, and ReLU activation
\item Max pooling with with $2\times2$ kernel and same padding
\item Conv. layer with $64$ channels, $7\times7$ kernel, and ReLU activation
\item Max pooling with with $4\times4$ kernel and same padding
\item Conv. layer with $128$ channels, $11\times11$ kernel, and ReLU activation
\item Global average pooling with $128$ dim. output
\item Dropout with $50\%$ rate
\item Full connection layer with $K=3$ dim. output and softmax activation
\end{itemize}

\paragraph{Mix-CIFAR-10 and MDS}
We adopted a modified VGG16 architecture \citeA{simonyan2014very} 
as a base model, in which 
the full connection layers were removed,
and the last layer was a max-pooling with $512$ output dimensions.
On top of the base model,
we appended the following layers:
\begin{itemize}
\item Dropout with $50\%$ rate and $512$ dim. output
\item Full connection layer with $128$ dim. output and ReLU activation
\item Dropout with $50\%$ rate and $128$ dim. output
\item Full connection layer with $22$ dim. output and softmax activation
\end{itemize}

\subsection{Training}
We used the following loss function for training:
\begin{align}
\mathcal{L}(y, z) = - \frac{1}{\sum_{i=1}^N n_i} \sum_{i=1}^N \sum_{k=1}^K y_{ik} \log z_{ik},
\label{eq:training_loss}
\end{align}
which was equivalent to the negative log-likelihood for instance-wise multinomial observational model except for constant.
We used Adam optimizer \citeA{kingma2014adam} with a base learning rate of $0.001$.
Below, we summarize conditions specfic to each dataset.

\paragraph{Mix-MNIST}
We trained for a maximum of $100$ epochs with a minibatch size of $128$, applying early stopping with ten epochs patience for the validation loss improvement.
We used no data augmentation for Mix-MNIST.

\paragraph{Mix-CIFAR-10}
We trained for a maximum of $250$ epochs with a minibatch size of $128$, 
applying a variant of warm-up and multi-step decay scheduling \citeA{he2016deep} as follows:
\begin{itemize}
\item A warm-up with five epochs
\item A multi-step decay that multiplies the learning rate by $0.1$ at the end of $100$ and $150$ epochs
\end{itemize}
We selected the best weights in terms of validation loss.
While training,
we applied data augmentation with random combinations of the following transformations:
\begin{itemize}
\item Rotation within $-180$ to $180$ degrees
\item Width and height shift within $\pm10$ pixels
\item Horizontal flip
\end{itemize}

\paragraph{MDS}
We trained for a maximum of $200$ epochs with a minibatch size of $128$, 
applying a warm-up and multi-step decay scheduling as follows:
\begin{itemize}
\item A warm-up with five epochs
\item A multi-step decay that multiplies the learning rate by $0.1$ at the end of $50$, $100$, and $150$ epochs
\end{itemize}
We recorded training weights for every five epochs,
and selected the best weights in terms of validation loss.
While training,
we applied data augmentation with random combinations of the following transformations:
\begin{itemize}
\item Rotation within $-20$ to $20$ degrees
\item Width and height shift within $\pm5$ pixels
\item Horizontal flip
\end{itemize}
For each image, the center $224 \times 224$ portion is cropped from the image after the data augmentation.

\subsection{Post-hoc calibrations and predictions}

We applied temperature scaling for CPE calibration and $\alpha$-calibration for obtaining CPE distributions.
For both calibration methods,
we used validation set, which was splited into $80\%$ calibration set for training and $20\%$ calibration-validation (cv) set for the validation of calibration.
We trained for a maximum of $50$ epochs using Adam optimizer with a learning rate of $0.001$,
applying early stopping with ten epochs patience for the cv loss improvement.
The loss functions of equation \eqref{eq:training_loss} and \eqref{eq:alpha_loss} were
used for CPE- and $\alpha$-calibration, respectively.
For the feature layer that used for $\alpha$-calibration,
we chose the penultimate layer that corresponded to the last dropout layer in this experiment.
The training scheme is the same as that of the CPE calibration,
except for a loss function that we described in \ref{aps:alpha_calib}.
We also used ensemble-based methods: Monte-Calro dropout (MCDO) \cite{gal2016dropout} and Test-time augmentation (TTA) \cite{ayhan2018test} for CPE distribution predictions,
which were both applicable to DNNs at prediction-time,
where $20$ MC-samples were used for ensemble.
A data augmentation applied in TTA was the same as that used in training,
and we only applied TTA for Mix-CIFAR-10 and MDS data.

\section{Additional experiments}

\label{aps:additional_experiments}

\subsection{Evaluations of class probability estimates}
\label{aps:results_cpes}

We present evaluation results of class probability estimates (CPEs) for Mix-MNIST and Mix-CIFAR-10
in Table \ref{tb:mix_mnist_cpe} and \ref{tb:mix_cifar10_cpe}, respectively.
Overall, CPE measures were comparable between the same datasets with different validation labels (two and five).
By comparing $\hLPS$ and $\hEL$, the relative ratio of $\hEL$ against irreducible loss could be evaluated
which was much higher in Mix-CIFAR-10 than in Mix-MNIST.
Among Raw predictions, temperature scaling kept accuracy and showed a consistent improvement in $\hEL$ and $\hCE$
as expected.
While TTA showed a superior performance over MCDO and Raw predictions in accuracy, $\hLPS$ and $\hEL$,
the effect of calibration methods for CPEs with ensemble-based predictions was not consistent,
which might be because calibration was not ensemble-aware.

\begin{table*}[!hb]  %
  \caption{Evaluations of CPEs for Mix-MNIST}
  \vspace{.2cm}
  \label{tb:mix_mnist_cpe}
  \centering
  \begin{tabular}{lcccccccc}
    \toprule
&
\multicolumn{4}{c}{Mix-MNIST(2)}
&
\multicolumn{4}{c}{Mix-MNIST(5)}
\\
\cmidrule(r){2-5}
\cmidrule(r){6-9}
Method
 & Acc $\uparrow$ 
 & $\hLPS$ $\downarrow$
 & $\hEL$ $\downarrow$
 & $\hCE$ $\downarrow$
 & Acc $\uparrow$ 
 & $\hLPS$ $\downarrow$
 & $\hEL$ $\downarrow$
 & $\hCE$ $\downarrow$
\\
\midrule
Raw & {\bf .9629} & .1386 & .0388 & .0518       & {\bf .9629} & .1386 & .0388 & .0518
\\
Raw+$\alpha$ & {\bf .9629} & .1386 & .0388 & .0518      & {\bf .9629} & .1386 & .0388 & .0518
\\
Raw+ts & {\bf .9629} & {\bf \underline{.1376}} & {\bf \underline{.0379}} & {\bf .0473}  & {\bf .9629} & {\bf \underline{.1376}} & {\bf \underline{.0379}} & {\bf .0475}
\\
Raw+ts+$\alpha$ & {\bf .9629} & {\bf \underline{.1376}} & {\bf \underline{.0379}} & {\bf .0473} & {\bf .9629} & {\bf \underline{.1376}} & {\bf \underline{.0379}} & {\bf .0475}
\\
\midrule
MCDO & {\bf \underline{.9635}} & .1392 & .0395 & {\bf \underline{.0425}}        & .9635 & .1392 & .0395 & {\bf \underline{.0425}}
\\
MCDO+$\alpha$ & .9621 & {\bf .1391} & {\bf .0394} & .0442       & {\bf \underline{.9644}} & {\bf .1387} & {\bf .0389} & .0463
\\
MCDO+ts & .9628 & .1408 & .0410 & .0632 & .9627 & .1402 & .0404 & .0638
\\
MCDO+ts+$\alpha$ & .9624 & .1412 & .0415 & .0653        & .9629 & .1407 & .0409 & .0631
\\
\bottomrule
\end{tabular}
\end{table*}
\begin{table*}[!hb]  %
  \caption{Evaluations of CPEs for Mix-CIFAR-10}
  \vspace{.2cm}
  \label{tb:mix_cifar10_cpe}
  \centering
  \begin{tabular}{lcccccccc}
    \toprule
&
\multicolumn{4}{c}{Mix-CIFAR-10(2)}
&
\multicolumn{4}{c}{Mix-CIFAR-10(5)}
\\
\cmidrule(r){2-5}
\cmidrule(r){6-9}
Method
 & Acc $\uparrow$ 
 & $\hLPS$ $\downarrow$
 & $\hEL$ $\downarrow$
 & $\hCE$ $\downarrow$
 & Acc $\uparrow$ 
 & $\hLPS$ $\downarrow$
 & $\hEL$ $\downarrow$
 & $\hCE$ $\downarrow$
\\
\midrule
Raw & {\bf .7965} & .3518 & .2504 & .1093       & {\bf .7965} & .3518 & .2504 & .1093
\\
Raw+$\alpha$ & {\bf .7965} & .3518 & .2504 & .1093      & {\bf .7965} & .3518 & .2504 & .1093
\\
Raw+ts & {\bf .7965} & {\bf .3437} & {\bf .2423} & {\bf .0687}  & {\bf .7965} & {\bf .3438} & {\bf .2423} & {\bf .0685}
\\
Raw+ts+$\alpha$ & {\bf .7965} & {\bf .3437} & {\bf .2423} & .0688       & {\bf .7965} & {\bf .3438} & {\bf .2423} & {\bf .0685}
\\
\midrule
MCDO & {\bf .7983} & .3488 & .2474 & .0955      & {\bf .7983} & .3488 & .2474 & .0955
\\
MCDO+$\alpha$ & .7968 & .3488 & .2474 & .0962   & .7965 & .3493 & .2479 & .0973
\\
MCDO+ts & .7972 & {\bf .3442} & {\bf .2428} & .0684     & .7977 & .3446 & .2431 & {\bf \underline{.0675}}
\\
MCDO+ts+$\alpha$ & .7965 & .3445 & .2430 & {\bf \underline{.0650}}      & .7975 & {\bf .3444} & {\bf .2430} & .0723
\\
\midrule
TTA & .8221 & .3230 & .2216 & {\bf .0877}       & .8221 & .3230 & .2216 & {\bf .0877}
\\
TTA+$\alpha$ & .8241 & {\bf \underline{.3221}} & {\bf \underline{.2206}} & .0884        & .8277 & {\bf \underline{.3223}} & {\bf \underline{.2209}} & .0894
\\
TTA+ts & .8257 & .3467 & .2452 & .1688  & {\bf \underline{.8279}} & .3466 & .2451 & .1707
\\
TTA+ts+$\alpha$ & {\bf \underline{.8277}} & .3446 & .2432 & .1697       & .8245 & .3460 & .2446 & .1726
\\
\bottomrule
\end{tabular}
\end{table*}

\subsection{Discussion on the effect of temperature scaling for disagreement probability estimates}
\label{aps:discussion_ts_for_dpe}

\begin{figure*}[!t]
  \centering
  \rule[-.0cm]{0cm}{0cm}
  \begin{tabular}{cc}
      \includegraphics[width=1.0\linewidth]{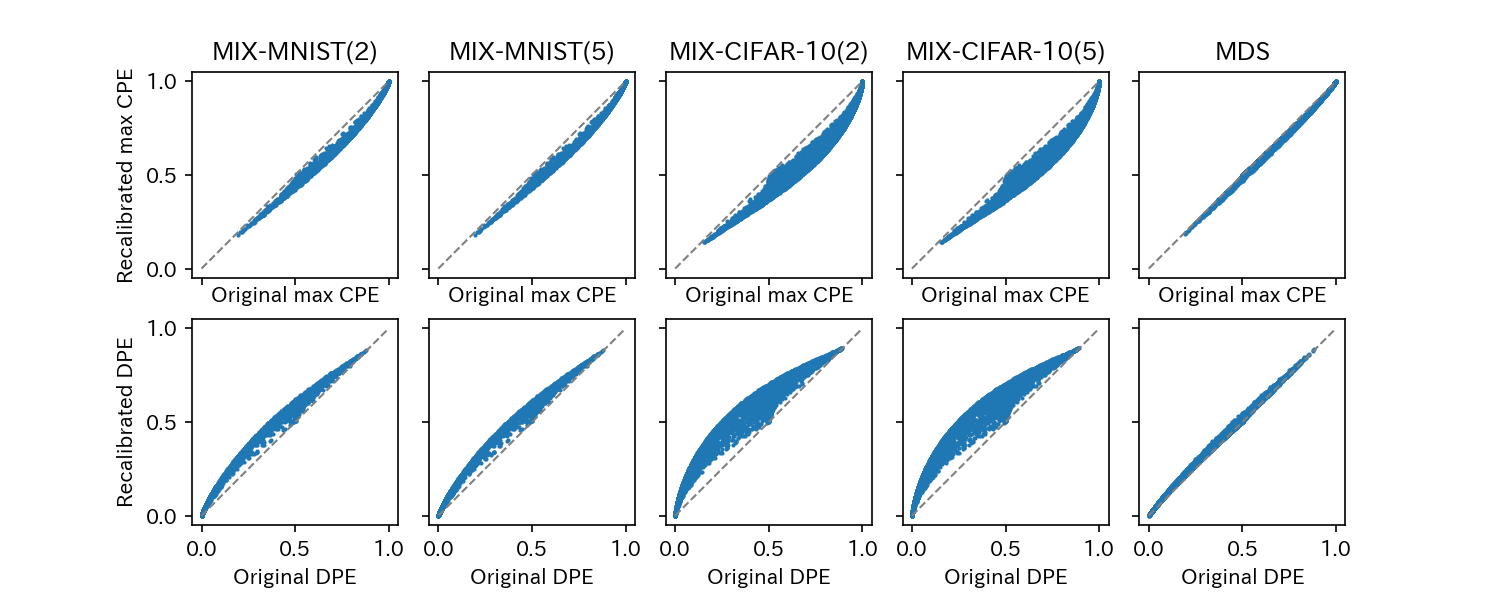}
      \\
      Changes in the maximum CPEs and DPEs with temperature scaling
  \end{tabular}
  \rule[-.0cm]{4cm}{0cm}
  \caption{
    Changes in the maximum class probability estimates (CPEs) and disagreement probability estimates (DPEs) with temperature scaling (ts) are presented for each instance.
    In contrast to the maximum CPEs, which are generally decreased with ts to compensate overconfidence,
    DPEs are increased with ts as discussed in Appendix \ref{aps:discussion_ts_for_dpe}.
    }
\label{fig:ts_cpe_dpe}
\end{figure*}

In Table \ref{tb:eval_DPEs},
it is observed that disagreement probability estimates (DPEs) are consistently degraded by temperature scaling (Raw+ts) from the original scores (Raw),
despite the positive effects for calibration of class probability estimates (CPEs) with ts.
Though we observe that the degradation can be overcome with $\alpha$-calibration (see Raw+ts+$\alpha$ or Raw+$\alpha$ in Table \ref{tb:eval_DPEs}), the mechanism that causes the phenomena is worth analyzing.
Since there exists well-known overconfidence in the maximum class probabilities from DNN classifiers \cite{guo2017calibration},
the recalibration of CPEs by ts tends to reduce the maximum class probabilities.
On the other hand, the amount of change in a DPE for each instance can be written as follows:
\begin{align}
\Delta^{\mathrm{D}} 
:= \varphi'^{\mathrm{D}} - \varphi^{\mathrm{D}}
= \sum_{k=1}^K (f_k + f_k') (f_k - f_k'),
\label{eq:delta_D}
\end{align}
where $f$ and $\varphi^{\mathrm{D}}$
denote the original CPEs and DPE, respectively,
and  $f'$ and $\varphi'^{\mathrm{D}}$
denote those after ts, respectively.
It is likely that $\Delta^{\mathrm{D}}$ takes a positive value
as the dominant term of $\Delta^{\mathrm{D}}$ in equation \eqref{eq:delta_D}
is $k$ with the maximum $f_k + f_k'$ value, 
where $f_k > f_k'$ is satisfied for overconfident predictions.
In fact, the averages of $\Delta^{\mathrm{D}}$
between Raw+ts and Raw for each of the five settings in Table \ref{tb:eval_DPEs} are all positive,
which are $0.027$, $0.025$, $0.101$, $0.103$, and $0.019$, respectively.
Instance-wise changes in the maximum CPEs and DPEs with ts are shown in Fig. \ref{fig:ts_cpe_dpe}.
Simultaneously,
DPEs without $\alpha$-calibration systematically overestimate the empirical disagreement probabilities, as shown in Fig. \ref{fig:mds_disagree1}.
Therefore, the positive $\Delta^{\mathrm{D}}$
means that $\varphi'^{\mathrm{D}}$
is even far from a target probability $\E[\phi^{\mathrm{D}} | X]$
than $\varphi^{\mathrm{D}}$ is, despite the improvement in CPEs with ts.

\subsection{Additional experiments for MDS data}
\label{aps:mds_full_results}

In addition to MDS data with single training labels per instance (MDS-1) used in the main experiment,
we trained and evaluated with full MDS data (MDS-full), where all the multiple training labels per example were employed.
Also,
we included additional CPE calibration methods: vector and matrix scaling (vs and ms, respectively),
which were introduced in Section \ref{aps:cpe_calib}, for these experiments.
We adopted an L2 regularization for vs and an ODIR for ms,
in which the following hyper-parameter candidates were examined:
\begin{itemize}
\item vs: $\lambda_b \in \{0.1, 1.0, 10\}$
\item ms: $(\lambda_b, \lambda_w) \in \{0.1, 1.0, 10\} \times \{0.1, 1.0, 10\}$
\end{itemize}
where $\lambda_b$ and $\lambda_w$ were defined in equation
\eqref{eq:calib_l2_reg} and \eqref{eq:calib_odir}, respectively.
The hyper-parameters were selected with respect to the best cv loss,
which were $\lambda_b = 0.1$ for vs and $\lambda_b = 1.0, \lambda_w = 10$ for ms in the single-training MDS data,
and $\lambda_b = 0.1$ for vs and $\lambda_b = 0.1, \lambda_w = 10$ for ms in the full MDS data.

\paragraph{Results}

We summarize the order-1 and -2 performance metrics
for predictions with MDS-1 and MDS-full datasets in Table \ref{tb:mds_results}.
For both datasets,
temperature scaling consistently improved (decreased) $\hEL$ and $\hCL$ for each of Raw, MCDO, and TTA predictions.
While vector scaling was slightly better at obtaining the highest accuracy than the other methods,
the effect of vs and ms for the metrics of probability predictions were limited.
As same as the results of synthetic experiments,
TTA showed a superior performance over MCDO and Raw predictions in accuracy, $\hLPS$ and $\hEL$.
Since $\hEL$ can be decomposed into $\hCL$ and the remaining term: $\hDL$ (Section \ref{aps:DL}),
the difference of predictors in CPE performance is clearly presented with 2d plots (Fig.\ref{fig:mds_cd_map} and \ref{fig:mds_cd_map_full}), which we call calibration-dispersion maps.
For order-2 metrics, both $\hL_{\phi^{\mathrm{D}}}$ and $\hCE_{\phi^{\mathrm{D}}}$ for DPEs
were substantially improved by $\alpha$-calibration,
espetially in $\hCE_{\phi^{\mathrm{D}}}$,
which was not attained with solely applying ensemble-based methods even with MDS-full.
This improvement of DPE calibration was also visually confirmed with reliability diagrams in Fig. \ref{fig:mds_disagree} and \ref{fig:mds_disagree_full}.
Though overall characteristics were similar between MDS-1 and MDS-full,
a substantial improvement in $\hLPS$, $\hEL$, and $\hL_{\phi^{\mathrm{D}}}$
was observed in MDS-full,
which seemed the results of enhanced probability predictions with additional training labels.

\begin{landscape}
\begin{table}[!tbp]
  \caption{Performance evaluations for MDS data}
  \label{tb:mds_results}
  \centering
  \begin{tabular}{lcccccccccccc}
    \toprule
&
\multicolumn{6}{c}{MDS-1}
&
\multicolumn{6}{c}{MDS-full}
\\
\cmidrule(r){2-7}
\cmidrule(r){8-13}
&
\multicolumn{4}{c}{Order-1 metrics}
&
\multicolumn{2}{c}{Order-2 metrics}
&
\multicolumn{4}{c}{Order-1 metrics}
&
\multicolumn{2}{c}{Order-2 metrics}
\\
\cmidrule(r){2-5}
\cmidrule(r){6-7}
\cmidrule(r){8-11}
\cmidrule(r){12-13}
Method
 & Acc $\uparrow$ 
 & $\hLPS$ $\downarrow$
 & $\hEL$ $\downarrow$
 & $\hCE$ $\downarrow$
 & $\hL_{\phi^D}$ $\downarrow$
 & $\hCE_{\phi^D}$ $\downarrow$
 & Acc $\uparrow$ 
 & $\hLPS$ $\downarrow$
 & $\hEL$ $\downarrow$
 & $\hCE$ $\downarrow$
 & $\hL_{\phi^D}$ $\downarrow$
 & $\hCE_{\phi^D}$ $\downarrow$
\\
\midrule
Raw & .9006 & .2515 & .0435 & .0600 & .1477 & .0628     & .8990 & .2460 & .0380 & .0590 & .1448 & .0539
\\
Raw+$\alpha$ & .9006 & .2515 & .0435 & .0600 & {\bf .1454} & {\bf .0406}        & .8990 & .2460 & .0380 & .0590 & {\bf .1430} & {\bf .0320}
\\
\cmidrule[.05mm]{1-13}
Raw+ts & .9006 & {\bf .2509} & {\bf .0430} & {\bf \underline{.0575}} & .1482 & .0663    & .8990 & {\bf .2459} & {\bf .0379} & {\bf .0587} & .1449 & .0545
\\
Raw+ts+$\alpha$ & .9006 & {\bf .2509} & {\bf .0430} & {\bf \underline{.0575}} & {\bf .1445} & {\bf .0261} & .8990 & {\bf .2459} & {\bf .0379} & {\bf .0587} & {\bf .1427} & {\bf .0269}
\\
\cmidrule[.05mm]{1-13}
Raw+vs & {\bf .9010} & .2515 & .0435 & .0579 & .1489 & .0696    & {\bf .8996} & .2461 & .0381 & .0602 & .1452 & .0563
\\
Raw+vs+$\alpha$ & {\bf .9010} & .2515 & .0435 & .0579 & {\bf .1449} & {\bf .0318}       & {\bf .8996} & .2461 & .0381 & .0602 & {\bf .1431} & {\bf .0310}
\\
\cmidrule[.05mm]{1-13}
Raw+ms & .8992 & .2532 & .0453 & .0656 & .1484 & .0663  & .8978 & .2480 & .0400 & .0710 & .1451 & .0563
\\
Raw+ms+$\alpha$ & .8992 & .2532 & .0453 & .0656 & {\bf .1453} & {\bf .0355}     & .8978 & .2480 & .0400 & .0710 & {\bf .1428} & {\bf .0274}
\\
\midrule
MCDO & .8983 & .2517 & .0437 & .0586 & .1470 & .0562    & .8989 & .2460 & .0380 & {\bf \underline{.0561}} & .1446 & .0505
\\
MCDO+$\alpha$ & .8996 & .2518 & .0438 & .0610 & {\bf .1450} & {\bf .0346}       & .9003 & {\bf .2458} & {\bf .0378} & .0568 & {\bf .1426} & {\bf .0237}
\\
\cmidrule[.05mm]{1-13}
MCDO+ts & .8986 & {\bf .2515} & {\bf .0435} & .0579 & .1479 & .0635     & .8995 & {\bf .2458} & {\bf .0378} & .0579 & .1447 & .0521
\\
MCDO+ts+$\alpha$ & .8991 & {\bf .2515} & {\bf .0435} & .0586 & {\bf .1442} & {\bf \underline{.0186}}    & .8997 & .2460 & .0380 & .0581 & {\bf .1423} & {\bf \underline{.0180}}
\\
\cmidrule[.05mm]{1-13}  
MCDO+vs & .8997 & .2521 & .0441 & .0589 & .1487 & .0664 & {\bf .9004} & .2461 & .0381 & .0593 & .1448 & .0531
\\
MCDO+vs+$\alpha$ & {\bf .9009} & .2519 & .0439 & {\bf \underline{.0575}} & {\bf .1446} & {\bf .0244}    & .9002 & .2460 & .0380 & .0579 & {\bf .1426} & {\bf .0228}
\\
\cmidrule[.05mm]{1-13}  
MCDO+ms & .8970 & .2536 & .0456 & .0678 & .1479 & .0612 & .8997 & .2477 & .0397 & .0704 & .1448 & .0529
\\
MCDO+ms+$\alpha$ & .8986 & .2534 & .0454 & .0667 & {\bf .1449} & {\bf .0279}    & .8986 & .2478 & .0399 & .0695 & {\bf .1424} & {\bf .0188}
\\
\midrule
TTA & .9013 & .2458 & .0378 & .0642 & .1441 & .0488     & .9069 & .2402 & .0322 & .0645 & .1425 & .0444
\\
TTA+$\alpha$ & .9025 & {\bf \underline{.2456}} & {\bf \underline{.0376}} & .0684 & {\bf .1428} & {\bf .0334}       & .9077 & .2402 & .0322 & .0650 & {\bf .1413} & {\bf .0277}
\\
\cmidrule[.05mm]{1-13}  
TTA+ts & .9012 & .2459 & .0379 & .0646 & .1448 & .0553  & .9074 & {\bf \underline{.2401}} & {\bf \underline{.0321}} & .0636 & .1425 & .0456
\\
TTA+ts+$\alpha$ & .9011 & .2458 & .0378 & {\bf .0632} & {\bf \underline{.1422}} & {\bf .0197}   & .9055 & .2403 & .0323 & {\bf .0633} & {\bf \underline{.1410}} & {\bf .0204}
\\
\cmidrule[.05mm]{1-13}
TTA+vs & {\bf \underline{.9031}} & .2471 & .0392 & .0671 & .1458 & .0598        & {\bf \underline{.9078}} & .2409 & .0329 & .0666 & .1431 & .0483
\\
TTA+vs+$\alpha$ & .9025 & .2470 & .0391 & .0657 & {\bf .1428} & {\bf .0247}     & .9068 & .2409 & .0329 & .0664 & {\bf .1416} & {\bf .0252}
\\
\cmidrule[.05mm]{1-13}
TTA+ms & .9001 & .2489 & .0410 & .0756 & .1456 & .0561  & .9034 & .2429 & .0349 & .0778 & .1431 & .0481
\\
TTA+ms+$\alpha$ & .8991 & .2487 & .0407 & .0750 & {\bf .1431} & {\bf .0268}     & .9030 & .2432 & .0352 & .0766 & {\bf .1414} & {\bf .0213}
\\
\bottomrule
\end{tabular}
\end{table}
\end{landscape}

\begin{figure*}[!hbt]
  \centering
  \rule[-.0cm]{0cm}{0cm}
  \begin{tabular}{cc}
      \includegraphics[width=0.42\linewidth]{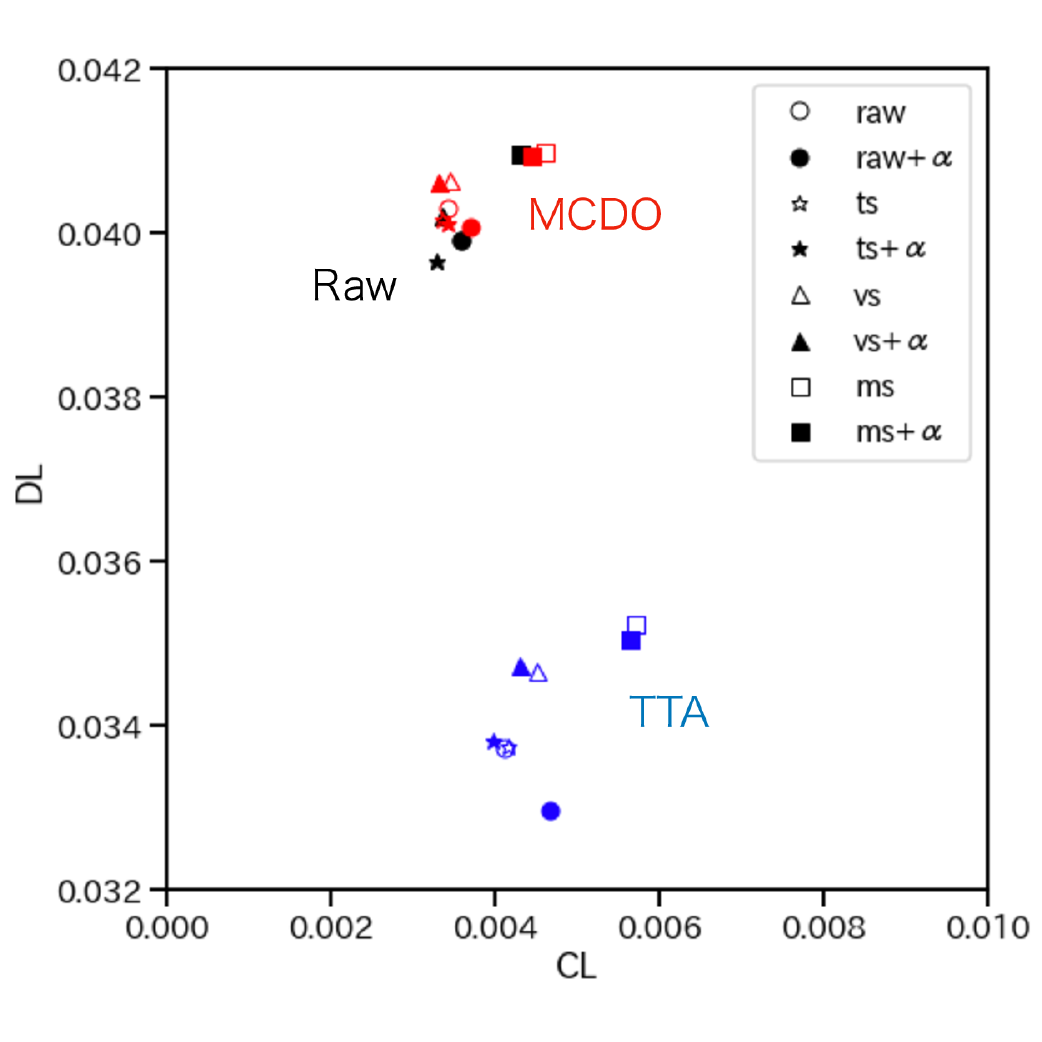}
      &
      \includegraphics[width=0.42\linewidth]{figs/mds_disagree_fig1.pdf}
      \\
      (a) Calibration-dispersion map
      &
      (b) Reliability diagram for disagreement probability
  \end{tabular}
  \rule[-.5cm]{4cm}{0cm}
  \caption{
(a) Calibration-dispersion map for the experiments with MDS-1 data.
(b) Reliability diagram of disagreement probability estimates (DPEs) for MDS-1 data,
where all the predictive methods in Table \ref{tb:mds_results} were compared. 
The dashed diagonal line corresponds to a calibrated prediction.
Calibration of DPEs was significantly enhanced with $\alpha$-calibration (solid lines) from the original ones (dotted lines).}
\label{fig:mds_cd_map}
\label{fig:mds_disagree}
\end{figure*}

\begin{figure}[!bp]
  \centering
  \rule[-.0cm]{0cm}{0cm}
  \begin{tabular}{cc}
    \includegraphics[width=0.42\linewidth]{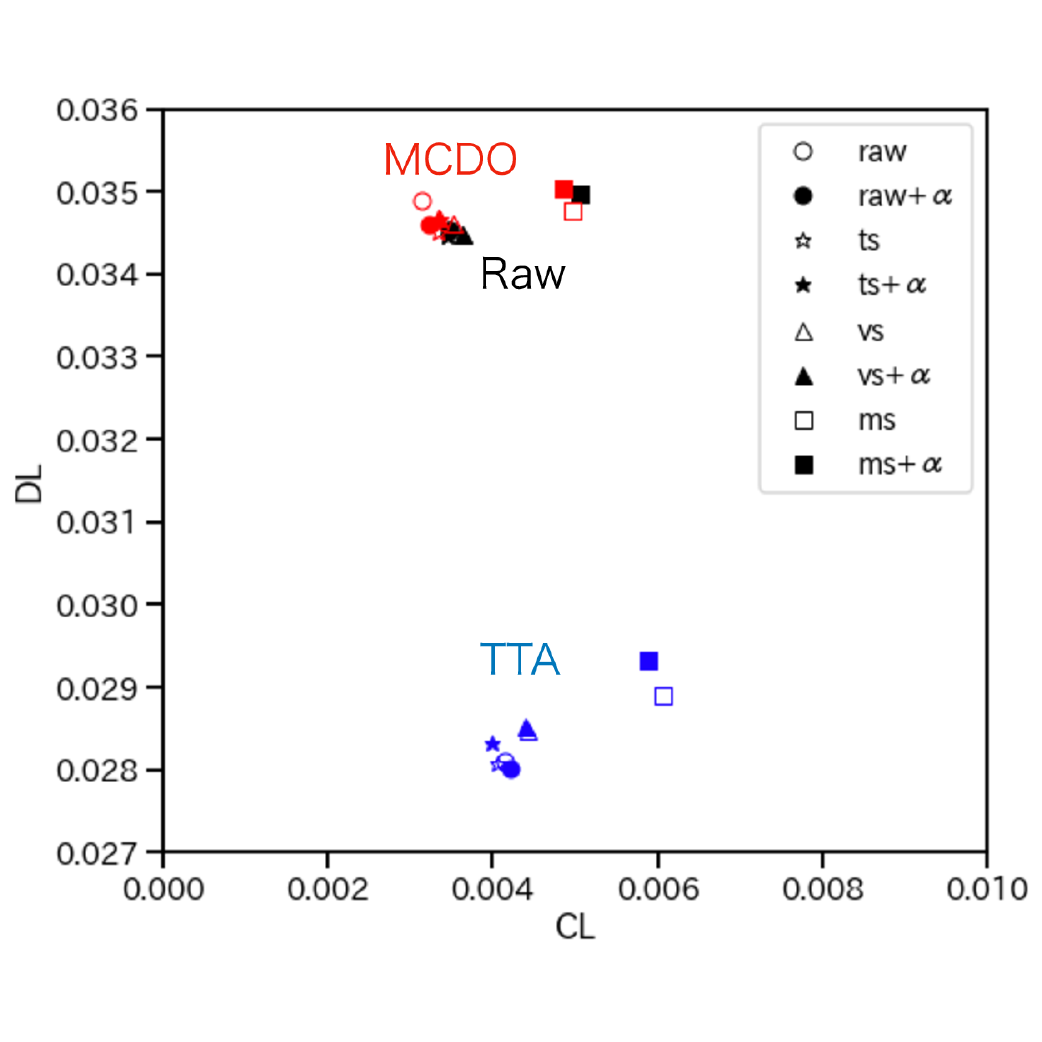}
    &
    \includegraphics[width=0.42\linewidth]{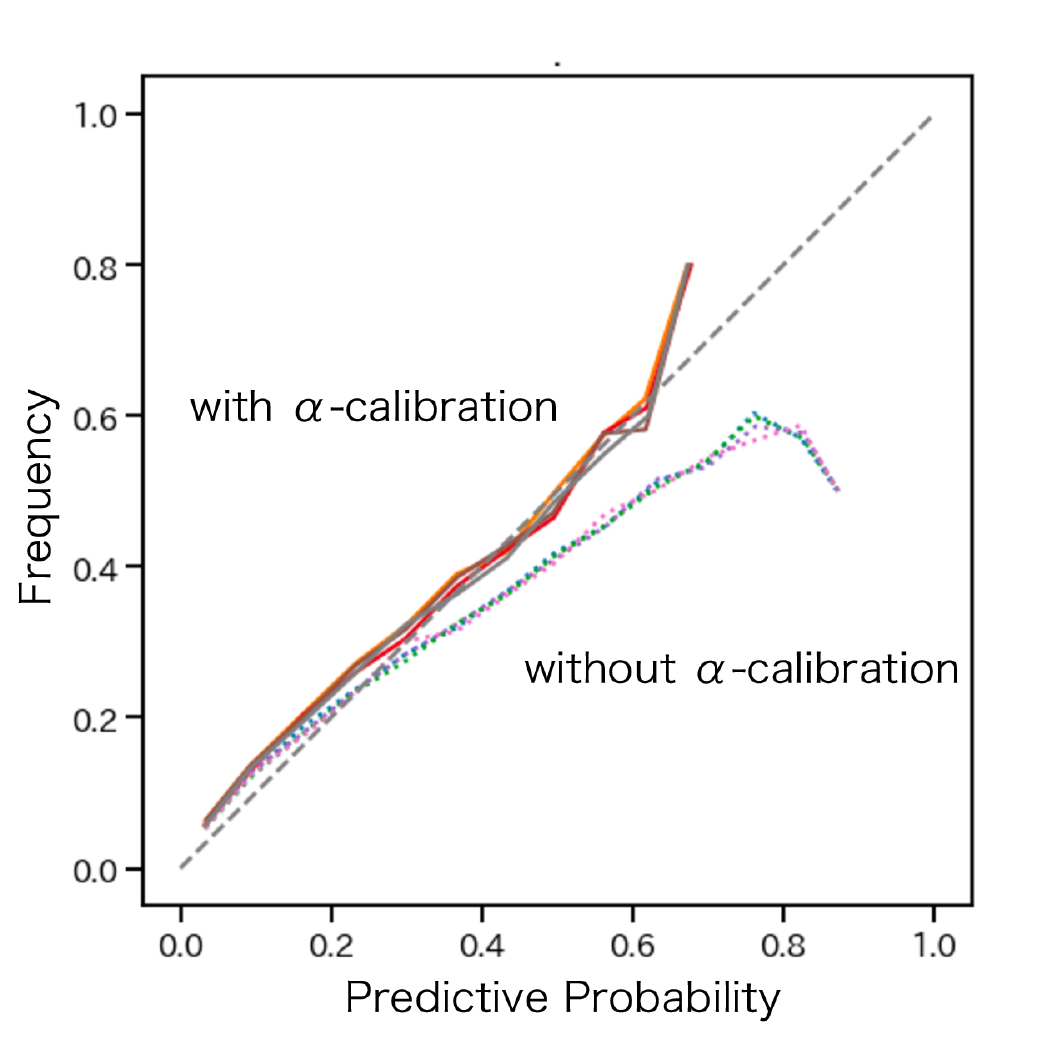}
  \\
  (a) Calibration-dispersion map
  &
  (b) Reliability diagram for disagreement probability
  \end{tabular}
  \rule[-.5cm]{4cm}{0cm}
  \caption{
(a) Calibration-dispersion map for the experiments with MDS-full data.
(b) Reliability diagram of disagreement probability estimates (DPEs) for MDS-full data,
where all the predictive methods in Table \ref{tb:mds_results} were compared. 
The dashed diagonal line corresponds to a calibrated prediction.
}
\label{fig:mds_cd_map_full}
\label{fig:mds_disagree_full}
\end{figure}

\end{document}